\documentclass{article}

%
\PassOptionsToPackage{numbers, compress}{natbib}
\usepackage[preprint]{neurips_2021}



\usepackage{microtype}
\usepackage{multirow}
\usepackage{graphicx}
\usepackage{xcolor}
\usepackage{subfig}
\usepackage{float}
\usepackage{textcomp}
\usepackage{booktabs} 
\usepackage{amsfonts, amsmath, amssymb, amsthm, graphicx}
\usepackage{tikz}
\usetikzlibrary{positioning,chains,fit,shapes,calc} 
\usepackage[ruled,vlined]{algorithm2e}
\usepackage[noend]{algcompatible}

\newtheorem{theorem}{Theorem}
\newtheorem{claim}{Claim}

\newtheorem{corollary}{Corollary}

\theoremstyle{definition}
\newtheorem{definition}{Definition}

\newcommand{\cmmnt}[1]{}
\usepackage[utf8]{inputenc} 
\usepackage[T1]{fontenc}    
\usepackage{hyperref}       
\usepackage{url}            
\usepackage{booktabs}       
\usepackage{amsfonts}       
\usepackage{nicefrac}       
\usepackage{microtype}      
\usepackage{xcolor}         

\newcommand\blfootnote[1]{%
  \begingroup
  \renewcommand\thefootnote{}\footnote{#1}%
  \addtocounter{footnote}{-1}%
  \endgroup
}

\definecolor{myblue}{RGB}{0,0,100}
\definecolor{mygreen}{RGB}{0,0,100}
\definecolor{myyellow}{RGB}{0,100,100}
\definecolor{my1}{RGB}{100,0,100}
\definecolor{myred}{RGB}{100,0,0}
\definecolor{my2}{RGB}{0,100,0}
\title{Autoequivariant Network Search via Group Decomposition}

%

\author{%
  Sourya Basu\\ 
  Department of Electrical\\ and Computer Engineering\\
   University of Illinois at Urbana-Champaign\\
  Illinois, USA \\
  \texttt{sourya@illinois.edu} \\
   \And
   Akshayaa Magesh \thanks{equal contribution} \\ 
   Department of Electrical\\ and Computer Engineering \\
   University of Illinois at Urbana-Champaign \\
   Illinois, USA\\
   \texttt{amagesh2@illinois.edu} \\
   \AND
   Harshit Yadav$^*$\\ 
   Department of Mathematics \\
   Rice University \\
   Texas, USA\\
   \texttt{hy39@rice.edu} \\
   \And
   Lav R.\ Varshney \\
   Department of Electrical\\ and Computer Engineering \\
   University of Illinois at Urbana-Champaign \\
   Illinois, USA\\
   \texttt{varshney@illinois.edu} \\
   }

\begin{document}

\maketitle

\begin{abstract}
Recent works show that 
group equivariance as an inductive bias improves neural network performance for both classification and generation. 
However, designing group-equivariant neural networks is challenging when the group of interest is large and is unknown. Moreover, inducing equivariance can significantly reduce the number of independent parameters in a network with fixed feature size, affecting its overall performance.
We address these problems by proving a new group-theoretic result in the context of equivariant neural networks that shows that a network is equivariant to a large group if and only if it is equivariant to smaller groups from which it is constructed. Using this result, we design a novel fast group equivariant construction algorithm, and a deep Q-learning-based search algorithm in a reduced search space, yielding what we call \emph{autoequivariant networks} (AENs). AENs find the right balance between equivariance and network size when tested on new benchmark datasets, G-MNIST and G-Fashion-MNIST, obtained via group transformations on MNIST and Fashion-MNIST respectively that we release. Extending these results to group convolutional neural networks, where we optimize between equivariances, augmentations, and network sizes, we find group equivariance to be the most dominating factor in all high-performing GCNNs on several datasets like CIFAR10, SVHN, RotMNIST, ASL, EMNIST, and KMNIST.
\end{abstract}

\section{Introduction}\label{sec: introduction}
Recent progress in deep learning owes much of its success to novel network architectures for efficient processing of large datasets. One example for image, video, and audio data is the convolutional neural network (CNN) \cite{LecunBH2015}.
CNNs efficiently process high-dimensional 
data using two key principles: \emph{translation equivariance} and \emph{parameter sharing}. The convolutional layers in CNNs are translation equivariant, so when the input to these layers is shifted in space, the extracted features also get shifted. Translation equivariance encodes the prior that translating an input does not change its labels 
into the neural network 
and helps efficiently extract features from raw data. Parameter sharing then not only efficiently uses parameters, but also helps induce translation equivariance in CNNs. 
\blfootnote{This work was funded in part by the IBM-Illinois Center for Cognitive Computing Systems Research (C3SR), a research collaboration as part of the IBM AI Horizons Network and the National Science Foundation Grant CCF-1717530.}

The idea of equivariance using convolutions and parameter sharing has been generalized to general group symmetries \cite{CohenW2016a,CohenW2016b,RavanbakhshSP2017}. These group-equivariant networks use efficient parameter sharing techniques to encode symmetries in data beyond translation, such as rotations and flips, as priors. Much further research has focused on inducing equivariances for different group symmetries and data types such as \cite{CohenGKW2018} for spherical symmetries, \cite{WorrallW2019, SosnovikSS2019} for scale-equivariance, \cite{Bekkers2019, FinziSIG2020,RomeroBTH2020} for Lie groups, and \cite{HutchinsonLZDTK2020, RomeroH2019, RomeroBTH2020b} for group-equivariance within attention mechanisms.

But all these works assume knowledge of symmetries in the data is known \emph{a priori}. Very recently, \cite{ZhouKF2021} proposed to learn appropriate parameter sharing from the data itself using meta-learning techniques; \cite{DehmamyLWY2021} proposed using $L$-conv layers to automatically search and approximate group convolutions. 
\begin{figure*}
    \centering
    \includegraphics[width=5in]{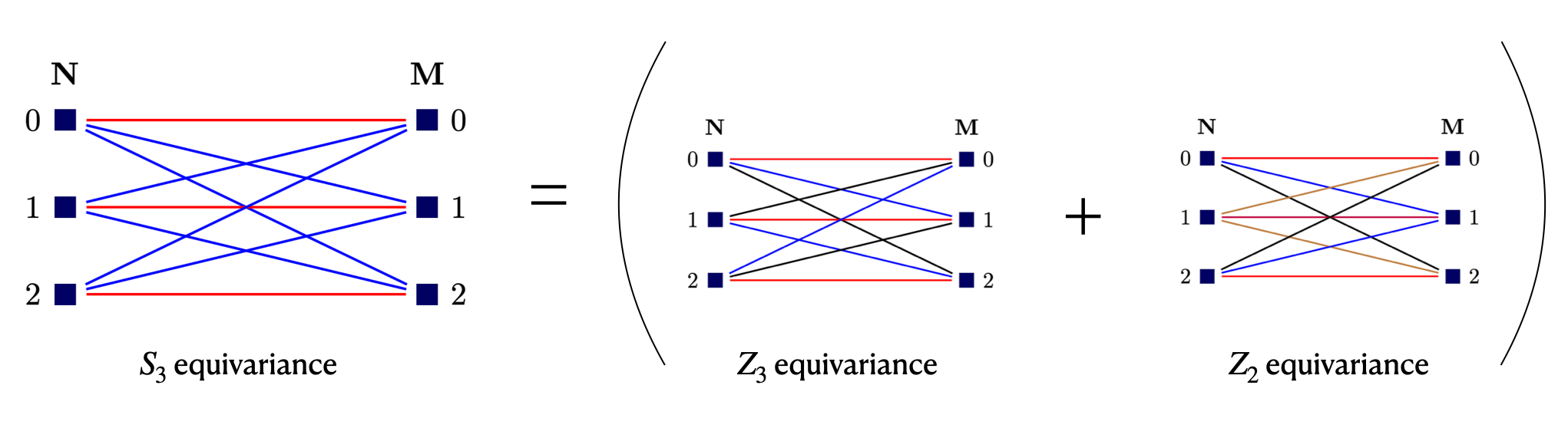}\vspace{-4.5mm}
    \caption{A $3 \times 3$ network with input \textbf{N} and output \textbf{M} is equivariant to a larger group $S_3$ if and only if it is equivariant to the smaller groups $\mathbb{Z}_3$ and $\mathbb{Z}_2$ since $S_3 = \mathbb{Z}_3 \rtimes \mathbb{Z}_2$.}
    \label{fig: introductory_plot}
\end{figure*}
Here, we propose \emph{autoequivariant networks} (AEN) that automatically induce group equivariance from a reduced search space using deep Q-learning, building on new group theory results that we prove. Compared to \cite{ZhouKF2021}, we are able to restrict to a much smaller search space of relevant symmetries by proving an equivalence relation between parameter sharing schemes using large groups and its subgroups that it can be \textit{constructed} from, as illustrated in Fig.~\ref{fig: introductory_plot} (technical details deferred to Sec.~\ref{sec: group_decomposition}). Further, this property proved in Sec.~\ref{sec: group_decomposition} leads to faster construction of equivariant neural networks, as discussed in Sec.~\ref{sec: inducing_equivariance}.
Unlike \cite{DehmamyLWY2021}, we focus on exact symmetries formed by combining several smaller symmetries. 
The overall performance of a network is a function of not only the symmetry in its parameters but also on the number of parameters and features in them. Hence, when the group symmetries are large, then equivariant networks constructed using parameter sharing with fixed number of parameters might have too many features or if we fix the number of features, the number of parameters might be too few. This issue with group equivariant networks was identified by \cite{GensD2014} and reiterated by \cite{CohenW2016a,GordonLBB2020}, which limits their application and makes it difficult to choose the right balance between equivariance, number of parameters, and number of features needed to design high-performing equivariant neural networks with reasonable number of parameters and features. We mitigate this issue by letting our search algorithm find the right balance automatically.

Our contributions are summarized as follows. Sec.~\ref{sec: group_decomposition} proves that a neural network is equivariant with respect to a set of group symmetries if and only if it is equivariant to any group symmetry constructed using semidirect products from them. Using this result,  Sec.~\ref{subsec: fast_equivariant_network_construction} provides an efficient algorithm to induce equivariance for large symmetry groups in multilayered perceptrons (MLPs). Sec.~\ref{subsec: FES} describes our deep Q-learning algorithm for equivariant network architecture search with reduced search space. Sec.~\ref{sec: experiments} develops and releases new group-augmented datasets, G-MNIST and G-Fashion-MNIST, which augment the standard MNIST and Fashion-MNIST datasets using several large symmetry groups. These are used to evaluate AENs and also provide a standard benchmark for further research. Sec.~\ref{sec: experiments} also uses deep Q-learning for searching group equivariant CNNs (GCNNs) of different sizes, group equivariances, and trained using different augmentations on several real datasets like CIFAR10, SVHN, RotMNIST, ASL, EMNIST, and KMNIST. We find that the top-performing GCNNs have several group equivariances in them.

\subsection{Related work}\label{subsec: related_works}
\paragraph{GCNNs and parameter sharing} GCNNs were introduced by \cite{CohenW2016a} and have been extended to various groups for image processing \citep{CohenW2016b,CohenGKW2018,WorrallGTB2017,WeilerC2019,EstevesAMD2018,CohenWKW2019,EstevesMD2020}, where the general idea is to modify the filters as per the groups used and then perform planar convolution. 
An alternative is to construct equivariant MLPs using parameter sharing directly, which divides parameters into relevant orbits to induce equivariance in them \citep{RavanbakhshSP2017}.
Both methods add significant computational load if the size of the group is large.
 Our work provides a method of breaking groups down while preserving equivariance for a wide class of groups and hence making it easier to construct networks equivariant to large symmetries.
GCNNs have also been extended to relevant groups in particle physics \citep{BogatskiyAORMK2020}, reinforcement learning \citep{PolEDHHFW2020}, graphs \citep{HaanWCW2020, HorieNTYN2021,MaronBSL2019b,MaronBSL2019a,KerivenP2019}, and natural language processing \citep{GordonLBB2020}.

\paragraph{Generative and attention models} Group equivariance have also been used in generative models like normalizing flows \citep{FalorsiHDF2019,KohlerKN2020,BilovsG2020}, variational autoencoders \citep{KuzminykhPZ2018,FalorsiHDCWFC2018}, and GANs \citep{DeyCG2021}. These works replace planar convolutions with group convolutions to achieve better representation capabilities.
Work dealing with group convolutions are mainly motivated from the result by \cite{KondorT2018} that shows linear equivariant maps over scalar fields are necessarily convolutional in nature. However, recent work has extended group equivariance to nonlinear equivariant maps such as the popular attention mechanism \citep{VaswaniSPUJGKP2017} for data in the form of point clouds, graphs, images, and language \citep{FuchsWFVW2020,RomeroBTH2020b,HutchinsonLZDTK2020}.
Our main result is very general and works for classification and generation models with linear and nonlinear maps. For 
brevity, this paper only shows experimental and algorithmic results for classification tasks with linear maps, but our results extend to the other forms of equivariant maps as well.

\paragraph{Other forms of equivariant networks} So far we have discussed equivariance in neural networks via various forms of parameter sharing. Let us also point 
out other popular methods of obtaining equivariance 
in the literature. Spatial~\citep{JaderbergSZK2015} and polar transformer networks~\citep{EstevesAZD2018} introduce differentiable modules that are used on top of existing classifiers that transform the input itself to obtain invariance. Capsule networks use dynamic routing instead of pooling to make convolutional networks equivariant \citep{SabourFH2017,LenssenFL2018}, whereas novel pooling schemes 
introduce shift-invariance \cite{Zhang2019, ChamanD2020}.

\paragraph{Group neural architecture search (NAS)} NAS using reinforcement learning yields networks that 
outperform hand-constructed CNNs \citep{BakerGNR2017,ZophL2017,PhamGZLD2018}. Several recent works have proposed automatic architecture construction for group equivariant networks by using meta-learning for reparameterization \citep{ZhouKF2021} or learning generators of groups \citep{DehmamyLWY2021}. A key distinction in our work is in first proving an equivalence relation amongst groups to significantly reduce the search space and then using deep Q-learning to learn relevant symmetries in the data. Further, inducing symmetry in a network varies its number of parameters, which may yield too many / too few parameters, thereby 
reducing 
performance. 
Our Q-learning algorithm instead chooses the appropriate network size 
within a desired range to maximize 
performance.
\paragraph{Efficient group equivariant MLP construction} Recent work  \cite{PolEDHHFW2020} constructs equivariant MLPs by formulating it as an optimization problem with equivariance as its constraints. But the complexity of this algorithm has a dependency on the size of the group. Contemporaneous work in \cite{FinziWW2021} uses a similar optimization framework with novel constraints on generators of groups rather than the group itself to derive efficient construction algorithms that do not depend on the size of the group and outperforms the algorithm in \cite{PolEDHHFW2020}. But \cite{FinziWW2021} does not use the structure of groups like direct or semidirect products and leaves it for further investigation to speed up their algorithm. Our work on the other hand only uses the product structure of groups to speed up the parameter sharing algorithm by \cite{RavanbakhshSP2017}, instead of an optimization framework. We leave investigation into efficient design of equivariant networks using both generators of groups and group structures to future work.
Further, because of the modular structure of our search algorithm in AEN, any such algorithm can be used as a drop-in replacement of our algorithm. Moreover, \cite{FinziWW2021} formulates their problem inducing equivariance in linear networks, whereas our main theorem on group decomposition does not assume linearity.

\section{Preliminaries}\label{sec: preliminaries}
Here we provide definitions related to group equivariance and various products of groups. For more background on groups and group actions please refer to Sec.~\ref{sec: groups} in the supplementary material. 
First define equivariant functions. 
\begin{definition}[Equivariance]\label{def: equivariance}
A function $\phi: \mathcal{X} \mapsto \mathcal{Y}$ is called \emph{$G$-equivariant} (or, \emph{equivariant} to a group $G$) if $\phi(\Gamma_g  x) = \Gamma'_g \phi(x) $ for all $g \in G$ and $x \in \mathcal{X}$, where $\Gamma_g$ and $\Gamma'_g$ are actions of $G$ on $\mathcal{X}$ and $\mathcal{Y}$ respectively. 
\end{definition}
Now define forms of group products: direct products, semidirect products, and central extensions.
\begin{definition}[Direct product]
For two groups $G_1,G_2$, the \emph{direct product} group $G_1 \times G_2$ is defined as the group with underlying set the Cartesian product $G_1 \times G_2$ with (i) {multiplication}: $(g_1,g_2)(g_1',g_2') = (g_1g_1',g_2g_2')$, for $g_1,g_1' \in G_1$ and $g_2,g_2' \in G_2$,
(ii) {identity}: $(e_{G_1},e_{G_2}) \in G_1 \times G_2$ is the identity element,
(iii) {inverse}: $(g_1,g_2)^{-1} = (g_1^{-1},g_2^{-1})$ is the inverse of $(g_1,g_2)$.
\end{definition}
\begin{definition}[Semidirect product] \label{def: semidirect_product}
Let $G_1$ and $G_2$ be groups, and $\alpha: G_2 \rightarrow $Aut$(G_1)$ be a homomorphism of $G_2$ into automorphism group of $G_1$. The \emph{semidirect product} group $G_1 \rtimes_{\alpha} G_2$ is the set $G_1 \times G_2$ under the multiplication $(g_1, g_2)(g_1',g_2') = (g_1\alpha_{g_2}(g_1'),g_2 g_2')$ and identity $(e_{G_1},e_{G_2})$.
\end{definition}
\begin{definition}[Central extension]\label{def: central_extension}
Let $G_1$ be an abelian group and $G_2$ be an arbitrary group. A map $\psi: G_2 \times G_2 \rightarrow G_1$ satisfying
$ \psi(e_{G_2},e_{G_2})=e_{G_1},$
$\psi(g,g'g'') \psi(g',g'') = \psi(g,g')\psi(gg',g'')$
is called a $2$-cocycle. The \emph{central extension} of $G_2$ by $G_1$ (using the $2$-cocycle $\psi$), is defined as the set $G_1\times G_2$ with the group operation $(g_1,g_2)*(g'_1,g'_2) = (g_1 g'_1 \psi(g_2,g'_2),g_2 g'_2),$
where $g_1,g'_1 \in G_1$, $g_2,g'_2 \in G_2$. The identity element is $(e_{G_1},e_{G_2})$.
\end{definition}

\section{Group Decomposition}\label{sec: group_decomposition}
Here we prove our main theoretical results, first discussing them at a high level. We show that a function $\phi$ is equivariant to a group $G$ that can be written as semidirect products of several smaller groups if and only if $\phi$ is equivariant to each of the smaller groups. This implies that for constructing a $G$-equivariant function $\phi$, where $G$ is a possibly large group that can be constructed from several smaller groups using semidirect products, it is sufficient to make the function equivariant to the smaller groups. 
Further, since all the semidirect product groups $G_1 \rtimes_{\alpha} G_2$ for any valid homomorphism $\alpha$ have the same components $G_1,G_2$, we show that a function $\phi$ is equivariant to $G_1 \rtimes_{\alpha} G_2$ if and only if it is equivariant to $G_1 \rtimes_{\alpha_0} G_2$ for any fixed homomorphism $\alpha_0$. Hence, this result gives an equivalence class of groups consisting of  exactly the same decomposition in terms of semidirect products. 
Now we formally state and prove this idea.

\begin{theorem}\label{thm: semidirect_products}
Let $\mu^{G_i}$, $\bar{\mu}^{G_i}$ be group actions of $G_i$ on $\mathcal{X}$ and $\mathcal{Y}$ respectively for $i \in \{1,2\}$. Let $\Gamma$, $\bar{\Gamma}$ be actions of $G_1 \rtimes_{\alpha} G_2$ on $\mathcal{X}$ and $\mathcal{Y}$ respectively for some homomorphism $\alpha: G_2 \mapsto \text{Aut}(G_1)$ such that
\begin{align} 
    \Gamma_{(g_1,e)}(x) &= \mu_{g_1}^{G_1}(x) \hspace{2mm} \forall g_1\in G_1,x\in \mathcal{X}\label{eqn: thm_1_1}\\
    \bar{\Gamma}_{(g_1,e)}(y) &= \bar{\mu}_{g_1}^{G_1}(y) \hspace{2mm} \forall g_1\in G_1,y\in \mathcal{Y} \label{eqn: thm_1_2},\\
    \Gamma_{(e,g_2)}(x) &= \mu_{g_2}^{G_2}(x) \hspace{2mm} \forall g_2\in G_2,x\in \mathcal{X} \label{eqn: thm_1_3}\\
    \bar{\Gamma}_{(e,g_2)}(y) &= \bar{\mu}_{g_2}^{G_2}(y) \hspace{2mm} \forall g_2\in G_2,y\in \mathcal{Y} \label{eqn: thm_1_4},
\end{align}
Then, any function $\phi:\mathcal{X}\mapsto \mathcal{Y}$ is equivariant to $G_1 \rtimes_{\alpha}G_2$ under group actions $(\Gamma,\bar{\Gamma})$ if and only if $\phi$ is equivariant to $G_i$ under group actions $(\mu^{G_i},\bar{\mu}^{G_i})$ for both $i \in \{1,2\}$.
\end{theorem}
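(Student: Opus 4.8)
The plan is to exploit two structural facts and then do careful bookkeeping. First, any group action $\Gamma$ is by definition a homomorphism into the transformation monoid of its domain, so $\Gamma_{gh} = \Gamma_g \circ \Gamma_h$; and likewise for $\bar\Gamma$. Second, in the semidirect product every element factors as $(g_1, g_2) = (g_1, e)(e, g_2)$, which one checks directly from Definition~\ref{def: semidirect_product} using that $\alpha_e$ is the identity automorphism. Together with the compatibility hypotheses \eqref{eqn: thm_1_1}--\eqref{eqn: thm_1_4}, these let me convert any statement about the $G_1 \rtimes_{\alpha} G_2$-action into a statement purely about the $G_1$- and $G_2$-actions, and back.

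For the forward direction, assume $\phi$ is $(G_1 \rtimes_{\alpha} G_2)$-equivariant. Fix $g_1 \in G_1$ and $x \in \mathcal{X}$: by \eqref{eqn: thm_1_1}, $\phi(\mu^{G_1}_{g_1}(x)) = \phi(\Gamma_{(g_1,e)}(x))$, which equals $\bar\Gamma_{(g_1,e)}(\phi(x))$ by equivariance, and then $\bar\mu^{G_1}_{g_1}(\phi(x))$ by \eqref{eqn: thm_1_2}; hence $\phi$ is $G_1$-equivariant. The identical chain with \eqref{eqn: thm_1_3} and \eqref{eqn: thm_1_4} gives $G_2$-equivariance.

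For the reverse direction, assume $\phi$ is equivariant to both $G_1$ and $G_2$. Take arbitrary $(g_1, g_2)$ and $x$. Using the factorization and the homomorphism property of $\Gamma$, $\Gamma_{(g_1,g_2)}(x) = \Gamma_{(g_1,e)}\bigl(\Gamma_{(e,g_2)}(x)\bigr) = \mu^{G_1}_{g_1}\bigl(\mu^{G_2}_{g_2}(x)\bigr)$ by \eqref{eqn: thm_1_1} and \eqref{eqn: thm_1_3}. Applying $\phi$, then $G_1$-equivariance, then $G_2$-equivariance gives $\phi(\Gamma_{(g_1,g_2)}(x)) = \bar\mu^{G_1}_{g_1}\bigl(\bar\mu^{G_2}_{g_2}(\phi(x))\bigr)$; running the same identities backwards on the $\mathcal{Y}$ side via \eqref{eqn: thm_1_2}, \eqref{eqn: thm_1_4}, and the homomorphism property of $\bar\Gamma$ rewrites this as $\bar\Gamma_{(g_1,g_2)}(\phi(x))$. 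As $(g_1,g_2)$ and $x$ were arbitrary, $\phi$ is $(G_1 \rtimes_{\alpha} G_2)$-equivariant.

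The argument is genuinely short, so the care lies in the bookkeeping rather than in any single hard step. I would note explicitly that the hypotheses are consistent: $(g_1,e)(g_1',e) = (g_1 g_1', e)$ and $(e,g_2)(e,g_2') = (e, g_2 g_2')$ because $\alpha_e = \mathrm{id}$, so the restrictions of $\Gamma$ to the sets $\{(g_1,e)\}$ and $\{(e,g_2)\}$ really are homomorphic copies of the $G_1$- and $G_2$-actions, as the conditions demand. The one place to be attentive is the \emph{order} of the factorization: using $(g_1,g_2) = (g_1,e)(e,g_2)$ rather than $(e,g_2)(g_1,e)$ is what makes the "$G_1$-then-$G_2$" application order line up cleanly and avoids introducing an $\alpha_{g_2}$ twist. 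Finally, I would remark that nothing in the proof uses linearity of $\phi$ or any structure on $\mathcal{X},\mathcal{Y}$ beyond the given actions, and that iterating this two-group statement yields the general semidirect-product decomposition advertised in the introduction.
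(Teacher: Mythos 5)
Your proposal is correct and follows essentially the same route as the paper's proof: the easy direction by specializing to $(g_1,e)$ and $(e,g_2)$, and the composite direction via the factorization $(g_1,g_2)=(g_1,e)(e,g_2)$ (valid because $\alpha_e=\mathrm{id}$) together with the homomorphism property of $\Gamma$ and $\bar\Gamma$ and the compatibility hypotheses \eqref{eqn: thm_1_1}--\eqref{eqn: thm_1_4}. Your added remarks on the consistency of the hypotheses and on why the chosen factorization order avoids an $\alpha_{g_2}$ twist are sound and slightly more explicit than the paper's presentation, but do not change the argument.
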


In the next sections we describe algorithms using this decomposition. Before that, we show that although the idea of decomposing groups that can be written as semidirect products is quite general and useful for most practical applications, this form of decomposition might fail in other forms of products of groups and needs to be used with caution. To this end, consider the example of central extensions from Def.~\ref{def: central_extension}. We show that it is not trivial to extend the results beyond semidirect products. Recall that in the proof for the semidirect product result in Thm.~\ref{thm: semidirect_products} we used the following equations
$
    \phi(\Gamma_{(g_1,g_2)}(x)) = \phi(\Gamma_{(g_1,e)}\Gamma_{(e,g_2)}(x))
                                =\bar{\Gamma}_{(g_1,e)}\phi(\Gamma_{(e,g_2)}(x)).
$
But for central extensions we have$
    (g_1,e)(e,g_2) = (g_1e\psi(e,g_2),eg_2)
                   = (g_1\psi(e,g_2),g_2),
$
which is not necessarily the same as $(g_1,g_2)$. Hence, here we restrict our discussions to semidirect products, which in itself covers a wide range of groups and is useful for practical purposes. Now we look at a consequence of Thm.~\ref{thm: semidirect_products}.
\begin{corollary}\label{cor: semidirect_products}
Let $\mu^{G_i}$, $\bar{\mu}^{G_i}$ be group actions of $G_i$ on $\mathcal{X}$ and $\mathcal{Y}$ respectively for $i \in \{1,2\}$ as in Thm.~\ref{thm: semidirect_products}. Let $\Delta$, $\bar{\Delta}$ be actions of $G_1 \times G_2$ on $\mathcal{X}$ and $\mathcal{Y}$ respectively such that
\begin{align*}
    \Delta_{(g_1,e)}(x) &= \mu_{g_1}^{G_1}(x), \hspace{2mm} \bar{\Delta}_{(g_1,e)}(y) = \bar{\mu}_{g_1}^{G_1}(y) \hspace{4mm} \forall g_1\in G_1,x\in \mathcal{X},y\in \mathcal{Y} \\
    \Delta_{(e,g_2)}(x) &= \mu_{g_2}^{G_2}(x), \hspace{2mm} \bar{\Delta}_{(e,g_2)}(y) = \bar{\mu}_{g_2}^{G_2}(y) \hspace{4mm} \forall g_2\in G_2,x\in \mathcal{X}, y\in \mathcal{Y}
\end{align*}
Then, any function $\phi:\mathcal{X}\mapsto \mathcal{Y}$ is equivariant to $G_1 \times G_2$ under group actions $(\Delta,\bar{\Delta})$ if and only if $\phi$ is equivariant to $G_i$ under group actions $(\mu^{G_i},\bar{\mu}^{G_i})$ for both $i \in \{1,2\}$.
\end{corollary}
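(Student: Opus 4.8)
The plan is to derive Corollary~\ref{cor: semidirect_products} as an immediate specialization of Theorem~\ref{thm: semidirect_products}, exploiting the fact that the direct product is a semidirect product with trivial action. Concretely, let $\alpha_0 : G_2 \to \text{Aut}(G_1)$ be the trivial homomorphism, i.e. $(\alpha_0)_{g_2} = \text{id}_{G_1}$ for every $g_2 \in G_2$. First I would check that $\alpha_0$ is indeed a homomorphism into $\text{Aut}(G_1)$ (immediate), so that the semidirect product $G_1 \rtimes_{\alpha_0} G_2$ is defined, and then observe that its multiplication law, $(g_1,g_2)(g_1',g_2') = (g_1 (\alpha_0)_{g_2}(g_1'), g_2 g_2') = (g_1 g_1', g_2 g_2')$, coincides exactly with the multiplication of the direct product $G_1 \times G_2$ from Definition~2; the identity and inverses match as well. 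Hence $G_1 \times G_2$ and $G_1 \rtimes_{\alpha_0} G_2$ are literally the same group.

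Next I would identify the data of the corollary with the data of the theorem. Take $\Gamma := \Delta$ and $\bar\Gamma := \bar\Delta$ as the actions of $G_1 \rtimes_{\alpha_0} G_2 = G_1 \times G_2$ on $\mathcal{X}$ and $\mathcal{Y}$; these are genuine group actions by hypothesis. The four displayed compatibility equations in the statement of the corollary are precisely equations \eqref{eqn: thm_1_1}--\eqref{eqn: thm_1_4} of Theorem~\ref{thm: semidirect_products}, once one reads $e$ in \eqref{eqn: thm_1_1}--\eqref{eqn: thm_1_2} as $e_{G_2}$ and $e$ in \eqref{eqn: thm_1_3}--\eqref{eqn: thm_1_4} as $e_{G_1}$, with the same $\mu^{G_i}, \bar\mu^{G_i}$ on both sides. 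All hypotheses of Theorem~\ref{thm: semidirect_products} are therefore met with $\alpha = \alpha_0$.

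Finally, applying Theorem~\ref{thm: semidirect_products} verbatim yields: $\phi : \mathcal{X} \to \mathcal{Y}$ is equivariant to $G_1 \rtimes_{\alpha_0} G_2$ under $(\Delta, \bar\Delta)$ if and only if $\phi$ is equivariant to $G_i$ under $(\mu^{G_i}, \bar\mu^{G_i})$ for both $i \in \{1,2\}$. Since $G_1 \rtimes_{\alpha_0} G_2 = G_1 \times G_2$ as established in the first step, this is exactly the claim of the corollary.

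\textbf{Anticipated difficulty.} There is essentially no hard step here; the only thing one must be a little careful about is the bookkeeping that the generic identity symbol $e$ in \eqref{eqn: thm_1_1}--\eqref{eqn: thm_1_4} refers to the identity of the \emph{other} factor, and that $\Delta, \bar\Delta$ are assumed to be honest actions of $G_1 \times G_2$ (so that restricting to the embedded copies of $G_1$ and $G_2$ makes sense). Once the identification $G_1 \times G_2 \cong G_1 \rtimes_{\alpha_0} G_2$ is recorded, the corollary is a one-line consequence of the theorem, so no independent proof is needed.
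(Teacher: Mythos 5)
Your proposal is correct and matches the paper's own argument: the paper likewise derives the corollary directly from Theorem~\ref{thm: semidirect_products} by noting that the direct product is the special case of the semidirect product with trivial $\alpha$. You simply spell out the identification $G_1 \times G_2 = G_1 \rtimes_{\alpha_0} G_2$ more explicitly than the paper does.
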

The proof to Cor.~\ref{cor: semidirect_products} follows directly from Thm.~\ref{thm: semidirect_products} since  direct product is a special case of semidirect product. By symmetry of direct product in Cor.~\ref{cor: semidirect_products}, it implies that the order in which equivariance is induced in a network for a set of smaller groups does not matter.
Also note that our decomposition result is purely in the context of neural networks, unlike works in abstract algebra like Jordan-H\"{o}lder \cite{Baumslag2006} and Iwasawa \cite{Iwasawa1949} decompositions that have different algebraic significance.
\section{Inducing Equivariance in MLPs}\label{sec: inducing_equivariance}
We first review a method used by \cite{RavanbakhshSP2017} to induce discrete group equivariances in fully connected layers of MLPs. Then, drawing from Thm.~\ref{thm: semidirect_products}, we show how to extend this method to induce equivariance of a large group by iteratively inducing equivariances corresponding to smaller symmetries that the larger group is constructed from, using semidirect product.

Represent a fully connected bipartite layer of a MLP by a triple $\Omega = (\mathbf{N},\mathbf{M},\Delta)$, where $\mathbf{N}$ and $\mathbf{M}$ are two sets of nodes corresponding to the input and output layers respectively, and $\Delta: \mathbf{N}\times \mathbf{M} \mapsto \mathbf{C}$ is the edge function that assigns color from the set $\mathbf{C}$ to each of the edges. Edges with the same color share the same trainable parameter.
Let the group action of group $G$ on $\mathbf{N}$ and $\mathbf{M}$ be represented by $\Gamma_{\mathbf{N}}$ and $\Gamma_{\mathbf{M}}$ respectively. And let $\Gamma_{\mathbf{N},\mathbf{M}}$ represent the pairing of actions $\Gamma_{\mathbf{N}}$ and $\Gamma_{\mathbf{M}}$, i.e., $\Gamma_{\mathbf{N},\mathbf{M}}$ acts on $(n,m) \in \mathbf{N}\times \mathbf{M}$ to output $(\Gamma_{\mathbf{N}}(n),\Gamma_{\mathbf{M}}(m))$. Hence, $\Gamma_{\mathbf{N},\mathbf{M}}$ can be seen as permuting the edges in $\Omega$ instead of the vertices.
The action of $\Gamma_{\mathbf{N},\mathbf{M}}$ on the set of edges partitions them into orbits $\{\Gamma_{\mathbf{N},\mathbf{M}}(n_p,m_q)\}_{n_p,m_q}$, where $(n_p,m_q)$ is a representative edge of an orbit. Thus, this fully connected layer only has as many independent parameters as the number of orbits that $\Gamma_{\mathbf{N},\mathbf{M}}$ divides $\mathbf{N} \times \mathbf{M}$ into.
Consider the following construction of a fully connected layer using parameter sharing
\begin{align}\label{eqn: single_group_equivariance}
    \Omega = (\mathbf{N},\mathbf{M},\{\Delta(\Gamma_{\mathbf{N},\mathbf{M}}(n_p,m_q))\}),
\end{align}
where $\{\Delta(\Gamma_{\mathbf{N},\mathbf{M}}(n_p,m_q))\}$ is the set of all independent parameters with one independent parameter per orbit. An algorithmic version of this construction is provided in Alg.~2 in the for further discussion later.
Now we define the notion of $G$-equivariance in a fully connected layer $\Omega$.
\begin{definition}\label{def: g-equivariant-neural-network}
Let $\Omega = (\mathbf{N},\mathbf{M},\Delta)$ be a fully connected layer of a MLP taking input $x \in \mathcal{X}$ and producing output  $\Omega(x) = y\in \mathcal{Y}$. Then $\Omega$ is defined to be $G$-equivariant under group actions $\Gamma$ and $\bar{\Gamma}$ of $G$ on $\mathcal{X}$ and $\mathcal{Y}$ respectively if $\Omega(\Gamma_g(x)) = \bar{\Gamma}_g(\Omega(x))$ for all $g \in G$.
\end{definition}
Recall a theorem by \cite{RavanbakhshSP2017} that states that the construction of $\Omega$ given by \eqref{eqn: single_group_equivariance} is $G$-equivariant.
\begin{theorem}[\cite{RavanbakhshSP2017}]\label{thm: ravanbaksh}
The neural network construction $\Omega$ given by \eqref{eqn: single_group_equivariance} is equivariant to $G$.
\end{theorem}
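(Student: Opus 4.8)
The plan is to unfold $\Omega$ as a color-weighted sum over edges and verify the equivariance identity $\Omega(\Gamma_g x) = \bar{\Gamma}_g(\Omega(x))$ one output coordinate at a time. Write $W_{n,m}$ for the trainable weight on edge $(n,m)$, so that by \eqref{eqn: single_group_equivariance} we have $W_{n,m} = W_{n',m'}$ whenever $(n,m)$ and $(n',m')$ lie in the same $\Gamma_{\mathbf{N},\mathbf{M}}$-orbit (this is well defined precisely because any two representatives of an orbit differ by a group element). The layer computes $\Omega(x)_m = \sum_{n \in \mathbf{N}} W_{n,m}\, x_n$, optionally followed by a pointwise nonlinearity on the coordinates indexed by $\mathbf{M}$; and $\Gamma$ on $\mathcal{X}$, $\bar{\Gamma}$ on $\mathcal{Y}$ are the permutation representations induced by $\Gamma_{\mathbf{N}}$ and $\Gamma_{\mathbf{M}}$, so that $(\Gamma_g x)_n = x_{\Gamma_{\mathbf{N}}(g)^{-1}n}$ and analogously for $\bar{\Gamma}$.

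First I would compute $\Omega(\Gamma_g x)_m = \sum_n W_{n,m}\, x_{\Gamma_{\mathbf{N}}(g)^{-1}n}$ and reindex the sum by $n \mapsto \Gamma_{\mathbf{N}}(g)n$ to obtain $\sum_n W_{\Gamma_{\mathbf{N}}(g)n,\,m}\, x_n$. Then I would compute $(\bar{\Gamma}_g \Omega(x))_m = \Omega(x)_{\Gamma_{\mathbf{M}}(g)^{-1}m} = \sum_n W_{n,\,\Gamma_{\mathbf{M}}(g)^{-1}m}\, x_n$. Matching the two sums term by term, the whole statement reduces to the weight identity $W_{\Gamma_{\mathbf{N}}(g)n,\,m} = W_{n,\,\Gamma_{\mathbf{M}}(g)^{-1}m}$ for all $g \in G$, $n \in \mathbf{N}$, $m \in \mathbf{M}$. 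The key observation is that the edges $(\Gamma_{\mathbf{N}}(g)n,\,m)$ and $(n,\,\Gamma_{\mathbf{M}}(g)^{-1}m)$ lie in the same $\Gamma_{\mathbf{N},\mathbf{M}}$-orbit: acting by $g$ on the latter gives $(\Gamma_{\mathbf{N}}(g)n,\, \Gamma_{\mathbf{M}}(g)\Gamma_{\mathbf{M}}(g)^{-1}m) = (\Gamma_{\mathbf{N}}(g)n,\, m)$. Hence these two edges share a color, their weights are equal, and the identity holds. A pointwise nonlinearity on the $\mathbf{M}$-coordinates commutes with the permutation $\bar{\Gamma}_g$, so it does not affect equivariance.

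I expect no real obstacle here — the content is essentially the argument of \cite{RavanbakhshSP2017} — but the bookkeeping needs care: fixing the convention so that $\Gamma$ and $\bar{\Gamma}$ are genuine left actions (placement of the inverse), carrying out the change of variables in the sum without index errors, and noting at the outset that assigning one parameter per orbit is representative-independent. Once these are set up, the proof is the single change-of-variables computation above together with the one-line orbit observation.
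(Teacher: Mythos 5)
Your argument is correct: the change of variables in the sum reduces equivariance to the weight identity $W_{\Gamma_{\mathbf{N}}(g)n,\,m} = W_{n,\,\Gamma_{\mathbf{M}}(g)^{-1}m}$, and the one-line observation that these two edges lie in the same $\Gamma_{\mathbf{N},\mathbf{M}}$-orbit closes it. The paper itself does not reprove this theorem --- it is imported from \cite{RavanbakhshSP2017} --- and your proof is essentially the standard argument from that source, so there is nothing to reconcile.
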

Now consider the construction of $\Omega$ in \eqref{eqn: single_group_equivariance} when the group $G$ is of the form $G = G_1 \rtimes_{\alpha} G_2$. We claim that in this case, a $G$-equivariant fully connected layer $\Omega$ can be obtained by having $\Omega$ respect the parameter sharing laws in \eqref{eqn: single_group_equivariance} for $G_1$ and $G_2$ both. This claim follows directly from Thm.~\ref{thm: semidirect_products}. This result can further be extended to any group $G$ constructed by using semidirect product on groups from a set $\mathcal{G} = \{G_1,G_2,\ldots,G_m\}$. We leverage this result to get an efficient algorithm for construction of equivariant networks for large groups in Sec.~\ref{subsec: fast_equivariant_network_construction}.
We will further use the symmetry of arguments $G_1,G_2$ in Cor.~\ref{cor: semidirect_products} for searching large symmetries in Sec.~\ref{subsec: FES}. In particular, we will use the fact that the order in which the group equivariances for groups from the set $\mathcal{G}$ are induced does not matter, which reduces the search space for our algorithm.

\paragraph{Example}
Before looking at various construction and search algorithms, 
consider some examples demonstrating an implication of Thm.~\ref{thm: semidirect_products}.
Consider an example with $\mathbf{N} = \mathbf{M} = \{1,2,3\}$, i.e.\ a $3\times 3$ fully connected layer $\Omega$. First we construct a symmetric group $S_3$-equivariant network by using the design in \eqref{eqn: single_group_equivariance}.
In Fig.~\ref{fig: introductory_plot}, for $S_3$-equivariance, there are only two independent parameters illustrated using red and blue colors.
Now, consider an alternate construction of the same network by first inducing $\mathbb{Z}_3$ cyclic equivariance followed by parameter sharing rules for $\mathbb{Z}_2$ flip equivariance as shown in Fig.~\ref{fig: introductory_plot}. Note that by combining the parameter sharing restrictions for both $\mathbb{Z}_3$-equivariance and $\mathbb{Z}_2$-equivariance, we obtain the parameter sharing  obtained for $S_3$-equivariance. This can be explained from the fact $S_3 = \mathbb{Z}_3 \rtimes_{\alpha} \mathbb{Z}_2$ for some $\alpha$ and a consequence of Thm.~\ref{thm: semidirect_products}. Hence, breaking symmetries down helps reduce the search space of groups by introducing equivalence relations between them
\subsection{Fast Equivariant Network Construction}\label{subsec: fast_equivariant_network_construction}
Here we give a construction algorithm, Alg.~\ref{alg: fast-equivariant-network-construction}, for inducing equivariance in fully connected layers. For a group $G$ constructed by iteratively using semidirect product of $m$ smaller groups in any order from an array $G_{array} = [G_1,G_2,\ldots,G_m]$, inducing $G$-equivariance through the parameter sharing method in \eqref{eqn: single_group_equivariance} has $\mathcal{O}(|G|\times N)$ computational complexity, where, clearly $|G| = |G_1| \times \dots \times |G_m|$. We use Thm.~\ref{thm: semidirect_products} to reduce this complexity to $\mathcal{O}\left( (|G_1|+\cdots+|G_m|)\times N \right)$. Note that trivially using \eqref{eqn: single_group_equivariance} for each $G_i \in G_{array}$ to induce equivariance might not give a computational advantage. For example, for $G_{array} = [G_1,G_2]$, individually inducing equivariance would give us two sets of orbits of size $\mathcal{O}(N/|G_i|)$. But, then merging these two sets of orbits with arbitrary partitioning of indices in each orbit would take $\mathcal{O}(N^2/(|G_1|\times |G_2|))$ computations, which for most experimental cases considered in this paper is impractical. Now we describe Alg.~\ref{alg: fast-equivariant-network-construction} that produces a $G$-equivariant network with $\mathcal{O}\left( (|G_1|+\cdots+|G_m|)\times N \right)$ computational complexity. The main idea behind Alg.~\ref{alg: fast-equivariant-network-construction} is to take inspiration from Thm.~\ref{thm: semidirect_products} and produce the orbits that resemble a $G$-equivariant network, but visiting each element of $G_i$ for $i \in \{1,\ldots,m\}$ exactly once per number of parameters.

\paragraph{Notation for Algs.~\ref{alg: fast-equivariant-network-construction} and 2}
The total number of parameters in the network is $N$. Define an array $\mathbf{V}$ of length $N$ which is initialized to all $-1$s. When the $i$th parameter is visited, $\mathbf{V}[i]$ is set to $1$. The sharing amongst  parameters is indicated by an array, $\mathbf{I}$, of length $N$, which is initialized to $\mathbf{I}[i] = i$, and $\mathbf{I}[i] = j$ indicates  the $i$th parameter belongs to the $j$th orbit. An integer $C$ initialized to $-1$ is the current orbit number. The array of small groups that construct $G$ is stored in an array $G_{array}$ of size $g_{size}$. Define another array $\mathbf{L}$ with $\mathbf{L}[i]=i$ and an empty queue, $Q$, to enumerate parameters.
\begin{algorithm}[t]
\begin{algorithmic}
\FOR{$i \in \mathbf{L}$}
    \IF{$\mathbf{V}[i]<0$}
    \STATE $\mathbf{V}[i] = 1$
    \STATE $Q.append(i)$
    \STATE $C = C+1$
    \WHILE{$len(Q)>0$}
    \STATE $index = Q.pop()$
    \STATE $I[index]=C$
    \FOR{$G \in G_{array}$}
    \FOR{$g \in G$}
    \STATE $index_g = \Gamma_g(index)$
    \IF{$V[index_g]<0$}
    \STATE $V[index_g]=1$
    \STATE $Q.append(index_g)$
    \STATE $I[index_g] = C$
    \ENDIF
    \ENDFOR
    \ENDFOR
    \ENDWHILE
    \ENDIF
    \ENDFOR
\Return $\mathbf{I},C$
\end{algorithmic}
 \caption{Fast Equivariant MLP Construction}
 \label{alg: fast-equivariant-network-construction}
\end{algorithm}

In Alg.~2 in Sec.~\ref{sec: basic_algorithm} in the supplementary material, we give an algorithmic version of the construction of a G-equivariant MLP in \eqref{eqn: single_group_equivariance} using the same notations as in Alg.~\ref{alg: fast-equivariant-network-construction} to illustrate the difference between the two algorithms in Fig.~\ref{fig: Fast_construction_illustration}. Take the group $G$ to be of the form $G=G_1\rtimes G_2$, then the main difference between the two algorithms is that in Alg.~2 for each index $i\in L$, we iterate through all elements of $G$ to construct the orbits, whereas in Alg.~\ref{alg: fast-equivariant-network-construction} for each index $i \in L$, we only go through the elements of $G_1$ and $G_2$. Fig.~\ref{fig: Fast_construction_illustration} illustrates this difference

Now we prove correctness and computational complexity of Alg.~\ref{alg: fast-equivariant-network-construction}.  Claim~\ref{claim: correctness}  shows that for a weight matrix $\mathbf{W}$ with $N$ parameters, the matrix $\mathbf{W}[\mathbf{I}]$ is $G$-equivariant, where $\mathbf{W}[\mathbf{I}]$ is a matrix satisfying the parameter sharing introduced by $I$ and has $C$ total trainable parameters, the total number of orbits returned by Alg.~\ref{alg: fast-equivariant-network-construction}. The proof to which is provided in Sec.~\ref{sec: proofs} in the supplementary material. Claim~\ref{claim: computational complexity} shows that Alg.~\ref{alg: fast-equivariant-network-construction} has computational complexity $\mathcal{O}\left( (|G_1|+\cdots+|G_m|)\times N \right)$.
\begin{figure*}
    \centering
    \includegraphics[width=10cm]{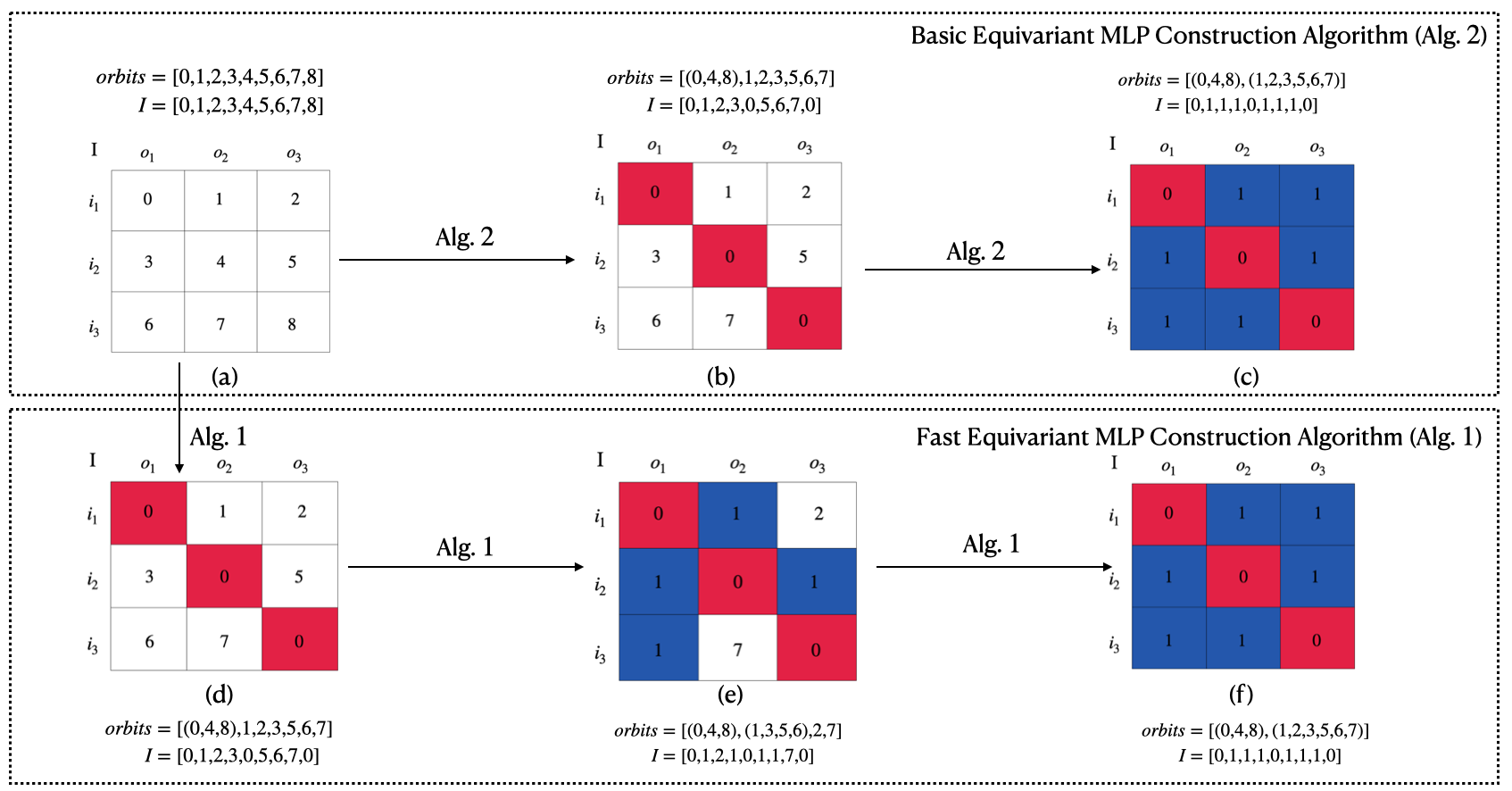}
    \caption{Steps in inducing equivariance of $\mathbb{Z}_3 \times \mathbb{Z}_2$ in a $3\times 3$ MLP using Alg.~\ref{alg: fast-equivariant-network-construction} as compared to a basic approach in Alg.~2. Here (a) represents the orbits of a singles-layered $3 \times 3$ MLP as the starting point for both the algorithms. Alg.~\ref{alg: fast-equivariant-network-construction} takes the steps (d), (e), and (f) in that order, whereas Alg.~2 takes step (b) followed by (c) to obtain the same equivariant MLP. Both methods give the same equivariant network as proved in Claim.~\ref{claim: correctness}, but Alg.~\ref{alg: fast-equivariant-network-construction} is significantly faster than Alg.~2 as proved in Claim.~\ref{claim: computational complexity}}
\label{fig: Fast_construction_illustration}
\end{figure*}
\begin{claim}\label{claim: correctness}
 The weight matrix with parameter sharing given by $\mathbf{W}[\mathbf{I}]$ is $G$-equivariant. Here, $\mathbf{W}[\mathbf{I}]$ is a weight matrix with $N$ parameters sharing $C$ trainable parameters using $\mathbf{I}$ returned by Alg.~\ref{alg: fast-equivariant-network-construction}.
\end{claim}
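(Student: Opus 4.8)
The plan is to show that the partition $\mathbf{I}$ returned by Alg.~\ref{alg: fast-equivariant-network-construction} is exactly the orbit partition of $\mathbf{N}\times\mathbf{M}$ under the group action generated by all the $\Gamma_g$ for $g\in G_i$, $i\in\{1,\dots,m\}$, and then invoke Thm.~\ref{thm: semidirect_products} (iterated) together with Thm.~\ref{thm: ravanbaksh} to conclude $G$-equivariance. Concretely, identify the $N$ parameters (edges) with the index set on which $G_1,\dots,G_m$ act; the algorithm performs a breadth-first search on the graph whose vertex set is this index set and whose edges connect $\mathit{index}$ to $\Gamma_g(\mathit{index})$ for every $g$ in every $G_i$. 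The connected components of this graph are precisely the orbits of the subgroup $\langle G_1,\dots,G_m\rangle$ of the symmetric group on the indices, and since each $G_i$ is a group (closed under inverses and composition), reachability via the generators coincides with being in the same orbit.

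The key steps, in order, are: (1) show the BFS is well-defined and terminates — each index is appended to $Q$ at most once because it is guarded by $\mathbf{V}[i]<0$ and $\mathbf{V}$ is set to $1$ before appending, so the total work is finite and every index gets assigned some orbit label $C$; (2) show the label assignment respects orbits: if $j=\Gamma_g(i)$ for some $g\in G_i$ then $\mathbf{I}[i]=\mathbf{I}[j]$, because when $i$ is popped the inner loops visit $\Gamma_g(i)$ and either assign it $C$ or it already has label $C$ (an inductive invariant: everything in $Q$ or already popped within one outer iteration shares the same $C$); (3) show conversely that indices with the same label lie in one orbit, since a common label means a BFS path $i=i_0,i_1,\dots,i_k=j$ with $i_{\ell+1}=\Gamma_{g_\ell}(i_\ell)$, hence $j=(\Gamma_{g_{k-1}}\circ\cdots\circ\Gamma_{g_0})(i)$, an element of the generated subgroup; (4) conclude that $\mathbf{I}$ is the orbit partition of the action of each $G_i$ simultaneously, so the shared-weight matrix $\mathbf{W}[\mathbf{I}]$ satisfies the parameter-sharing scheme of \eqref{eqn: single_group_equivariance} for \emph{every} $G_i$ at once; (5) apply Thm.~\ref{thm: ravanbaksh} to get $G_i$-equivariance for each $i$, and then apply Thm.~\ref{thm: semidirect_products} inductively along $G_{array}$ (using that $G$ is built by iterated semidirect products and that the order does not matter by Cor.~\ref{cor: semidirect_products}) to lift this to $G$-equivariance.

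The main obstacle I expect is step (2)/(4): carefully verifying that a single pass of the outer \texttt{for}-loop over $G_{array}$ during one BFS, rather than taking the full closure under composition of all the $\Gamma_g$ before moving on, nonetheless produces the true orbit. The point is that each visited index is itself later popped and has the generator action applied to it, so compositions $\Gamma_{g}\Gamma_{g'}$ are realized across multiple pops even though no pop applies a composite map directly; one must state this as a closure/invariant argument ("the set of labelled-$C$ indices is closed under every $\Gamma_g$ once $Q$ empties") and check it does not terminate prematurely — i.e., that $Q$ is only emptied when no new index is reachable by a single generator step from any labelled index. Given that, the identification of $\mathbf{I}$ with the joint orbit partition is immediate, and the rest is a direct appeal to the already-established theorems.
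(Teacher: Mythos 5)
Your argument is correct, and it reaches the conclusion by a route that differs from the paper's in its final step. The paper's proof works directly with the large group: it shows that the BFS component containing an index $i$ is exactly the $G$-orbit $\Gamma_G(i)$, using the semidirect-product factorization $(g_1,g_2)=(g_1,e)(e,g_2)$ to argue that once $\Gamma_{(g_0,e)}(i)$ is popped, the second pass over $G_2$ reaches every $\Gamma_{(g_0,g_y)}(i)$, so two rounds of generator steps exhaust the orbit; it then invokes Thm.~\ref{thm: ravanbaksh} once, for $G$ itself. You instead characterize the BFS components as the orbits of the subgroup generated by all the $\Gamma_g$, $g\in G_1\cup\cdots\cup G_m$, observe that this partition is constant on each individual $G_i$-orbit (so $\mathbf{W}[\mathbf{I}]$ is an instance of the sharing scheme \eqref{eqn: single_group_equivariance} for every $G_i$), apply Thm.~\ref{thm: ravanbaksh} factor by factor, and lift to $G$-equivariance via Thm.~\ref{thm: semidirect_products} and Cor.~\ref{cor: semidirect_products}. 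Your version delegates all the group-structure reasoning to the already-proved decomposition theorem and needs only the generic fact that BFS computes connected components under a set of invertible generators; the paper's version buys the stronger statement that $\mathbf{I}$ is exactly the $G$-orbit partition (hence $C$ is exactly the number of independent parameters of the $G$-equivariant layer), which your route as written does not immediately give, since a priori the generated subgroup could be larger than $G$ and its orbits coarser. For the equivariance claim itself this does not matter (coarser sharing only adds equivariance), and your closure invariant in steps (2)--(3) is exactly the point the paper also has to verify, so both proofs are sound; yours is the more modular, the paper's the more informative about $C$.
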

\begin{claim}\label{claim: computational complexity}
The computational complexity of Alg.~\ref{alg: fast-equivariant-network-construction} is $\mathcal{O}\left( (|G_1|+\cdots+|G_m|)\times N \right)$.
\end{claim}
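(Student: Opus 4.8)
The plan is to analyze Alg.~\ref{alg: fast-equivariant-network-construction} by accounting separately for the work done inside the \textsc{while}-loop and the work done by the outer \textsc{for}-loop over $\mathbf{L}$, and then arguing that the total is dominated by the former. First I would observe that the outer \textsc{for}-loop over $i \in \mathbf{L}$ runs exactly $N$ iterations, and that for each $i$ the test $\mathbf{V}[i]<0$, the assignments, and the queue append are all $\mathcal{O}(1)$; hence the overhead of the outer loop, ignoring the body of the \textsc{if}, is $\mathcal{O}(N)$.

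Next I would bound the total cost of all executions of the \textsc{while}-loop across the entire run. The key structural fact is that an index is appended to $Q$ only when its $\mathbf{V}$-entry is negative, and at that moment $\mathbf{V}$ is set to $1$ and never reset; therefore each of the $N$ indices is pushed onto $Q$ at most once over the whole algorithm, so the \textsc{while}-loop body executes at most $N$ times in total. Each such execution performs one $Q.pop()$, one assignment $I[index]=C$, and then the double loop ``\textsc{for} $G \in G_{array}$, \textsc{for} $g \in G$'' which computes $\Gamma_g(index)$ and does $\mathcal{O}(1)$ bookkeeping per $g$. The number of $(G,g)$ pairs visited is $\sum_{j=1}^{m} |G_j| = |G_1| + \cdots + |G_m|$, and I would assume (or note as a standing assumption, consistent with the $\mathcal{O}(|G|\times N)$ figure quoted for the baseline) that evaluating the group action $\Gamma_g(index)$ is $\mathcal{O}(1)$. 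Thus each \textsc{while}-iteration costs $\mathcal{O}(|G_1|+\cdots+|G_m|)$, and summing over the at most $N$ iterations gives $\mathcal{O}\big((|G_1|+\cdots+|G_m|)\times N\big)$.

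Finally I would combine the two contributions: the outer-loop overhead $\mathcal{O}(N)$ plus the total \textsc{while}-loop cost $\mathcal{O}\big((|G_1|+\cdots+|G_m|)\times N\big)$, and since $|G_1|+\cdots+|G_m|\ge m \ge 1$ the first term is absorbed into the second, yielding the claimed bound $\mathcal{O}\big((|G_1|+\cdots+|G_m|)\times N\big)$. The main subtlety — and the step I would state most carefully — is the amortization argument that each index enters the queue at most once; this is what prevents the naive ``$N$ outer iterations times $N$ possible queue elements times group size'' blow-up and is exactly the place where the $\mathbf{V}$-marking discipline of the algorithm does the work. A secondary point worth making explicit is the $\mathcal{O}(1)$-per-group-action assumption, since without it the bound would carry an extra factor for the cost of applying $\Gamma_g$.
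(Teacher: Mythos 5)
Your proposal is correct and follows essentially the same argument as the paper's (one-sentence) proof: each index is appended to $Q$ at most once because of the $\mathbf{V}$-marking, and each pop incurs $\mathcal{O}(|G_1|+\cdots+|G_m|)$ work from the double loop over $G_{array}$. Your version merely spells out the amortization and the implicit $\mathcal{O}(1)$-per-group-action assumption more carefully than the paper does.
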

\begin{proof}
Each index $i$ gets appended to $Q$ exactly once and when $i$ is popped out, using group transformations $\Gamma_g$s it checks at most $|G_1|+\cdots+|G_m|$ more indices to add to Q.
\end{proof}
\subsection{Fast Equivariance Search}\label{subsec: FES}
We use deep Q-learning \cite{SuttonB2018} with $\epsilon$-greedy strategy \cite{VermorelM2005} and experience replay \cite{Lin1993, AdamBB2011} as our search algorithm. In the $\epsilon$-greedy strategy, the agent starts searching for networks with a large value of $\epsilon$ close to 1 when it \emph{explores} various networks and slowly decreases $\epsilon$ to \emph{exploit} better performing networks as the search progresses. Experience replay helps store large numbers of transitions in the state-action space and reuse them in a decorrelated way, which empirically performs well \cite{AdamBB2011}.
We represent a network topology by a state vector describing equivariant properties of the network, which acts as the state of the Q-learning agent. At each step of learning, the agent moves from current state to another by changing group equivariances present in the network.
Various components of our $Q$-learning agent including  search space, action space, evaluation strategy, reward function, and training strategy follow  \cite{BakerGNR2017} and are detailed in Sec.~\ref{sec: fes_details} of the supplementary materials.
\section{Experiments}\label{sec: experiments}
We conduct two sets of experiments in Sec.~\ref{subsec: exp_Group_FC_Nets} and \ref{subsec: exp_Conv_Nets} for group equivariant MLPs and CNNs respectively.
The dataset construction for Sec.~\ref{subsec: exp_Group_FC_Nets} is described in Sec.~\ref{subsubsec: fc_datasets} in the supplementary material using various group transformations on MNIST and Fashion-MNIST datasets.
We report the performance of group equivariant MLPs with fixed number of features and varying number of parameters, where we use one or more groups from the same set of group transformations for inducing equivariance as used for dataset construction. Group equivariances are induced in these networks using deep Q-learning without knowledge of transformations present in the datasets, but only the possible set of groups that can be used to construct it. For a set of symmetries, a large number of groups can be constructed from their semidirect products by changing the automorphism and components involved in the product.
These experiments show that in general, using the same group for inducing equivariance actually in the dataset tends to perform well. But, cases where number of group symmetries are too many, our search algorithm does not choose MLPs with all the relevant symmetries, rather finds a balance between symmetry and parameters.
In Sec.~\ref{subsec: exp_Conv_Nets}, our deep Q-learning algorithm chooses appropriate equivariances, augmentations, and channel sizes in GCNNs to maximize their performance on various image datasets. Code is available at \url{https://github.com/basusourya/autoequivariant_networks}.

\subsection{Group Equivariant MLPs}\label{subsec: exp_Group_FC_Nets}
We first perform single equivariance tests where we use the G-MNIST and G-Fashion-MNIST datasets with single group transformations from Tab.~\ref{tab: individual_augmentations}, followed by deep Q-learning, where we consider the datasets with combination of group transformations from Tab.~\ref{tab: augmentations}.

\paragraph{Single Equivariance Testing}
Here we perform single equivariance tests on G-MNIST and G-Fashion-MNIST datasets as shown in Fig.~\ref{fig: I_G_MNIST} and Fig.~\ref{fig: I_G_Fasion_MNIST} in Sec.~\ref{subsubsec: fc_datasets} in the supplementary material respectively.
The training details are given in Sec.~\ref{sec: training_details_gemlp} in the supplementary material.
From the accuracies in Tab.~\ref{tab: hypothesis_testing_IAug} and Tab.~\ref{tab: fashionMNIST_hypothesis_testing_IAug} we find the following key insights.
We find that test accuracies are a function of equivariances and number of parameters in them. Appropriate equivariances in MLPs 
perform well, but, when the number of parameters are too few for single equivariance, as in the case of Eq4 and Eq5 in Tab.~\ref{tab: parameters_IEq}, the networks  perform poorly across all the datasets. We conclude that equivariance helps in general, but a balance between number of parameters and symmetry induced is important. Hence, in the next subsection, we use deep Q-learning to automatically find the appropriate equivariances with balanced number of parameters to induce in an MLP.

\begin{table*}
\caption{Trainable parameters in MLPs of dimension $784\times 400 \times 400 \times 10$ with various equivariances.}
    \centering
    
    \begin{tabular}{|p{2.3cm} p{0.45cm} p{0.45cm} p{0.45cm} p{0.45cm} p{0.45cm} p{0.45cm} p{0.45cm} p{0.45cm} p{0.45cm} p{0.45cm} p{0.5cm} p{0.5cm} p{0.52cm}|}
    \hline
              \textbf{}  & Eq0 & Eq1 & Eq2 & Eq3 & Eq4 & Eq5 & Eq6 & Eq7 & Eq8 & Eq9 & Eq10 & Eq11 & Eq12 \\
              \hline
        \textbf{Params ($\times 10^6$)} & 0.47& 0.12& 0.24 & 0.24 & 0.04 & 0.04 & 0.24 & 0.30 & 0.30 & 0.30 & 0.30 & 0.12 & 0.24 \\
        \hline
    \end{tabular}
    \label{tab: parameters_IEq}
\end{table*}

\paragraph{Deep Q-learning}
Tab.~\ref{tab: hypothesis_testing_Aug_summary} gives the results for group neural architecture search using deep Q-learning for G-MNIST and G-Fashion-MNIST for augmentations Aug3 and Aug5. More detailed results are presented in Tab.~\ref{tab: MNIST_hypothesis_testing_Aug} and Tab.~\ref{tab: FashionMNIST_hypothesis_testing_Aug}, and training details are provided in Sec.~\ref{sec: training_details_gemlp} in supplementary material.
Each network in the columns are trained for datasets of size 10K for 10 epochs. Now we describe the method to obtain the AEN results.
We train our DQN with the hyperparameters described in Sec.~\ref{sec: fes_details}.  Our DQN 
takes in the current child network state and outputs an action, using which we get the next state of the child network. The reward is a function of the performance of the child network as described in Sec.~\ref{sec: fes_details}.
To make our search efficient, following \cite{BakerGNR2017}, we store the rewards from each state and reuse them when a state is revisited. Fig.~\ref{fig: DQN_summary} illustrate the average test accuracies obtained by the child model per value of $\epsilon$ for Aug3 and Aug5 augmented G-MNIST and G-Fashion-MNIST datasets. 
The complete set of plots are illustrated in Fig.~\ref{fig: DQN_MNIST} and Fig.~\ref{fig: DQN_FashionMNIST} for the two datasets. 
Each DQN was trained for 12 hours on an Nvidia V100 GPU available on HAL cluster~\cite{KindratenkoMZMHRXCPG2020}.
Once the DQNs are trained, we use the top states obtained and retrain them using identical training hyperparameters as the MLPs in the other columns to obtain the results in Tab.~\ref{tab: hypothesis_testing_Aug_summary}, Tab.~\ref{tab: MNIST_hypothesis_testing_Aug} and Tab.~\ref{tab: FashionMNIST_hypothesis_testing_Aug} under the column AEN. Further training details and top states obtained are given in Sec.~\ref{sec: training_details_gemlp} of the supplementary material.
In Tab.~\ref{tab: MNIST_hypothesis_testing_Aug} and Tab.~\ref{tab: FashionMNIST_hypothesis_testing_Aug}, we find that test accuracies obtained from DQN outperform any other network states in the tables from single equivariances in most of the cases. 
\begin{table}
\caption{Test accuracies for equivariant MLPs for G-MNIST and G-Fashion-MNIST with augmentations Aug$i$ corresponding to the array of transformations from Tab.~\ref{tab: augmentations} and the columns indicate equivariant MLPs with Eq$i$ having equivariance to transformation IAug$i$ from Tab.~\ref{tab: individual_augmentations}.}
    \centering
    \begin{tabular}{p{0.6cm} p{0.5cm} p{0.5cm} p{0.5cm} p{0.5cm} p{0.5cm} p{0.5cm} p{0.5cm} p{0.5cm} p{0.5cm} p{0.5cm} p{0.5cm} p{0.5cm} p{0.5cm} p{0.5cm}}
              \textbf{}  & Eq0 & Eq1 & Eq2 & Eq3 & Eq4 & Eq5 & Eq6 & Eq7 & Eq8 & Eq9 & Eq10 & Eq11 & Eq12 & AEN \\
              \hline
        \underline{MNIST}  &    &    &   &    &     &    &    &    &    &    &    &    &    &   \\
        \textbf{Aug3}&73.7& 73.9& 73.9& 72.1& 46.3& 41.4& 72.7& 69.4& 70.1& 75.3& 68.1& 71.8& 73.2& \textbf{77.4}\\
        \textbf{Aug5}&51.0& 57.6& 54.4& 55.8& 38.6& 37.8& 54.8& 54.3& 53.9& 54.8& 55.2& 62.2& 56.2& \textbf{69.4}\\
        \underline{Fashion}& \underline{  -MNIST}  &    &    &   &    &     &    &    &    &    &    &    &    &\\
        \textbf{Aug3}  &58.0& 59.8& 58.2& 57.8& 43.5& 41.2& 58.0& 59.8& 58.8& 60.0& 57.7& 59.7& 57.1 &\textbf{63.4}\\
        \textbf{Aug5}  &50.8& 54.6& 53.1& 52.5& 36.6& 34.6& 53.8& 54.1& 53.3& 53.5& 52.6& 57.8& 52.3 & \textbf{60.4}\\
    \end{tabular}
    \label{tab: hypothesis_testing_Aug_summary}
\end{table}
\subsection{Group Equivariant Convolutional Neural Networks}\label{subsec: exp_Conv_Nets}
We use deep Q-learning on GCNNs with varying group equivariances, augmentations, and size. We replace the child network in Sec.~\ref{subsec: exp_Group_FC_Nets} with a GCNN with symmetries from rotation (P4), horizontal flips (H2), and vertical flips (V2). 
We also allow two network sizes: small and large with the large network having 1.5 times the number of channels as the small network. This is to understand the impact of number of parameters versus other factors like equivariance and augmentations. We adjust the number of channels in GCNNs so for any combination of groups from \{P4, H2, V2\} the number of parameters are nearly equal for the small network. The DQN also chooses augmentations from Tab.~\ref{tab: equivariances_augmentations_GCNN} for training the child network. Thus, we directly compare the effect of augmentations and equivariances in the overall performance. We train a fixed number of models per value of $\epsilon$ as shown in Tab.~\ref{tab: epsilon-models-trained}. Fig.~\ref{fig: DQN_summary} shows the test accuracies obtained during the deep Q-learning process averaged over $\epsilon$ values for CIFAR10~\cite{KrizhevskyH2009} and EMNIST~\cite{CohenATV2017}. A complete set of plots for more datasets SVHN~\cite{NetzerWCBWN2011}, RotMNIST~\cite{LarochelleECABB2007}, ASL~\cite{tecperson}, KMNIST~\cite{ClanuwatBKLYH2018} is given in Sec.~\ref{sec: additional_plots} of the supplementary material.
\begin{figure}%
    \centering
    \subfloat[Aug3 MNIST]{{\includegraphics[width=4.2cm]{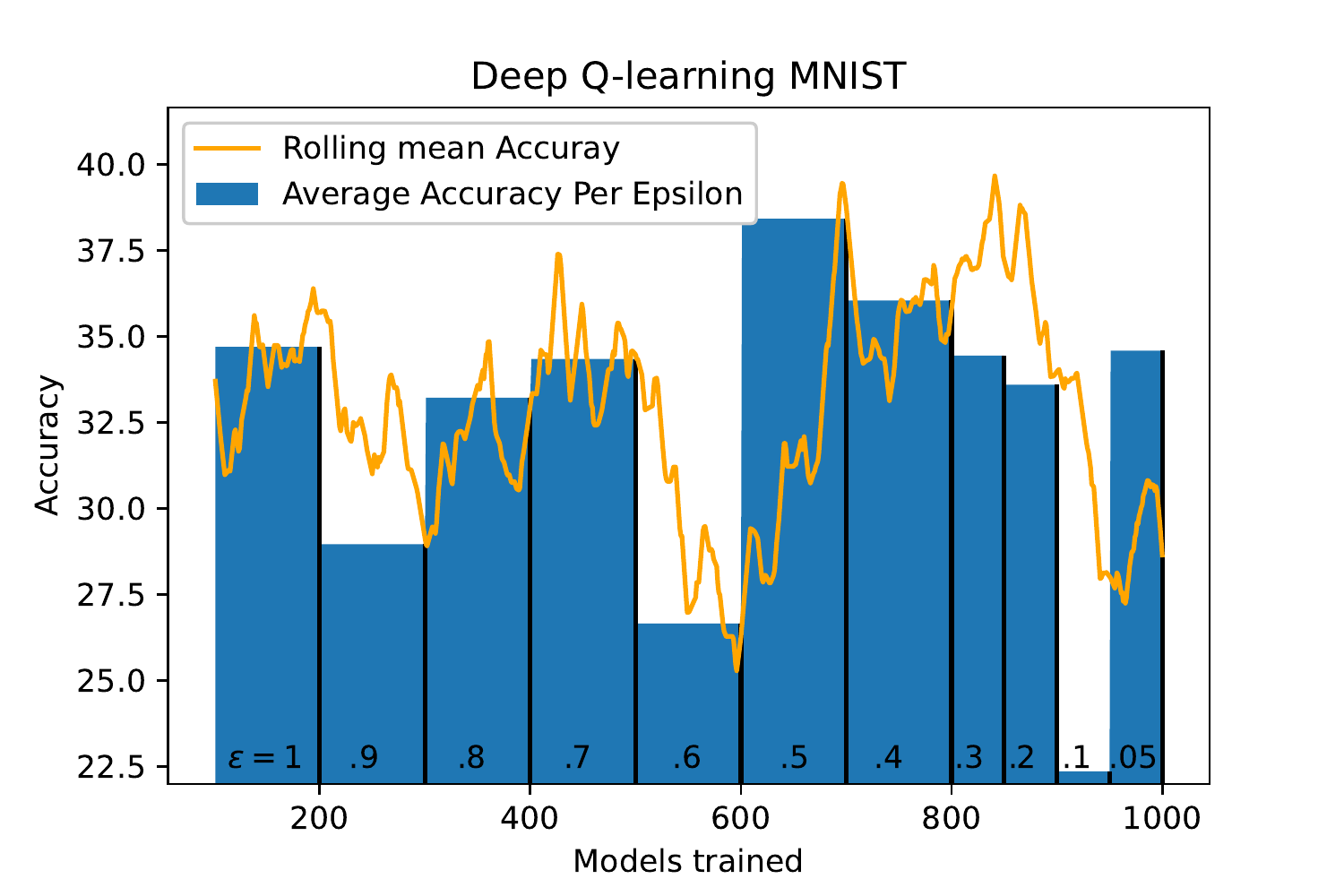} }\label{fig: DQN_MNIST_Aug3}}
    \subfloat[Aug3 Fashion-MNIST]{{\includegraphics[width=4.2cm]{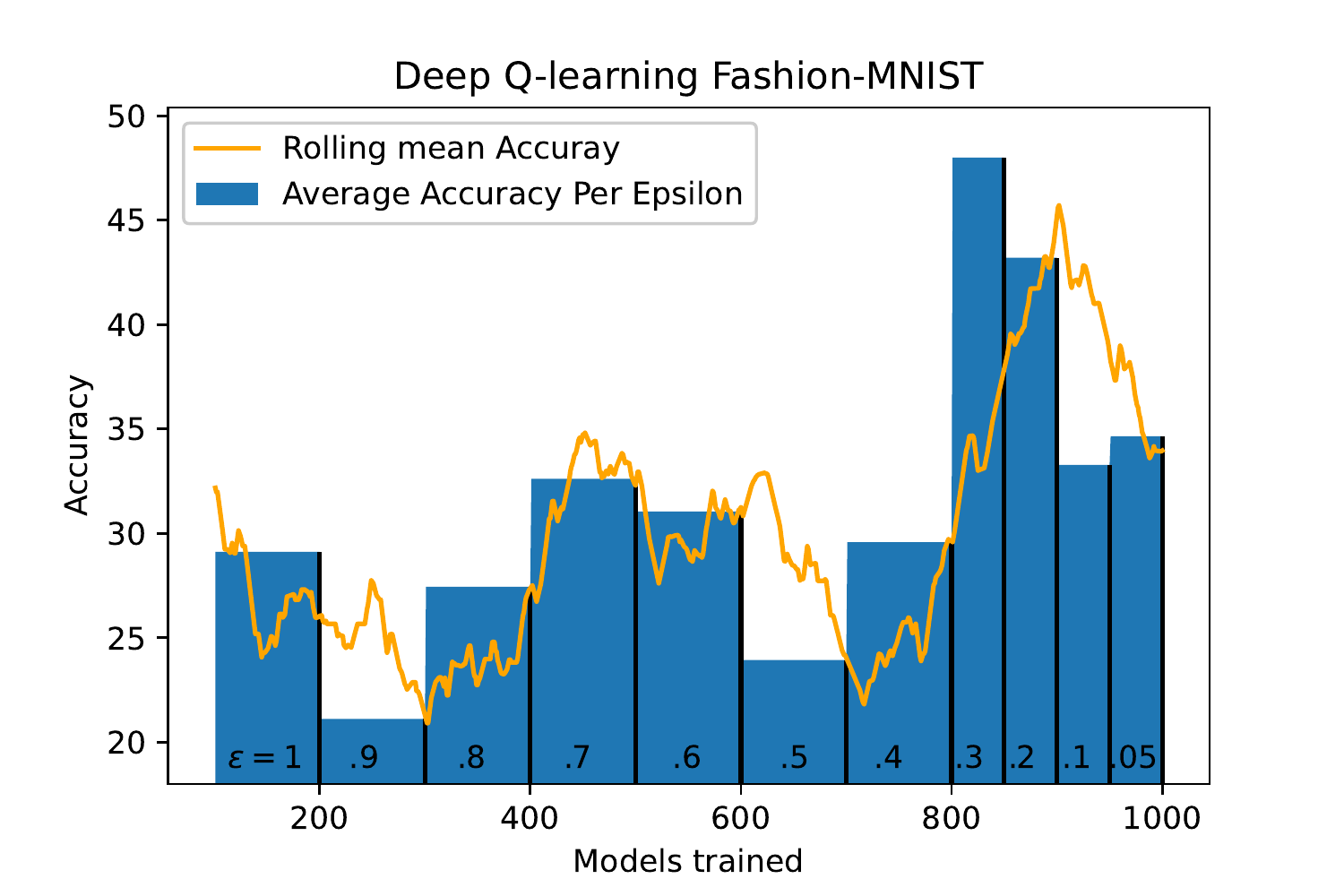} }\label{fig: DQN_FashionMNIST_Aug3}}
    \subfloat[Aug5 MNIST]{{\includegraphics[width=4.2cm]{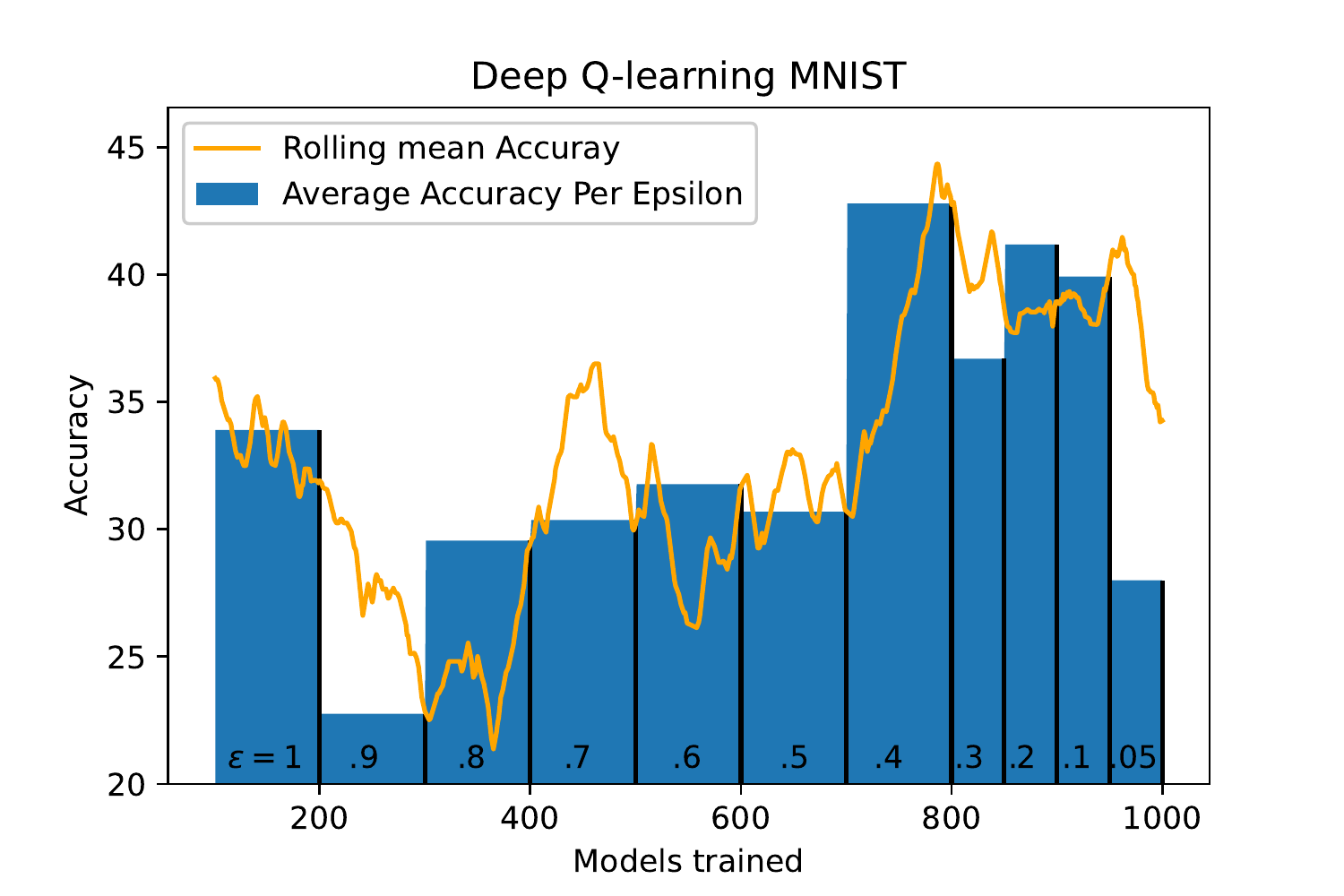} }\label{fig: DQN_MNIST_Aug5}}\\
    \subfloat[Aug5 Fashion-MNIST]{{\includegraphics[width=4.2cm]{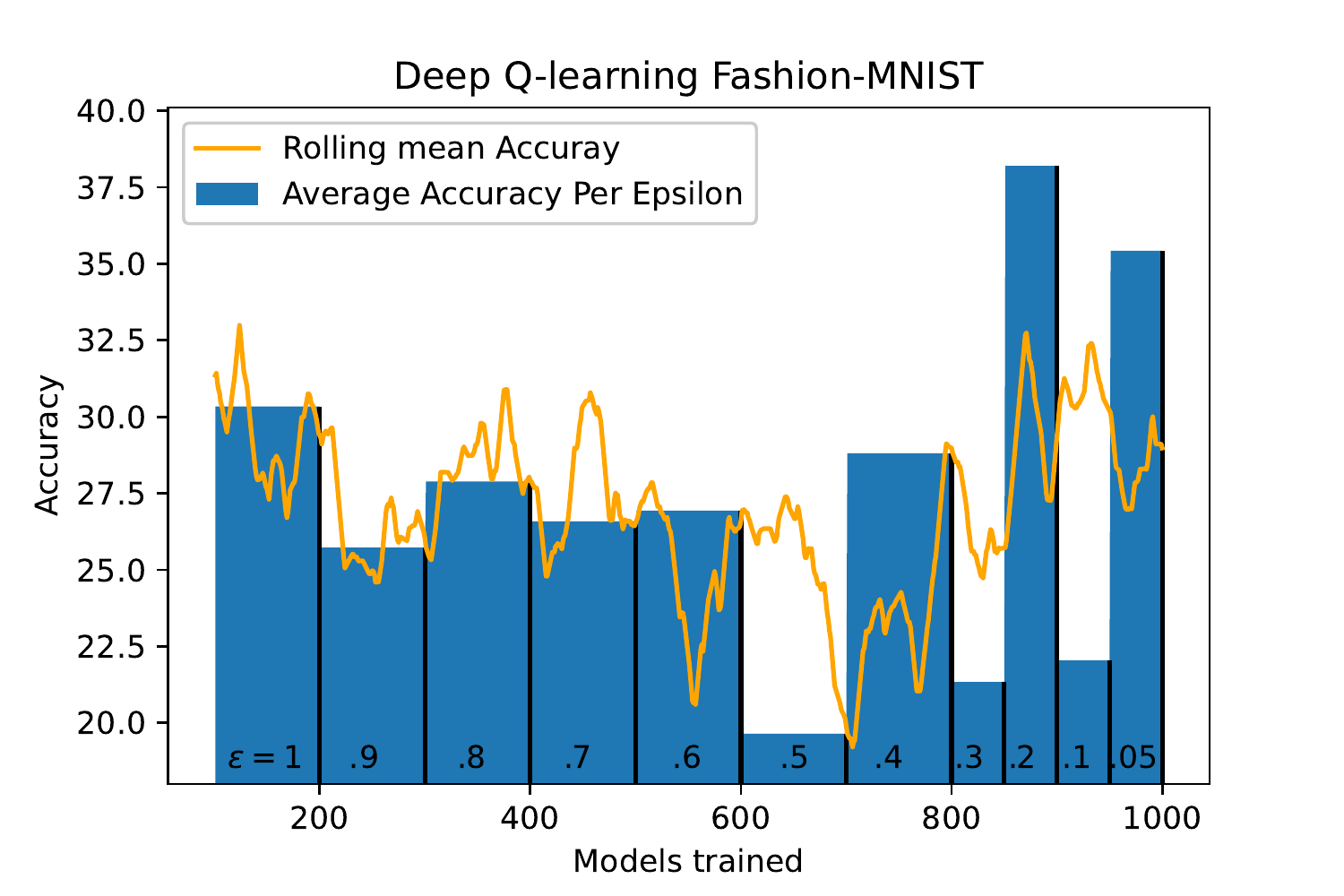} }\label{fig: DQN_FashionMNIST_Aug5}}
    \subfloat[CIFAR10]{{\includegraphics[width=4.2cm]{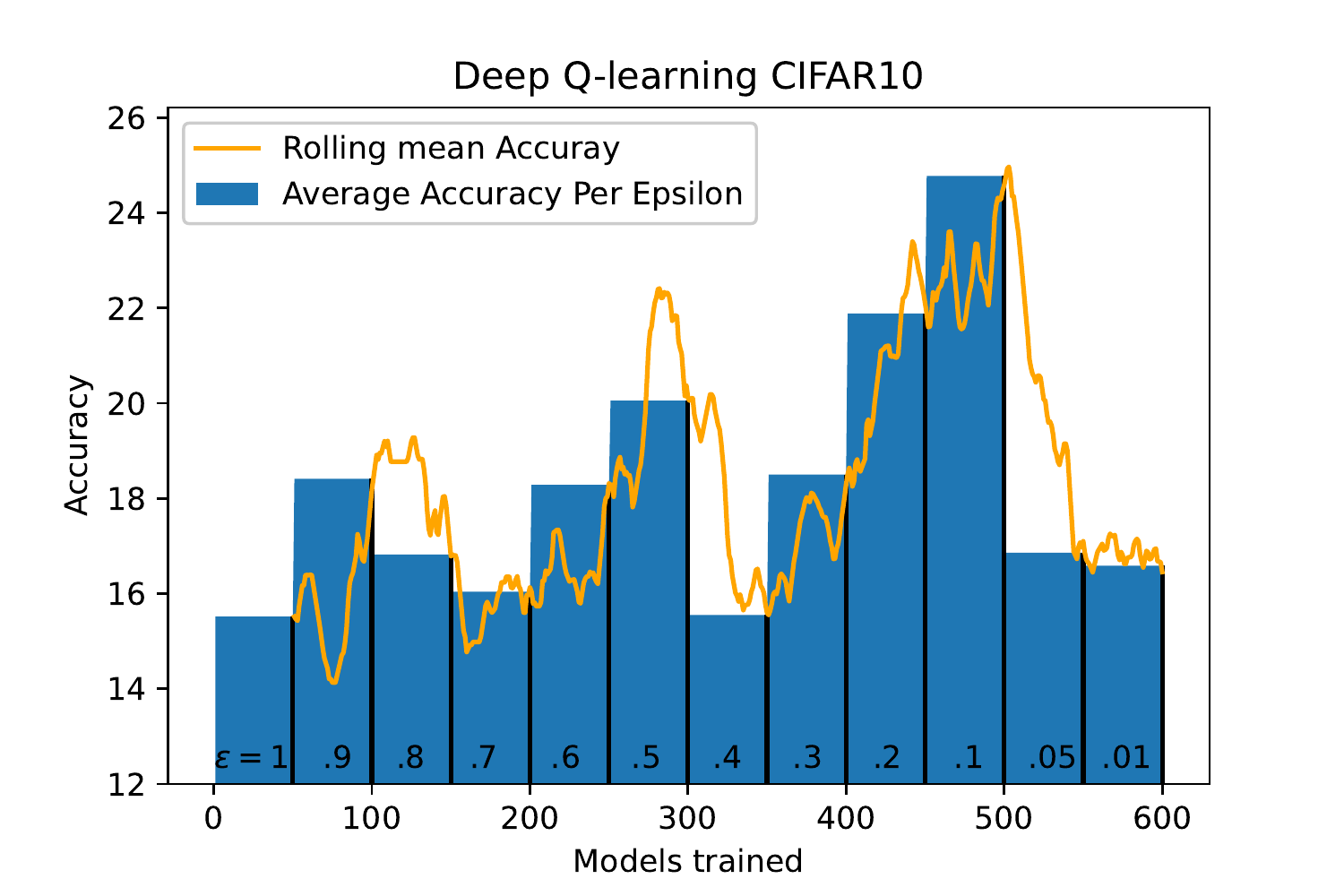} }\label{fig: DQN_CIFAR10}}
    \subfloat[EMNIST]{{\includegraphics[width=4.2cm]{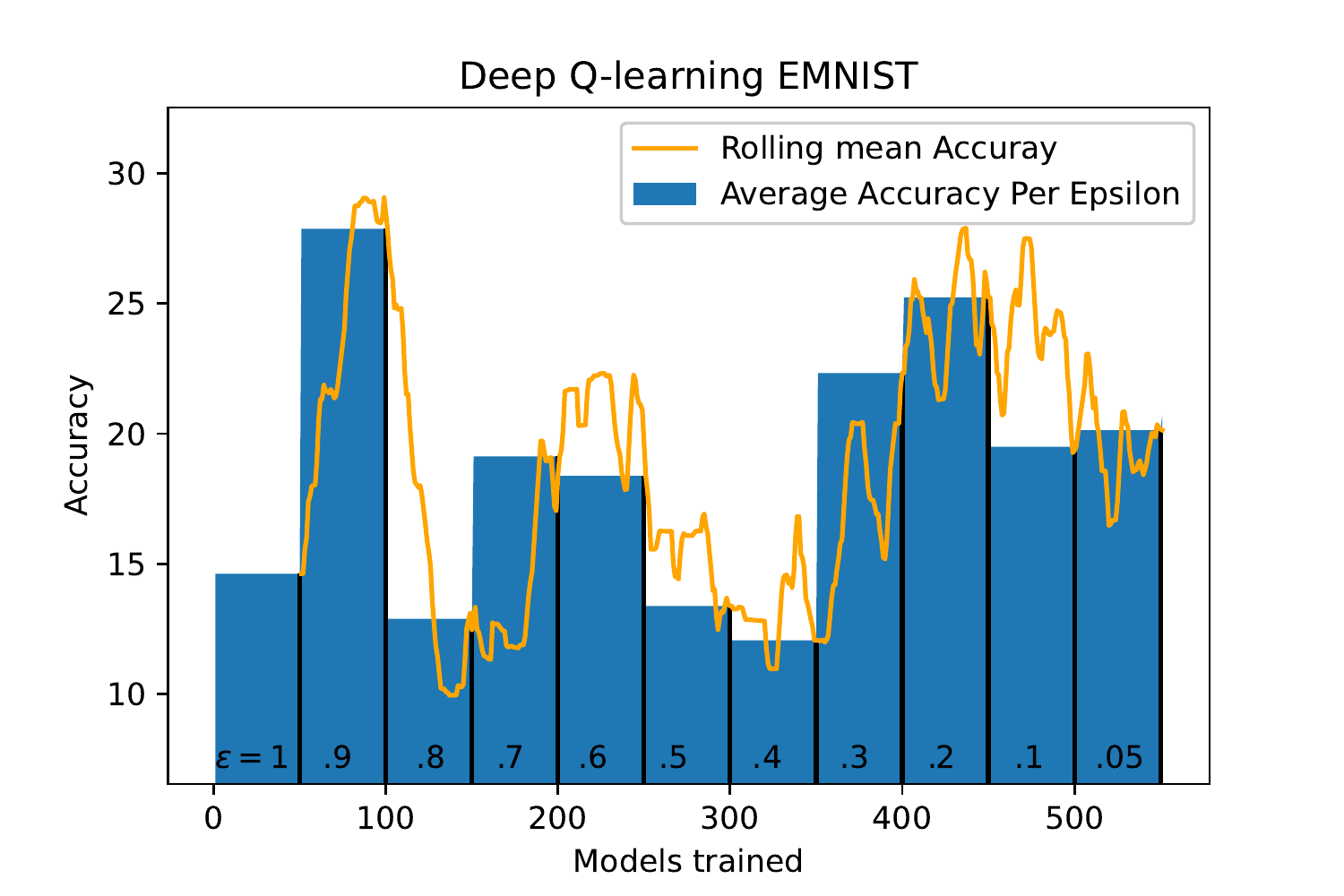} }\label{fig: DQN_EMNIST}}
    \caption{Deep Q-learning performance for various image datasets. The Q-learning agent chooses from various group equivariances for G-MNIST and G-Fashion-MNIST, and from various equivariances, augmentations, and network sizes for CIFAR10 and EMNIST.}%
    \label{fig: DQN_summary}
\end{figure}
\begin{table}
\caption{Top states from deep Q-learning for equivariances, augmentations, and sizes used in training a GCNN on several image datasets and comparison with translation (Zn) equivariant CNNs.}
    \centering
    \begin{tabular}{|p{1.8cm} p{2.4cm} p{3.5cm} p{0.8cm} p{1.4cm} p{1.4cm}|}
    \hline

        \textbf{Dataset} & Equivariances & Augmentations & Size & Test & Base (Zn) \\
                         &               &               &      & accuracies & accuracies \\
        \hline
         {CIFAR10} & (P4, H2, V2, Zn) & (HFlip (0.5), Scale (0.1))  & Large & $65.25\%$ & $55.17\%$\\
        \hline
        {SVHN}    & (P4, H2, V2, Zn) & --- & Large & $89.70\%$ & $85.72\%$ \\
        \hline
        {RotMNIST}& (P4, H2, V2, Zn) & (Scale (0.1)) & Small & $94.57\%$ & $92.60\%$ \\
        \hline
        {ASL}     & (P4, H2, V2, Zn) & (Scale (0.1)) & Small & $94.21\%$ & $89.33\%$\\
        \hline
        {EMNIST}  & (P4, V2, Zn) & (Shear (10)) & Large & $93.99\%$ & $92.93\%$ \\
        \hline
        {KMNIST}  & (P4, H2, V2, Zn) & (Shear (10)) & Large & $93.59\%$ & $90.94\%$ \\
        \hline
    \end{tabular}
    \label{tab: gnas_gcnn_results_summary}
\end{table}
We obtain the top 4 states of the child network with highest rewards from the DQN and retrain them on entire datasets for 10 epochs and report the top states and their performances in Tab.~\ref{tab: gnas_gcnn_results} in the supplementary material. Tab.~\ref{tab: gnas_gcnn_results_summary} summarizes the results for the best pest performing state amongst the top 4 states for each dataset and compares it with a large size traditional CNN. From Tab.~\ref{tab: gnas_gcnn_results} we observe that all  top performing models have group equivariances in them, showing how critical it is for performance in each of the datasets considered. Also note that augmentations do appear in the top models, but not as frequently: although augmentations help with invariance, perhaps they also consume network complexity to learn it. Moreover, we find there is not much impact on the size of the network for a significant change in the network size (1.5 times) since the network size seems to be almost uniformly distributed across the rows in Tab.~\ref{tab: gnas_gcnn_results}. As a takeaway, we conclude that equivariance can induce symmetry without using any network complexity and hence gives superior performance to augmentations.
\section{Conclusion, limitations, and societal impacts}\label{sec: limitations_and_societal_impacts}
We prove new group-theoretic results for improving the complexity of construction and search of group-equivariant neural networks. We find experimentally that AENs, in general, outperform other networks across all the datasets we tested on: G-MNIST, G-Fashion-MNIST, CIFAR10, SVHN, RotMNIST, ASL, EMNIST, and KMNIST.
We work with discrete symmetries since we use parameter sharing, which is developed only for discrete symmetries. In future work, we wish to generalize our methods and extend our work to continuous symmetries. We also assume that we know the set of groups from which the group symmetries are constructed using semi-direct products. In future work, we want to generalize our work to other forms of group extension methods in group theory.
Our work improves the complexity of construction and search of equivariant networks.
Our work does not have direct negative societal impact we are aware of, beyond the general AI safety and governance issues identified in the literature \cite{VarshneyKS2019} from improved AI models. 
NAS does aim to optimize neural networks as usually done by human researchers, but in the current scale of our experiments we do not think it poses a threat to their employment. 


\bibliography{FastEquivarianceSearch.bib}

\begin{thebibliography}{10}
\providecommand{\url}[1]{#1}
\csname url@samestyle\endcsname
\providecommand{\newblock}{\relax}
\providecommand{\bibinfo}[2]{#2}
\providecommand{\BIBentrySTDinterwordspacing}{\spaceskip=0pt\relax}
\providecommand{\BIBentryALTinterwordstretchfactor}{4}
\providecommand{\BIBentryALTinterwordspacing}{\spaceskip=\fontdimen2\font plus
\BIBentryALTinterwordstretchfactor\fontdimen3\font minus
  \fontdimen4\font\relax}
\providecommand{\BIBforeignlanguage}[2]{{%
\expandafter\ifx\csname l@#1\endcsname\relax
\typeout{** WARNING: IEEEtran.bst: No hyphenation pattern has been}%
\typeout{** loaded for the language `#1'. Using the pattern for}%
\typeout{** the default language instead.}%
\else
\language=\csname l@#1\endcsname
\fi
#2}}
\providecommand{\BIBdecl}{\relax}
\BIBdecl

\bibitem{LecunBH2015}
Y.~LeCun, Y.~Bengio, and G.~Hinton, ``Deep learning,'' \emph{Nature}, vol. 521,
  no. 7553, pp. 436--444, May 2015.

\bibitem{CohenW2016a}
T.~Cohen and M.~Welling, ``Group equivariant convolutional networks,'' in
  \emph{Proceedings of the 33rd International Conference on Machine Learning
  (ICML)}, Jun. 2016, pp. 2990--2999.

\bibitem{CohenW2016b}
T.~S. Cohen and M.~Welling, ``Steerable {CNNs},'' in \emph{Proceedings of the
  5th International Conference on Learning Representations (ICLR)}, Apr. 2017.

\bibitem{RavanbakhshSP2017}
S.~Ravanbakhsh, J.~Schneider, and B.~P{\'{o}}czos, ``Equivariance through
  parameter-sharing,'' in \emph{Proceedings of the 34th International
  Conference on Machine Learning (ICML)}, Aug. 2017, pp. 2892--2901.

\bibitem{CohenGKW2018}
T.~S. Cohen, M.~Geiger, J.~K{\"{o}}hler, and M.~Welling, ``Spherical {CNNs},''
  in \emph{Proceedings of the 6th International Conference on Learning
  Representations (ICLR)}, May 2018.

\bibitem{WorrallW2019}
D.~Worrall and M.~Welling, ``Deep scale-spaces: Equivariance over scale,'' in
  \emph{Proceedings of the 32nd Conference on Neural Information Processing
  Systems}, Dec. 2019, pp. 7366--7378.

\bibitem{SosnovikSS2019}
I.~Sosnovik, M.~Szmaja, and A.~Smeulders, ``Scale-equivariant steerable
  networks,'' in \emph{Proceedings of the 8th International Conference on
  Learning Representations (ICLR)}, Apr. 2020.

\bibitem{Bekkers2019}
E.~J. Bekkers, ``B-spline {CNNs} on {L}ie groups,'' in \emph{Proceedings of the
  8th International Conference on Learning Representations (ICLR)}, Apr. 2020.

\bibitem{FinziSIG2020}
M.~Finzi, S.~Stanton, P.~Izmailov, and A.~G. Wilson, ``Generalizing
  convolutional neural networks for equivariance to {L}ie groups on arbitrary
  continuous data,'' in \emph{Proceedings of the 37th International Conference
  on Machine Learning (ICML)}, Jul. 2020, pp. 3165--3176.

\bibitem{RomeroBTH2020}
D.~W. Romero, E.~J. Bekkers, J.~M. Tomczak, and M.~Hoogendoorn, ``Wavelet
  networks: Scale equivariant learning from raw waveforms,'' arXiv:2006.05259
  [cs.LG]., Jun. 2020.

\bibitem{HutchinsonLZDTK2020}
M.~Hutchinson, C.~L. Lan, S.~Zaidi, E.~Dupont, Y.~W. Teh, and H.~Kim,
  ``{LieTransformer}: Equivariant self-attention for {L}ie groups,''
  arXiv:2012.10885 [cs.LG]., Dec. 2020.

\bibitem{RomeroH2019}
D.~W. Romero and M.~Hoogendoorn, ``Co-attentive equivariant neural networks:
  Focusing equivariance on transformations co-occurring in data,'' in
  \emph{Proceedings of the 8th International Conference on Learning
  Representations (ICLR)}, Apr. 2020.

\bibitem{RomeroBTH2020b}
D.~W. Romero, E.~J. Bekkers, J.~M. Tomczak, and M.~Hoogendoorn, ``Attentive
  group equivariant convolutional networks,'' in \emph{Proceedings of the 37th
  International Conference on Machine Learning (ICML)}, Jul. 2020, pp.
  8188--8199.

\bibitem{ZhouKF2021}
A.~Zhou, T.~Knowles, and C.~Finn, ``Meta-learning symmetries by
  reparameterization,'' in \emph{Proceedings of the 9th International
  Conference on Learning Representations (ICLR)}, May 2021.

\bibitem{DehmamyLWY2021}
\BIBentryALTinterwordspacing
N.~Dehmamy, Y.~Liu, R.~Walters, and R.~Yu, ``Lie algebra convolutional neural
  networks with automatic symmetry extraction,'' 2021. [Online]. Available:
  \url{https://openreview.net/forum?id=cTQnZPLIohy}
\BIBentrySTDinterwordspacing

\bibitem{GensD2014}
R.~Gens and P.~M. Domingos, ``Deep symmetry networks,'' in \emph{Proceedings of
  the 27th Conference on Neural Information Processing Systems}, Dec. 2014, pp.
  2537--2545.

\bibitem{GordonLBB2020}
J.~Gordon, D.~Lopez-Paz, M.~Baroni, and D.~Bouchacourt, ``Permutation
  equivariant models for compositional generalization in language,'' in
  \emph{Proceedings of the 8th International Conference on Learning
  Representations (ICLR)}, Apr. 2020.

\bibitem{WorrallGTB2017}
D.~E. Worrall, S.~J. Garbin, D.~Turmukhambetov, and G.~J. Brostow, ``Harmonic
  networks: Deep translation and rotation equivariance,'' in \emph{Proceedings
  of the IEEE Conference on Computer Vision and Pattern Recognition (CVPR)},
  Jul. 2017, pp. 5028--5037.

\bibitem{WeilerC2019}
M.~Weiler and G.~Cesa, ``General $\mbox{E}(2)$-equivariant steerable {CNNs},''
  in \emph{Proceedings of the 32nd Conference on Neural Information Processing
  Systems}, Dec. 2019, pp. 14\,334--14\,345.

\bibitem{EstevesAMD2018}
C.~Esteves, C.~Allen-Blanchette, A.~Makadia, and K.~Daniilidis, ``Learning
  ${SO}(3)$ equivariant representations with spherical {CNNs},'' in
  \emph{Proceedings of the European Conference on Computer Vision (ECCV)}, Sep.
  2018, pp. 52--68.

\bibitem{CohenWKW2019}
T.~Cohen, M.~Weiler, B.~Kicanaoglu, and M.~Welling, ``Gauge equivariant
  convolutional networks and the icosahedral {CNN},'' in \emph{Proceedings of
  the 36th International Conference on Machine Learning (ICML)}, Jun. 2019, pp.
  1321--1330.

\bibitem{EstevesMD2020}
C.~Esteves, A.~Makadia, and K.~Daniilidis, ``Spin-weighted spherical {CNNs},''
  in \emph{Proceedings of the Conference on Neural Information Processing
  Systems (NeurIPS)}, Dec. 2020, pp. 8614--8625.

\bibitem{BogatskiyAORMK2020}
A.~Bogatskiy, B.~Anderson, J.~Offermann, M.~Roussi, D.~Miller, and R.~Kondor,
  ``Lorentz group equivariant neural network for particle physics,'' in
  \emph{Proceedings of the 37th International Conference on Machine Learning
  (ICML)}, Jul. 2020, pp. 992--1002.

\bibitem{PolEDHHFW2020}
E.~Van~der Pol, D.~Worrall, H.~Van~Hoof, F.~Oliehoek, and M.~Welling, ``{MDP}
  homomorphic networks: Group symmetries in reinforcement learning,'' in
  \emph{Proceedings of the 33rd Conference on Advances in Neural Information
  Processing Systems (NeurIPS)}, Dec. 2020, pp. 4199--4210.

\bibitem{HaanWCW2020}
P.~de~Haan, M.~Weiler, T.~Cohen, and M.~Welling, ``Gauge equivariant mesh
  {CNNs}: Anisotropic convolutions on geometric graphs,'' in \emph{Proceedings
  of the 9th International Conference on Learning Representations (ICLR)}, May
  2021.

\bibitem{HorieNTYN2021}
M.~Horie, N.~Morita, T.~Hishinuma, Y.~Ihara, and N.~Mitsume, ``Isometric
  transformation invariant and equivariant graph convolutional networks,'' in
  \emph{Proceedings of the 9th International Conference on Learning
  Representations (ICLR)}, May 2021.

\bibitem{MaronBSL2019b}
H.~Maron, H.~Ben-Hamu, H.~Serviansky, and Y.~Lipman, ``Provably powerful graph
  networks,'' in \emph{Proceedings of the 33rd Conference on Neural Information
  Processing Systems (NeurIPS 2019)}, Dec. 2019, pp. 2156--2167.

\bibitem{MaronBSL2019a}
H.~Maron, H.~Ben-Hamu, N.~Shamir, and Y.~Lipman, ``Invariant and equivariant
  graph networks,'' in \emph{Proceedings of the 7th International Conference on
  Learning Representations (ICLR)}, May 2019.

\bibitem{KerivenP2019}
N.~Keriven and G.~Peyr{\'e}, ``Universal invariant and equivariant graph neural
  networks,'' in \emph{Proceedings of the 32nd Conference on Neural Information
  Processing Systems (NeurIPS)}, Dec. 2019, pp. 7092--7101.

\bibitem{FalorsiHDF2019}
L.~Falorsi, P.~de~Haan, T.~R. Davidson, and P.~Forr{\'e}, ``Reparameterizing
  distributions on {L}ie groups,'' in \emph{Proceedings of the 22nd
  International Conference on Artificial Intelligence and Statistics
  (AISTATS)}, Apr. 2019, pp. 3244--3253.

\bibitem{KohlerKN2020}
J.~K{\"o}hler, L.~Klein, and F.~No{\'e}, ``Equivariant flows: Exact likelihood
  generative learning for symmetric densities,'' in \emph{Proceedings of the
  37th International Conference on Machine Learning (ICML)}, Jul. 2020, pp.
  5361--5370.

\bibitem{BilovsG2020}
M.~Bilo{\v{s}} and S.~G{\"{u}}nnemann, ``Equivariant normalizing flows for
  point processes and sets,'' arXiv:2010.03242 [cs.LG]., Oct. 2020.

\bibitem{KuzminykhPZ2018}
D.~Kuzminykh, D.~Polykovskiy, and A.~Zhebrak, ``Extracting invariant features
  from images using an equivariant autoencoder,'' in \emph{Proceedings of the
  10th Asian Conference on Machine Learning}, Nov. 2018, pp. 438--453.

\bibitem{FalorsiHDCWFC2018}
L.~Falorsi, P.~de~Haan, T.~R. Davidson, N.~De~Cao, M.~Weiler, P.~Forr{\'e}, and
  T.~S. Cohen, ``Explorations in homeomorphic variational auto-encoding,'' in
  \emph{ICML Workshop on Theoretical Foundations and Applications of Deep
  Generative Models}, Jul. 2018.

\bibitem{DeyCG2021}
N.~Dey, A.~Chen, and S.~Ghafurian, ``Group equivariant generative adversarial
  networks,'' in \emph{Proceedings of the 9th International Conference on
  Learning Representations (ICLR)}, May 2021.

\bibitem{KondorT2018}
R.~Kondor and S.~Trivedi, ``On the generalization of equivariance and
  convolution in neural networks to the action of compact groups,'' in
  \emph{Proceedings of the 35th International Conference on Machine Learning
  (ICML)}, Jul. 2018, pp. 2747--2755.

\bibitem{VaswaniSPUJGKP2017}
A.~Vaswani, N.~Shazeer, N.~Parmar, J.~Uszkoreit, L.~Jones, A.~N. Gomez,
  L.~Kaiser, and I.~Polosukhin, ``Attention is all you need,'' in
  \emph{Proceedings of the 30th Conference on Neural Information Processing
  Systems}, Dec. 2017, pp. 5998--6008.

\bibitem{FuchsWFVW2020}
F.~B. Fuchs, D.~E. Worrall, V.~Fischer, and M.~Welling,
  ``$\mbox{SE}(3)$-transformers: {3D} roto-translation equivariant attention
  networks,'' in \emph{Proceedings of the 33rd Conference on Neural Information
  Processing Systems (NeurIPS)}, Dec. 2020, pp. 1970--1981.

\bibitem{JaderbergSZK2015}
M.~Jaderberg, K.~Simonyan, A.~Zisserman, and K.~Kavukcuoglu, ``Spatial
  transformer networks,'' in \emph{Proceedings of the 28th Conference on Neural
  Information Processing Systems}, Dec. 2015, pp. 2017--2025.

\bibitem{EstevesAZD2018}
C.~Esteves, C.~Allen-Blanchette, X.~Zhou, and K.~Daniilidis, ``Polar
  transformer networks,'' in \emph{Proceedings of the 6th International
  Conference on Learning Representations (ICLR)}, May 2018.

\bibitem{SabourFH2017}
S.~Sabour, N.~Frosst, and G.~E. Hinton, ``Dynamic routing between capsules,''
  in \emph{Proceedings of the 31st International Conference on Neural
  Information Processing Systems}, Dec. 2017, pp. 3859--3869.

\bibitem{LenssenFL2018}
J.~E. Lenssen, M.~Fey, and P.~Libuschewski, ``Group equivariant capsule
  networks,'' in \emph{Proceedings of the 31st Conference on Neural Information
  Processing Systems}, Dec. 2018, pp. 8844--8853.

\bibitem{Zhang2019}
R.~Zhang, ``Making convolutional networks shift-invariant again,'' in
  \emph{Proceedings of the 36th International Conference on Machine Learning
  (ICML)}, Jun. 2019, pp. 7324--7334.

\bibitem{ChamanD2020}
A.~Chaman and I.~Dokmani{\'c}, ``Truly shift-invariant convolutional neural
  networks,'' arXiv:2011.14214 [cs.CV]., Nov. 2020.

\bibitem{BakerGNR2017}
B.~Baker, O.~Gupta, N.~Naik, and R.~Raskar, ``Designing neural network
  architectures using reinforcement learning,'' in \emph{Proceedings of the 5th
  International Conference on Learning Representations (ICLR)}, Apr. 2017.

\bibitem{ZophL2017}
B.~Zoph and Q.~Le, ``Neural architecture search with reinforcement learning,''
  in \emph{Proceedings of the 5th International Conference on Learning
  Representations (ICLR)}, Apr. 2017.

\bibitem{PhamGZLD2018}
H.~Pham, M.~Guan, B.~Zoph, Q.~Le, and J.~Dean, ``Efficient neural architecture
  search via parameters sharing,'' in \emph{Proceedings of the 35th
  International Conference on Machine Learning (ICML)}, Jul. 2018, pp.
  4095--4104.

\bibitem{FinziWW2021}
M.~Finzi, M.~Welling, and A.~G. Wilson, ``A practical method for constructing
  equivariant multilayer perceptrons for arbitrary matrix groups,'' \emph{arXiv
  preprint arXiv:2104.09459}, 2021.

\bibitem{Baumslag2006}
B.~Baumslag, ``A simple way of proving the {J}ordan-{H{\"o}}lder-{S}chreier
  theorem,'' \emph{The American Mathematical Monthly}, vol. 113, no.~10, pp.
  933--935, Dec. 2006.

\bibitem{Iwasawa1949}
K.~Iwasawa, ``On some types of topological groups,'' \emph{Annals of
  Mathematics}, vol.~50, no.~3, pp. 507--558, Jul. 1949.

\bibitem{SuttonB2018}
R.~S. Sutton and A.~G. Barto, \emph{Reinforcement Learning: An
  Introduction}.\hskip 1em plus 0.5em minus 0.4em\relax MIT Press, 2018.

\bibitem{VermorelM2005}
J.~Vermorel and M.~Mohri, ``Multi-armed bandit algorithms and empirical
  evaluation,'' in \emph{Proceedings of the European Conference on Machine
  Learning}, Oct. 2005, pp. 437--448.

\bibitem{Lin1993}
L.-J. Lin, ``Reinforcement learning for robots using neural networks,'' Ph.D.
  dissertation, Carnegie Mellon University, Jan. 1993.

\bibitem{AdamBB2011}
S.~Adam, L.~Busoniu, and R.~Babuska, ``Experience replay for real-time
  reinforcement learning control,'' \emph{IEEE Transactions on Systems, Man,
  and Cybernetics, Part C}, vol.~42, no.~2, pp. 201--212, Mar. 2011.

\bibitem{KindratenkoMZMHRXCPG2020}
V.~Kindratenko, D.~Mu, Y.~Zhan, J.~Maloney, S.~H. Hashemi, B.~Rabe, K.~Xu,
  R.~Campbell, J.~Peng, and W.~Gropp, ``{HAL}: Computer system for scalable
  deep learning,'' in \emph{Practice and Experience in Advanced Research
  Computing (PEARC '20)}, Jul. 2020, pp. 41--48.

\bibitem{KrizhevskyH2009}
A.~Krizhevsky, ``Learning multiple layers of features from tiny images,''
  \emph{Master's thesis, University of Toronto}, 2009.

\bibitem{CohenATV2017}
G.~Cohen, S.~Afshar, J.~Tapson, and A.~Van~Schaik, ``Emnist: Extending mnist to
  handwritten letters,'' in \emph{2017 International Joint Conference on Neural
  Networks (IJCNN)}.\hskip 1em plus 0.5em minus 0.4em\relax IEEE, 2017, pp.
  2921--2926.

\bibitem{NetzerWCBWN2011}
Y.~Netzer, T.~Wang, A.~Coates, A.~Bissacco, B.~Wu, and A.~Y. Ng, ``Reading
  digits in natural images with unsupervised feature learning,'' in
  \emph{{NIPS} Workshop on Deep Learning and Unsupervised Feature Learning},
  Dec. 2011.

\bibitem{LarochelleECABB2007}
H.~Larochelle, D.~Erhan, A.~Courville, J.~Bergstra, and Y.~Bengio, ``An
  empirical evaluation of deep architectures on problems with many factors of
  variation,'' in \emph{Proceedings of the 24th international conference on
  Machine learning}, 2007, pp. 473--480.

\bibitem{tecperson}
\BIBentryALTinterwordspacing
tecperson, ``Sign language mnist: Drop-in replacement for mnist for hand
  gesture recognition tasks,'' 2017. [Online]. Available:
  \url{https://www.kaggle.com/datamunge/sign-language-mnist}
\BIBentrySTDinterwordspacing

\bibitem{ClanuwatBKLYH2018}
T.~Clanuwat, M.~Bober-Irizar, A.~Kitamoto, A.~Lamb, K.~Yamamoto, and D.~Ha,
  ``Deep learning for classical japanese literature,'' arXiv:1812.01718
  [cs.CV]., Dec. 2018.

\bibitem{VarshneyKS2019}
L.~R. Varshney, N.~S. Keskar, and R.~Socher, ``Pretrained {AI} models:
  Performativity, mobility, and change,'' arXiv:1909.03290 [cs.CY]., Sep. 2019.

\bibitem{Qian1999}
N.~Qian, ``On the momentum term in gradient descent learning algorithms,''
  \emph{Neural networks}, vol.~12, no.~1, pp. 145--151, 1999.

\bibitem{CubukZMVL2019}
E.~D. Cubuk, B.~Zoph, D.~Mane, V.~Vasudevan, and Q.~V. Le, ``{AutoAugment}:
  Learning augmentation strategies from data,'' in \emph{Proceedings of the
  IEEE/CVF Conference on Computer Vision and Pattern Recognition}, Jun. 2019,
  pp. 113--123.

\end{thebibliography}
\bibliographystyle{IEEEtran}
\section*{Checklist}

\appendix

\section{Background on Groups}\label{sec: groups}
Here we provide basic definitions on groups and group actions.
\begin{definition}[Group] \label{def: group}
A group $(G,\circ)$ is a set $G$ equipped with a binary operator $\circ$ that satisfies the following axioms.
\begin{itemize}
\item{Closure:} For any $g_1,g_2 \in G$, $g_1\circ g_2 \in G$. 
\item{Associativity:} For any $g_1,g_2,g_3 \in G$, $g_1\circ (g_2\circ g_3) = (g_1\circ g_2)\circ g_3$. 
\item{Identity}: There exists an $e \in G$, such that $g \circ e = e \circ g = g$ for all $g \in G$.
\item{Inverse:} For any $g \in G$, there is a corresponding $g^{-1}$, such that $g \circ g^{-1} = g^{-1} \circ g = e$.
\end{itemize}
\end{definition}
When clear from context, we omit the symbol $\circ$ and write $g_1 \circ g_2$ as $g_1g_2$. A group is called an abelian group if the group operation commutes.
Next we define a homomorphism between two groups that is useful to define group actions.
\begin{definition}[Sym$(\mathcal{X})$]
For any set $\mathcal{X}$, Sym($\mathcal{X}$) is 
the group of all bijective maps from $\mathcal{X}$ to itself. The group operation is composition of maps and the identity element is the identity map on $\mathcal{X}$.
\end{definition}
\begin{definition}[Homomorphism]\label{def: homomorphism}
A map between groups $\Gamma: G \mapsto H$ is called a \emph{(group) homomorphism} if it preserves group multiplication, $\Gamma(g_1g_2) = \Gamma(g_1)\Gamma(g_2)$.
\end{definition}
\begin{definition}[Group action]\label{def: group_action}
An \emph{action} of a group $G$ on a set $\mathcal{X}$ is defined as a homomorphism $\Gamma:G \rightarrow$ Sym$(\mathcal{X})$. We will denote the image of $g\in G$ under $\Gamma$ as $\Gamma_g$.
\end{definition}
A bijective homomorphism from a group to itself is called an automorphism.

\section{Datasets}\label{subsubsec: fc_datasets}
We consider a set of 12 group transformations of which 5 are real group transformations: rotation, horizontal and vertical flips, horizontal and vertical translations, and the remaining 7 are synthetic group symmetries used to scramble the data described in Tab.~\ref{tab: individual_augmentations}. We illustrate samples from G-MNIST and G-Fashion-MNIST with these 12 symmetries applied individually in Fig.~\ref{fig: I_G_MNIST} and Fig.~\ref{fig: I_G_Fasion_MNIST} respectively. Similarly, we also apply a combination of 5 subsets of these transformations shown in Tab.~\ref{tab: augmentations} to obtain the samples from G-MNIST and G-Fashion-MNIST shown in Fig.~\ref{fig: Aug_G_MNIST} and Fig.~\ref{fig: Aug_G_Fashion_MNIST}. The data we are providing does not contain personally identifiable information or offensive content.

\begin{figure*}%
    \centering
    \subfloat[Original]{{\includegraphics[width=3.2cm]{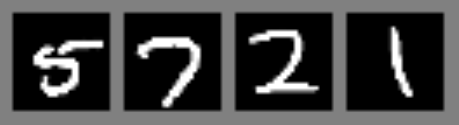} }\label{fig: surprise_figure_k}}\\
    \subfloat[Rotations]{{\includegraphics[width=3.2cm]{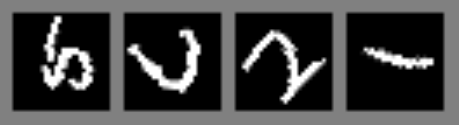} } \label{fig: surprise_figure_p}}%
    \subfloat[Horizontal flips]{{\includegraphics[width=3.2cm]{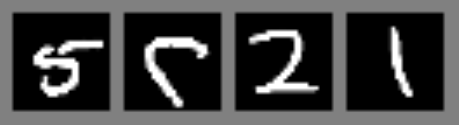} } \label{fig: surprise_figure_t}}%
    \subfloat[Vertical flips]{{\includegraphics[width=3.2cm]{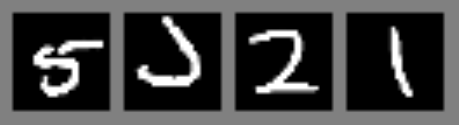} } \label{fig: surprise_figure_t}}
    \subfloat[Horizontal translations]{{\includegraphics[width=3.2cm]{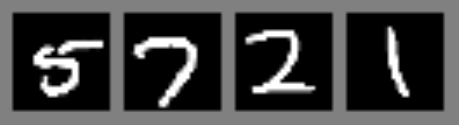} }\label{fig: surprise_figure_k}}\\
    \subfloat[Vertical translations]{{\includegraphics[width=3.2cm]{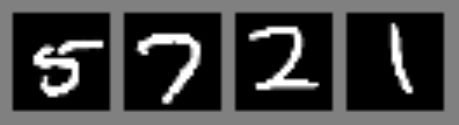} } \label{fig: surprise_figure_p}}%
    \subfloat[Rotation scrambles]{{\includegraphics[width=3.2cm]{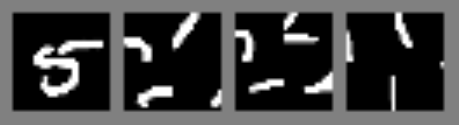} } \label{fig: surprise_figure_t}}%
    \subfloat[Horizontal scrambles]{{\includegraphics[width=3.2cm]{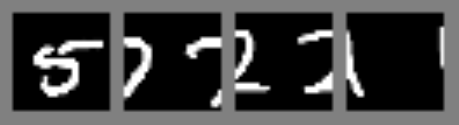} } \label{fig: surprise_figure_t}}%
    \subfloat[Vertical scrambles]{{\includegraphics[width=3.2cm]{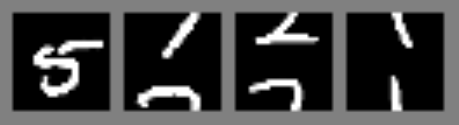} }\label{fig: surprise_figure_k}}\\
    \subfloat[Left vertical scrambles]{{\includegraphics[width=3.2cm]{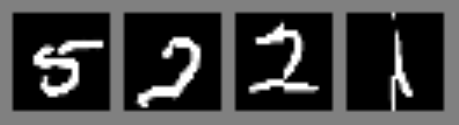} } \label{fig: surprise_figure_p}}%
    \subfloat[Right vertical scrambles]{{\includegraphics[width=3.2cm]{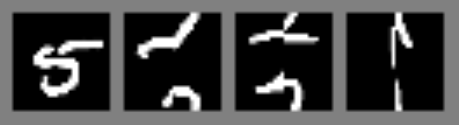} } \label{fig: surprise_figure_t}}%
    \subfloat[Top horizontal scrambles]{{\includegraphics[width=3.2cm]{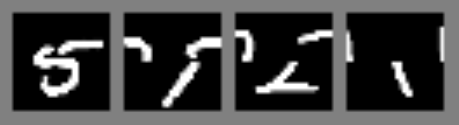} } \label{fig: surprise_figure_t}}%
    \subfloat[Bottom horizontal scrambles]{{\includegraphics[width=3.2cm]{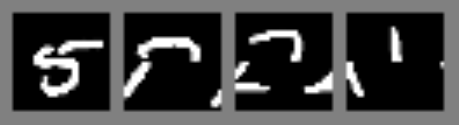} } \label{fig: surprise_figure_t}}%
    \vspace{-3mm}
    \caption{Samples of the digits `5',`7',`2',`1' from the G-MNIST dataset with randomly applied single group transformations.}%
    \label{fig: I_G_MNIST}
\end{figure*}

\begin{figure*}%
    \centering
    \subfloat[Original]{{\includegraphics[width=3.2cm]{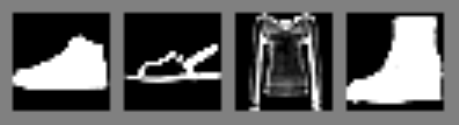} }\label{fig: surprise_figure_k}}\\
    \subfloat[Rotations]{{\includegraphics[width=3.2cm]{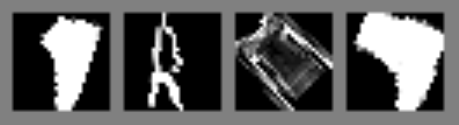} } \label{fig: surprise_figure_p}}%
    \subfloat[Horizontal flips]{{\includegraphics[width=3.2cm]{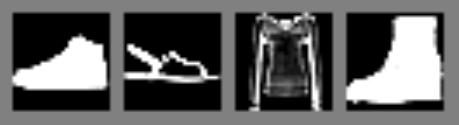} } \label{fig: surprise_figure_t}}%
    \subfloat[Vertical flips]{{\includegraphics[width=3.2cm]{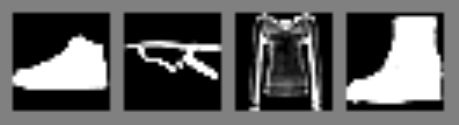} } \label{fig: surprise_figure_t}}
    \subfloat[Horizontal translations]{{\includegraphics[width=3.2cm]{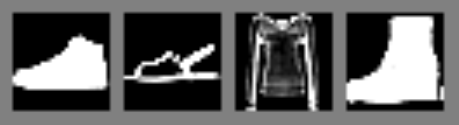} }\label{fig: surprise_figure_k}}\\
    \subfloat[Vertical translations]{{\includegraphics[width=3.2cm]{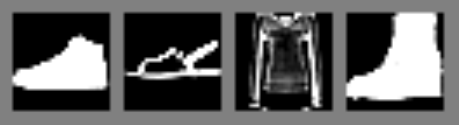} } \label{fig: surprise_figure_p}}%
    \subfloat[Rotation scrambles]{{\includegraphics[width=3.2cm]{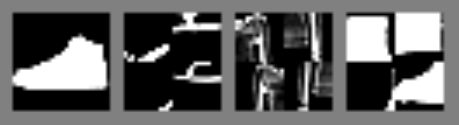} } \label{fig: surprise_figure_t}}%
    \subfloat[Horizontal scrambles]{{\includegraphics[width=3.2cm]{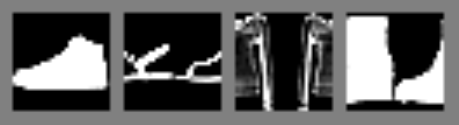} } \label{fig: surprise_figure_t}}%
    \subfloat[Vertical scrambles]{{\includegraphics[width=3.2cm]{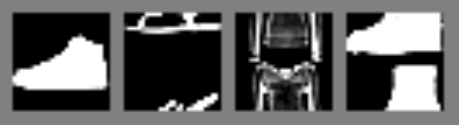} }\label{fig: surprise_figure_k}}\\
    \subfloat[Left vertical scrambles]{{\includegraphics[width=3.2cm]{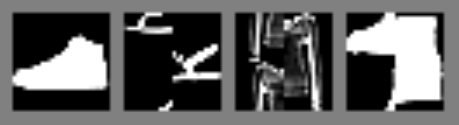} } \label{fig: surprise_figure_p}}%
    \subfloat[Right vertical scrambles]{{\includegraphics[width=3.2cm]{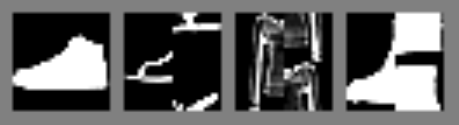} } \label{fig: surprise_figure_t}}%
    \subfloat[Top horizontal scrambles]{{\includegraphics[width=3.2cm]{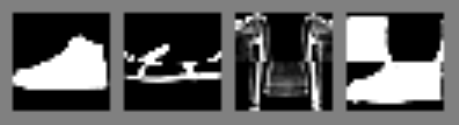} } \label{fig: surprise_figure_t}}%
    \subfloat[Bottom horizontal scrambles]{{\includegraphics[width=3.2cm]{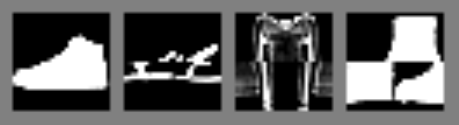} } \label{fig: surprise_figure_t}}%
    \vspace{-3mm}
    \caption{Samples of the items `Sneaker', `Sandal', `Pullover', `Ankle boot' from the G-Fashion-MNIST with randomly applied single group transformations.}%
    \label{fig: I_G_Fasion_MNIST}
\end{figure*}

\begin{table}
\caption{Description of individual transformations. These transformations and their combinations were used to construct the G-MNIST and G-Fashion-MNIST datasets. }
    \centering
    \begin{tabular}{|p{2cm}| p{5cm}|}
    \hline
              \textbf{} & Augmentation descriptions \\
              \hline
        \textbf{IAug0} & Original \\
        \textbf{IAug1} & Rotations \\
        \textbf{IAug2} & Horizontal flips \\
        \textbf{IAug3} & Vertical flips \\
        \textbf{IAug4} & Horizontal translations \\
        \textbf{IAug5} & Vertical translations \\
        \textbf{IAug6} & Rotation scrambles \\
        \textbf{IAug7} & Horizontal scrambles \\
        \textbf{IAug8} & Vertical scrambles \\
        \textbf{IAug9} & Left vertical scrambles \\
        \textbf{IAug10} & Right vertical scrambles \\
        \textbf{IAug11} & Top horizontal scrambles \\
        \textbf{IAug12} & Bottom horizontal scrambles \\
        \hline
    \end{tabular}
    \label{tab: individual_augmentations}
\end{table}

\begin{table}
\centering
\caption{Augmentation arrays showing the groups contained in each augmentation array. Here, 1 in $i$th row and $j$th column indicates the presence of the $j$th group in the array Aug$i$.}
    \centering
    \begin{tabular}{|p{0.7cm}| p{0.17cm} p{0.17cm} p{0.17cm} p{0.17cm} p{0.17cm} p{0.17cm} p{0.17cm} p{0.17cm} p{0.17cm} p{0.17cm} p{0.17cm} p{0.17cm}|}
    \hline
              \textbf{}  & 1 & 2 & 3 & 4 & 5 & 6 & 7 & 8 & 9 & 10 & 11 & 12 \\
              \hline
        \textbf{Aug0} & 0 & 0 & 0 & 0 & 0 & 0 & 0 & 0 & 0 & 0 & 0 & 0 \\
        \textbf{Aug1} & 0 & 1 & 1 & 0 & 0 & 0 & 1 & 0 & 0 & 0 & 0 & 1 \\
        \textbf{Aug2} & 1 & 1 & 0 & 0 & 0 & 0 & 0 & 0 & 1 & 1 & 0 & 0 \\
        \textbf{Aug3} & 1 & 0 & 0 & 1 & 0 & 0 & 0 & 1 & 0 & 0 & 1 & 0 \\
        \textbf{Aug4} & 1 & 1 & 1 & 1 & 1 & 1 & 0 & 0 & 0 & 0 & 0 & 0 \\
        \textbf{Aug5} & 1 & 1 & 1 & 1 & 1 & 1 & 1 & 1 & 1 & 1 & 1 & 1 \\
        \hline
    \end{tabular}
    \label{tab: augmentations}
\end{table}

\begin{figure*}%
    \centering
    \subfloat[Original]{{\includegraphics[width=3.2cm]{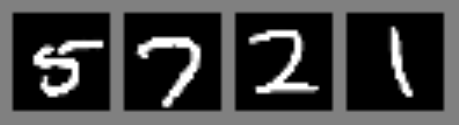} }\label{fig: surprise_figure_k}}
    \subfloat[Aug1]{{\includegraphics[width=3.2cm]{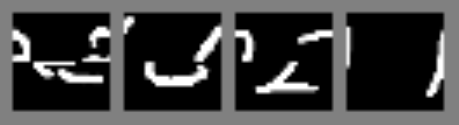} } \label{fig: surprise_figure_p}}%
    \subfloat[Aug2]{{\includegraphics[width=3.2cm]{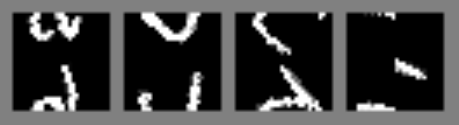} } \label{fig: surprise_figure_t}}%
    \subfloat[Aug3]{{\includegraphics[width=3.2cm]{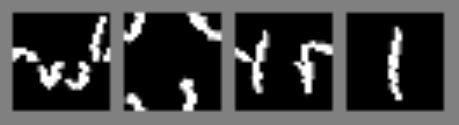} } \label{fig: surprise_figure_t}}\\
    \subfloat[Aug4]{{\includegraphics[width=3.2cm]{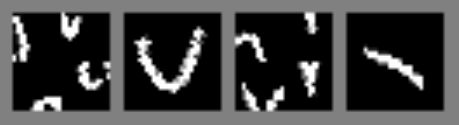} }\label{fig: surprise_figure_k}}
    \subfloat[Aug5]{{\includegraphics[width=3.2cm]{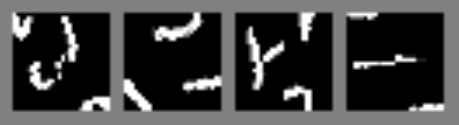} } \label{fig: surprise_figure_p}}%
    \vspace{-3mm}
    \caption{Samples of the digits `5',`7',`2',`1' from the G-MNIST dataset with a combination of group transformations from Tab.~\ref{tab: augmentations} randomly applied.}%
    \label{fig: Aug_G_MNIST}
\end{figure*}

\begin{figure*}%
    \centering
    \subfloat[Original]{{\includegraphics[width=3.2cm]{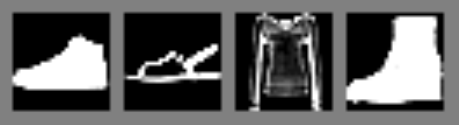} }\label{fig: surprise_figure_k}}
    \subfloat[Aug1]{{\includegraphics[width=3.2cm]{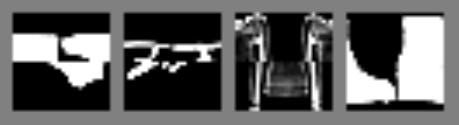} } \label{fig: surprise_figure_p}}%
    \subfloat[Aug2]{{\includegraphics[width=3.2cm]{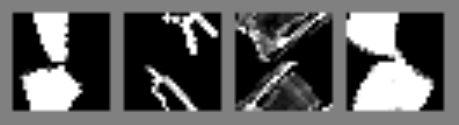} } \label{fig: surprise_figure_t}}%
    \subfloat[Aug3]{{\includegraphics[width=3.2cm]{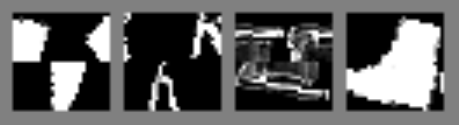} } \label{fig: surprise_figure_t}}\\
    \subfloat[Aug4]{{\includegraphics[width=3.2cm]{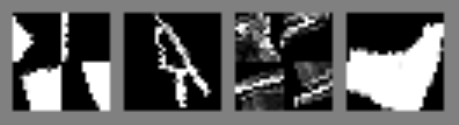} }\label{fig: surprise_figure_k}}
    \subfloat[Aug5]{{\includegraphics[width=3.2cm]{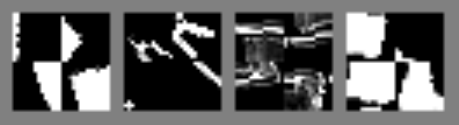} } \label{fig: surprise_figure_p}}%
    \vspace{-3mm}
    \caption{Samples of the items `Sneaker', `Sandal', `Pullover', `Ankle boot' from the G-Fashion-MNIST dataset with a combination of group transformations from Tab.~\ref{tab: augmentations} randomly applied.}%
    \label{fig: Aug_G_Fashion_MNIST}
\end{figure*}
Tab.~\ref{tab: individual_augmentations} provides descriptions for the individual transformations used for constructing the datasets G-MNIST and G-Fashion-MNIST. In Tab.~\ref{tab: individual_augmentations}, the transformations IAug0 to IAug5 are real transformations where as the rest of the them correspond to well defined but synthetic groups. Samples of datasets generated from these groups are shown in Fig.~\ref{fig: I_G_MNIST},~\ref{fig: I_G_Fasion_MNIST}, ~\ref{fig: Aug_G_MNIST}, and ~\ref{fig: Aug_G_Fashion_MNIST}.

Tab.~\ref{tab: MNIST-top5-states} and ~\ref{tab: FashionMNIST-top5-states} shows the top 5 states obtained from the deep Q-learning experiments in Sec.~\ref{subsec: exp_Group_FC_Nets} for G-MNIST and G-Fashion-MNIST datasets respectively.

\begin{table}
\caption{Abbreviations for equivariance and augmentations used for describing top states obtained in deep Q-learning for GCNN. }
    \centering
    \begin{tabular}{|p{2cm}| p{6cm}|}
    \hline
              \textbf{} & Equivariance or Augmentation descriptions \\
              \hline
        \textbf{P4} & 90\textdegree-rotation symmetry equivariance \\
        \textbf{H2} & Horizontal symmetry equivariance  \\
        \textbf{V2} & Vertical symmetry equivariance \\
        \textbf{Zn} & Translation symmetry equivariance\\
        \textbf{Rot (30)} & Random rotation between +30\textdegree and -30 \textdegree \\
        \textbf{HFlip (0.5)} & Horizontal flips with prob 0.5 \\
        \textbf{VFlip (0.5)} & Vertical flips with prob 0.5 \\
        \textbf{Trans (0.2)} & Random translations of 0.2 times the image size  \\
        \textbf{Scale (0.1)} & Random upscaling or downscaling of images by 10\% size \\
        \textbf{Shear (10)} & Random shear transformations of 10\textdegree\\
        \hline
    \end{tabular}
    \label{tab: equivariances_augmentations_GCNN}
\end{table}

\section{Basic Equivariant MLP construction Algorithm}\label{sec: basic_algorithm}
Alg.~2 is the algorithmic version of the equivariant construction algorithm using parameter sharing in \eqref{eqn: single_group_equivariance}.
\begin{algorithm}
\begin{algorithmic}
\FOR{$i \in \mathbf{L}$}
    \IF{$\mathbf{V}[i]<0$}
    $C = C+1$ 
        \FOR{$g \in \mathbf{G}$}
            \STATE $index_g = \Gamma_g(i)$
            \STATE $I[index_g] = C$
        \ENDFOR
        \ENDIF
\ENDFOR
\Return $\mathbf{I}$,C
\end{algorithmic}
 \caption{ Basic Equivariant MLP Construction \cite{RavanbakhshSP2017}}
 \label{alg: basic-equivariant-network-construction}
\end{algorithm}

\section{Training Details}\label{sec: training_details}
Here we provide training details used in experiments in Sec.~\ref{sec: experiments}.
\subsection{Group Equivariant MLPs}\label{sec: training_details_gemlp}
\paragraph{Single Equivariance Testing}
We train the networks for single equivariance testing for datasets of size 10K for 10 epochs using stochastic gradient descent with momentum \cite{Qian1999} with learning rate $10^{-3}$ and report the test accuracies on datasets of size 1K in Tab.~\ref{tab: hypothesis_testing_IAug},~\ref{tab: fashionMNIST_hypothesis_testing_IAug} in Sec.~\ref{sec: additional_experiments}.
\paragraph{Deep Q-learning}
For the deep learning part, the child group equivariant MLPs were trained on mini datasets of size 4K with batchsize 64 for 4 epochs using stochastic gradient descent with momentum \cite{Qian1999} with a learning rate of $10^{-3}$ and tested on mini datasets of size 1K to obtain the reward using \eqref{eqn: fes_reward}.
Once the DQNs are trained, we obtain the top states for each case in the rows of Tab.~\ref{tab: MNIST_hypothesis_testing_Aug} and Tab.~\ref{tab: FashionMNIST_hypothesis_testing_Aug} with maximum rewards. Then we retrain equivariant networks with these states on datasets of size 10K for 10 epochs and report the accuracies obtained in Tab.~\ref{tab: hypothesis_testing_Aug_summary}, Tab.~\ref{tab: MNIST_hypothesis_testing_Aug} and Tab.~\ref{tab: FashionMNIST_hypothesis_testing_Aug} under the column AEN. Further, we also provide the top states obtained in each case and  the number of parameters in them for general interest in Tab.~\ref{tab: MNIST-top5-states} and Tab.~\ref{tab: FashionMNIST-top5-states} for G-MNIST and G-Fashion-MNIST datasets in Sec.~\ref{sec: additional_experiments}, respectively. 
\subsection{Group Equivariant Convolutional Neural Networks}\label{sec: training_details_gcnn}
For the deep Q-learning part, the GCNNs were trained on mini datasets of size 6K with batchsize 32 for 4 epochs using stochastic gradient descent with momentum with a learning rate of $10^{-3}$ and tested on mini datasets of size 1K.

\section{Additional Results}\label{sec: additional_experiments}
Here we provide complete set of experimental results, summary of which is given in Sec.~\ref{sec: experiments}.
\begin{table*}
\caption{Test accuracies for equivariant networks for data with single augmentations from G-MNIST. Transformation IAug$i$ in the row correspond to the single transformations from Tab.~\ref{tab: individual_augmentations} and the columns indicate equivariant networks with Eq$i$ having equivariance to transformation IAug$i$.}
    \centering
    \begin{tabular}{p{1cm}| p{0.5cm} p{0.5cm} p{0.5cm} p{0.5cm} p{0.5cm} p{0.5cm} p{0.5cm} p{0.5cm} p{0.5cm} p{0.5cm} p{0.6cm} p{0.6cm} p{0.6cm}}
              \textbf{}  & Eq0 & Eq1 & Eq2 & Eq3 & Eq4 & Eq5 & Eq6 & Eq7 & Eq8 & Eq9 & Eq10 & Eq11 & Eq12 \\
              \hline
        \textbf{IAug0} & 95.9& \textbf{96.7}& 95.8& 96.4& 83.2& 65.7& 95.8& 95.3& 96.1& 96.4& 96.1& 96.0& 96.6 \\
        \textbf{IAug1} & 83.5& \textbf{87.2}& 85.6& 82.5& 50.1& 49.1& 84.1& 83.9& 81.8& 81.8& 83.2& 83.6& 85.4\\
        \textbf{IAug2} &91.2& 93.0& 93.0& 93.4& 81.9& 59.1& 92.9& 92.6& 92.1& 91.9& 91.5& \textbf{93.8}& 92.9 \\
        \textbf{IAug3} &92.3& 92.7& \textbf{93.2}& 93.2& 79.2& 62.9& 93.0& 91.8& 91.6& 91.9& 92.5& 92.7& 92.9  \\
        \textbf{IAug4} &95.6& 95.9& 96.2& \textbf{96.6}& 81.6& 64.8& 96.3& 95.4& 95.3& 96.4& 95.5& 96.1& 96.4  \\
        \textbf{IAug5} &  95.7& 96.2& 96.0& 96.4& 81.0& 64.8& 96.1& 95.3& 95.6& 96.2& 96.0& 96.0& \textbf{97.0}\\
        \textbf{IAug6} &94.2& 94.4& 93.9& 93.5& 84.8& 72.3& 94.3& 93.8& 93.6& 94.0& 94.7& \textbf{95.3}& 94.8  \\
        \textbf{IAug7} &94.2& 94.7& 94.8& 94.1& 81.8& 62.3& 94.4& 93.9& 93.2& 95.8& 95.4& 94.5& \textbf{95.9} \\
        \textbf{IAug8} &94.4& 94.8& 94.4& 94.1& 77.1& 61.1& \textbf{95.2}& 94.1& 94.8& 93.3& 93.3& 94.7& 94.3  \\
        \textbf{IAug9} & 94.6& 95.2& 93.9& 94.8& 83.7& 64.9& 94.9& \textbf{95.5}& 95.1& 94.2& 94.1& 94.5& 94.7 \\
        \textbf{IAug10} &94.6& 95.6& 95.2& 95.5& 83.5& 63.4& 94.9& 94.5& \textbf{96.4}& 95.3& 95.2& 95.2& 95.4  \\
        \textbf{IAug11} &  95.6& 95.1& 95.6& 94.1& 82.1& 72.0& 95.2& 94.8& 95.1& 95.6& 95.1& 95.2& \textbf{96.3}\\
        \textbf{IAug12} &95.1& 95.4& 95.4& 95.8& 81.5& 71.5& 95.2& 95.2& 95.6& 94.3& \textbf{96.3}& 95.1& 95.9  \\
    \end{tabular}
    \label{tab: hypothesis_testing_IAug}
\end{table*}

\begin{table*}
\caption{Test accuracies for equivariant networks for data with single augmentations from G-Fashion-MNIST. Transformation IAug$i$ in the row correspond to the single transformations from Tab.~\ref{tab: individual_augmentations} and the columns indicate equivariant networks with Eq$i$ having equivariance to transformation IAug$i$.}
    \centering
    \begin{tabular}{p{1cm}| p{0.5cm} p{0.5cm} p{0.5cm} p{0.5cm} p{0.5cm} p{0.5cm} p{0.5cm} p{0.5cm} p{0.5cm} p{0.5cm} p{0.6cm} p{0.6cm} p{0.6cm}}
              \textbf{}  & Eq0 & Eq1 & Eq2 & Eq3 & Eq4 & Eq5 & Eq6 & Eq7 & Eq8 & Eq9 & Eq10 & Eq11 & Eq12 \\
              \hline
        \textbf{IAug0} &84.8& \textbf{85.9}& 84.6& 84.8& 73.4& 76.4& 85.5& 84.1& 85.4& 85.2& 84.1& 85.4& 85.5 \\
        \textbf{IAug1} &66.3& \textbf{72.0}& 69.9& 68.9& 49.1& 48.0& 66.1& 65.8& 66.4& 64.7& 66.0& 66.2& 66.3\\
        \textbf{IAug2} &83.8& 84.3& \textbf{85.4}& 84.5& 73.5& 74.6& 84.7& 83.9& 84.0& 82.7& 83.3& 84.2& 83.9\\
        \textbf{IAug3} &81.8& 82.8& 82.6& \textbf{84.0}& 71.5& 74.2& 82.7& 82.2& 81.8& 82.5& 81.9& 82.1& 83.2\\
        \textbf{IAug4} &84.8& 85.0& 84.3& \textbf{85.2}& 72.6& 72.7& 84.9& 84.2& 83.6& 83.7& 84.3& 83.3& 84.6\\
        \textbf{IAug5} &84.3& 84.3& 83.9& 84.2& 71.6& 74.4& \textbf{85.0}& 84.0& 83.4& 84.4& 83.9& 84.5& 84.9\\
        \textbf{IAug6} &84.6& 83.3& 84.6& \textbf{84.9}& 71.6& 76.9& 84.2& 84.0& 83.8& 84.3& 83.7& 84.5& 84.0\\
        \textbf{IAug7} &83.9& 84.4& 83.3& 85.1& 70.3& 72.6& 84.7& 84.8& 84.1& 84.4& 84.6& 84.7& \textbf{86.8}\\
        \textbf{IAug8} &82.6& 84.1& 83.1& 83.8& 69.5& 75.4& \textbf{85.3}& 85.2& 84.8& 82.7& 82.8& 84.2& 83.1\\
        \textbf{IAug9} &84.7& 84.0& 84.2& \textbf{85.2}& 74.8& 76.0& 85.0& 84.7& 83.7& 84.8& 83.9& 84.2& 84.8\\
        \textbf{IAug10} &\textbf{84.6}& 84.2& 83.2& 83.8& 72.3& 74.8& 84.3& 83.7& 84.0& 84.3& 83.5& 83.7& 83.9\\
        \textbf{IAug11} &\textbf{85.7}& 84.2& 84.4& 84.6& 72.8& 74.6& 85.6& 85.5& 84.7& 84.6& 84.6& 84.7& 84.8\\
        \textbf{IAug12} &84.3& 84.4& 84.3& \textbf{85.0}& 73.8& 75.8& 84.9& 84.4& 83.9& 84.3& 84.1& 84.7& 84.5\\
    \end{tabular}
    \label{tab: fashionMNIST_hypothesis_testing_IAug}
\end{table*}

\begin{table*}
\caption{Test accuracies for equivariant networks for data with multiple augmentations from G-MNIST. Transformation Aug$i$ in the row correspond to the array of transformations from Tab.~\ref{tab: augmentations} and the columns indicate equivariant networks with Eq$i$ having equivariance to transformation IAug$i$ from Tab.~\ref{tab: individual_augmentations}.}
    \centering
    \begin{tabular}{p{0.6cm}| p{0.5cm} p{0.5cm} p{0.5cm} p{0.5cm} p{0.5cm} p{0.5cm} p{0.5cm} p{0.5cm} p{0.5cm} p{0.5cm} p{0.5cm} p{0.5cm} p{0.5cm} p{0.5cm}}
              \textbf{}  & Eq0 & Eq1 & Eq2 & Eq3 & Eq4 & Eq5 & Eq6 & Eq7 & Eq8 & Eq9 & Eq10 & Eq11 & Eq12 & AEN \\
              \hline
        \textbf{Aug0}&95.9& \textbf{96.7}& 95.8& 96.4& 83.2& 65.7& 95.8& 95.3& 96.1& 96.4& 96.1& 96.0& 96.6& 96.0 \\
        \textbf{Aug1}& 85.7 & 86.7& 87.5& 88.4& 78.2& 54.2& 86.5& 84.1& 84.3& 87.1& 87.7& 83.4& 87.1&\textbf{91.4} \\
        \textbf{Aug2}&66.9& 71.9& 72.0& 70.6& 39.9& 48.1& 71.7& 70.4& 71.5& 65.0& 67.5& 66.4& 67.3& \textbf{77.5}\\
        \textbf{Aug3}&73.7& 73.9& 73.9& 72.1& 46.3& 41.4& 72.7& 69.4& 70.1& 75.3& 68.1& 71.8& 73.2& \textbf{77.4}\\
        \textbf{Aug4}&69.5& \textbf{73.9}& 69.2& 68.6& 44.2& 44.3& 67.7& 66.8& 66.1& 67.1& 66.6& 69.8& 66.8& 72.9\\
        \textbf{Aug5}&51.0& 57.6& 54.4& 55.8& 38.6& 37.8& 54.8& 54.3& 53.9& 54.8& 55.2& 62.2& 56.2& \textbf{69.4}\\
    \end{tabular}
    \label{tab: MNIST_hypothesis_testing_Aug}
\end{table*}

\begin{table*}
\caption{Test accuracies for equivariant networks for data with multiple augmentations from G-Fashion-MNIST. Transformation Aug$i$ in the row correspond to the array of transformations from Tab.~\ref{tab: augmentations} and the columns indicate equivariant networks with Eq$i$ having equivariance to transformation IAug$i$ from Tab.~\ref{tab: individual_augmentations}.}
    \centering
    \begin{tabular}{p{0.65cm}| p{0.5cm} p{0.5cm} p{0.5cm} p{0.5cm} p{0.5cm} p{0.5cm} p{0.5cm} p{0.5cm} p{0.5cm} p{0.5cm} p{0.5cm} p{0.5cm} p{0.5cm} p{0.5cm}}
              \textbf{}  & Eq0 & Eq1 & Eq2 & Eq3 & Eq4 & Eq5 & Eq6 & Eq7 & Eq8 & Eq9 & Eq10 & Eq11 & Eq12 & AEN \\
              \hline
        \textbf{Aug0}  &84.8& \textbf{85.7}& 84.6& 84.8& 73.4& 76.4& 85.5& 84.1& 85.4& 85.2& 84.1& 85.4& 85.5 & \textbf{85.7} \\
        \textbf{Aug1}  &77.3& 78.5& 78.0& 79.8& 65.6& 68.5& 78.5& 77.4& 77.7& 78.7& 79.6& 77.9& 79.4 & \textbf{80.3}\\
        \textbf{Aug2}  & 60.1& 61.8& 60.1& 63.8& 41.1& 47.2& 63.0& 60.9& 61.9& 59.5& 58.3& 60.7& 59.6 & \textbf{64.4}\\
        \textbf{Aug3}  &58.0& 59.8& 58.2& 57.8& 43.5& 41.2& 58.0& 59.8& 58.8& 60.0& 57.7& 59.7& 57.1 &\textbf{63.4}\\
        \textbf{Aug4}  &59.4& 60.3& 60.5& 59.1& 40.8& 39.2& 59.8& 60.1& 58.7& 60.1& 58.1& 59.8& 60.1 &\textbf{60.9}\\
        \textbf{Aug5}  &50.8& 54.6& 53.1& 52.5& 36.6& 34.6& 53.8& 54.1& 53.3& 53.5& 52.6& 57.8& 52.3 & \textbf{60.4}\\
    \end{tabular}
    \label{tab: FashionMNIST_hypothesis_testing_Aug}
\end{table*}

\begin{table*}
\caption{Top group symmetries obtained from deep Q-learning corresponding to different combination of transformations in the G-MNIST dataset. Each row correspond to an array of group symmetry, where a 1 in the $i$th column indicates the presence of equivariance to IAug$i$ and $0$ indicates the absence of it.}
    \centering
    \begin{tabular}{|p{0.7cm}| p{0.15cm} p{0.15cm} p{0.15cm} p{0.15cm} p{0.15cm} p{0.15cm} p{0.15cm} p{0.15cm} p{0.15cm} p{0.15cm} p{0.15cm} p{0.15cm}|p{3.5cm}|}
    \hline
              \textbf{}  & 1 & 2 & 3 & 4 & 5 & 6 & 7 & 8 & 9 & 10 & 11 & 12 & \# parameters (millions)\\
              \hline
        \textbf{Aug0} & 0& 1& 0& 0& 0& 0& 0& 0& 0& 1& 0& 0& 0.15 \\
        \textbf{Aug1} & 0& 1& 1& 0& 0& 0& 0& 0& 0& 1& 0& 1& 0.04 \\
        \textbf{Aug2} & 0& 1& 1& 0& 0& 0& 1& 0& 0& 0& 0& 0& 0.04 \\
        \textbf{Aug3} & 1& 0& 0& 0& 0& 0& 0& 1& 1& 0& 1& 1& 0.018 \\
        \textbf{Aug4} & 1& 0& 0& 0& 0& 0& 0& 0& 0& 0& 1& 0& 0.03 \\
        \textbf{Aug5} & 1& 0& 0& 0& 0& 1& 0& 1& 0& 0& 1& 0& 0.018 \\
        \hline
    \end{tabular}
    \label{tab: MNIST-top5-states}
\end{table*}

\begin{table*}
\caption{Top group symmetries obtained from deep Q-learning corresponding to different combination of transformations in the G-Fashion-MNIST dataset. Each row correspond to an array of group symmetry, where a 1 in the $i$th column indicates the presence of equivariance to IAug$i$ and $0$ indicates the absence of it.}
    \centering
    \begin{tabular}{|p{0.7cm}| p{0.15cm} p{0.15cm} p{0.15cm} p{0.15cm} p{0.15cm} p{0.15cm} p{0.15cm} p{0.15cm} p{0.15cm} p{0.15cm} p{0.15cm} p{0.15cm}|p{3.5cm}|}
    \hline
              \textbf{}  & 1 & 2 & 3 & 4 & 5 & 6 & 7 & 8 & 9 & 10 & 11 & 12 & \# parameters (millions)\\
              \hline
        \textbf{Aug0} & 1& 0& 0& 0& 0& 0& 0& 0& 0& 0& 0& 0& 0.12 \\
        \textbf{Aug1} & 0& 0& 1& 0& 0& 0& 0& 0& 0& 0& 0& 1& 0.12 \\
        \textbf{Aug2} & 0& 1& 0& 0& 0& 1& 0& 0& 0& 0& 0& 0& 0.12 \\
        \textbf{Aug3} & 0& 0& 1& 0& 0& 1& 0& 0& 0& 1& 0& 0& 0.04 \\
        \textbf{Aug4} & 1& 1& 0& 0& 0& 0& 0& 0& 0& 0& 0& 0& 0.06 \\
        \textbf{Aug5} & 1& 0& 1& 0& 0& 1& 0& 0& 0& 0& 1& 0& 0.015 \\
                      \hline
    \end{tabular}
    \label{tab: FashionMNIST-top5-states}
\end{table*}

\begin{table*}
\caption{Top states obtained from deep Q-learning corresponding to equivariances, augmentations, and size used in training a GCNN on several image datasets and comparison with traditional translational (Zn) equivariant CNNs.}
    \centering
    \begin{tabular}{|p{1.9cm}| p{2.5cm}| p{4cm}| p{1cm}| p{2.2cm}|}
    \hline

        \textbf{Dataset} & Equivariances & Augmentations & Size & Test accuracies \\
        \hline
                         & (P4, H2, V2, Zn) & (HFlip (0.5), Scale (0.1))  & Large & $65.25\%$\\
                         & (P4, H2, V2, Zn) & --- & Large & $63.06\%$\\
        \textbf{CIFAR10} & (P4, H2, V2, Zn) & (HFlip (0.5)) & Large & $63.84\%$\\
                         & (P4, H2, V2, Zn) & (Shear (10)) & Large & $62.55\%$\\
                         & (Zn) & --- & Large & $55.17\%$ \\
        \hline
                         & (P4, H2, V2, Zn) & (Scale (0.1), Shear (10)) & Small & $87.91\%$\\
                         & (P4, H2, V2, Zn) & --- & Large & $89.70\%$ \\
        \textbf{SVHN}    & (P4, H2, V2, Zn) & (Shear (10)) & Small & $87.81\%$ \\
                         & (P4, H2, V2, Zn) & (Rot (30), Scale (0.1)) & Large & $87.66\%$ \\
                         & (Zn) & --- & Large & $85.72\%$ \\
        \hline
                         & (P4, H2, V2, Zn) & (Rot (30)) & Small & $93.45\%$ \\
                         & (P4, H2, V2, Zn) & --- & Small & $94.05\%$ \\
        \textbf{RotMNIST}& (P4, H2, V2, Zn) & (Shear (10)) & Small & $94.44\%$ \\
                         & (P4, H2, V2, Zn) & (Scale (0.1)) & Small & $94.57\%$ \\
                         & (Zn) & --- & Large & $92.60\%$ \\
        \hline
                         & (P4, H2, V2, Zn) & --- & Small & $92.74\%$ \\
                         & (P4, H2, V2, Zn) & (Scale (0.1)) & Small & $94.21\%$ \\
        \textbf{ASL}     & (P4, H2, V2, Zn) & (Shear (10)) & Small & $90.93\%$ \\
                         & (P4, H2, V2, Zn) & (VFlip (0.5)) & Large & $93.41\%$ \\
                         & (Zn) & --- & Large & $89.33\%$ \\
        \hline
                         & (P4, H2, V2, Zn) & --- & Large & $91.51\%$ \\
                         & (P4, H2, V2, Zn) & --- & Small & $92.26\%$ \\
        \textbf{EMNIST}  & (P4, H2, V2, Zn) & (Rot (30)) & Large & $91.01\%$ \\
                         & (P4, V2, Zn) & (Shear (10)) & Large & $93.99\%$ \\
                         & (Zn) & --- & Large & $92.93\%$ \\
        \hline
                         & (P4, H2, V2, Zn) & (Shear (10)) & Large & $93.59\%$ \\
                         & (P4, H2, V2, Zn) & --- & Large & $93.57\%$ \\
        \textbf{KMNIST}  & (P4, H2, V2, Zn) & (Rot (30), Scale (0.1)) & Large & $88.30\%$ \\
                         & (P4, H2, V2, Zn) & --- & Small & $93.55\%$ \\
                         & (Zn) & --- & Large & $90.94\%$ \\
        \hline
    \end{tabular}
    \label{tab: gnas_gcnn_results}
\end{table*}

\section{Additional Plots}\label{sec: additional_plots}
Here we provide complete set of plots, selected ones are shown in Sec.~\ref{sec: experiments} to summarize and explain the results.
\begin{figure*}%
    \centering
    \subfloat[Aug0]{{\includegraphics[width=4.5cm]{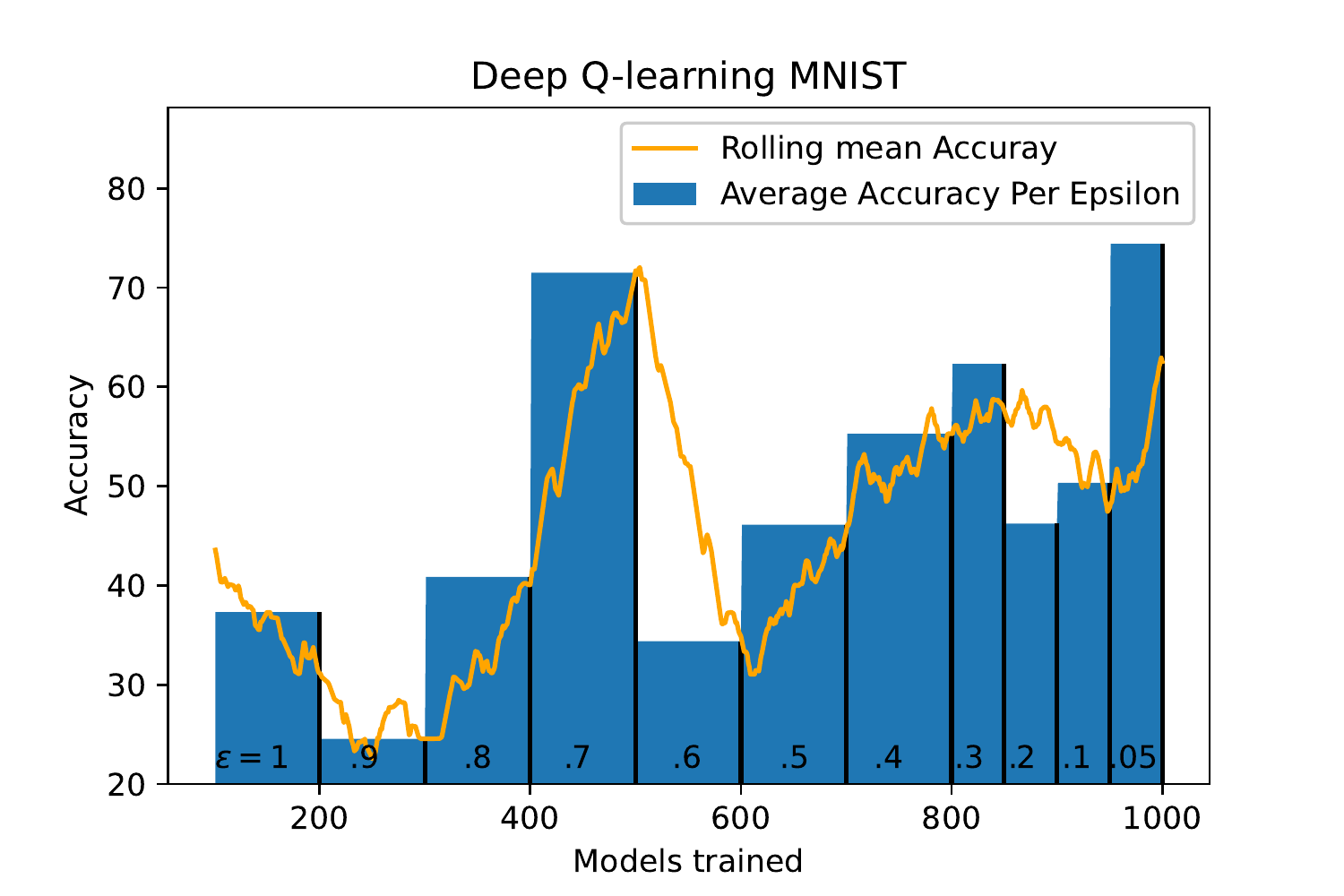} }\label{fig: DQN_MNIST_Aug0}}
    \subfloat[Aug1]{{\includegraphics[width=4.5cm]{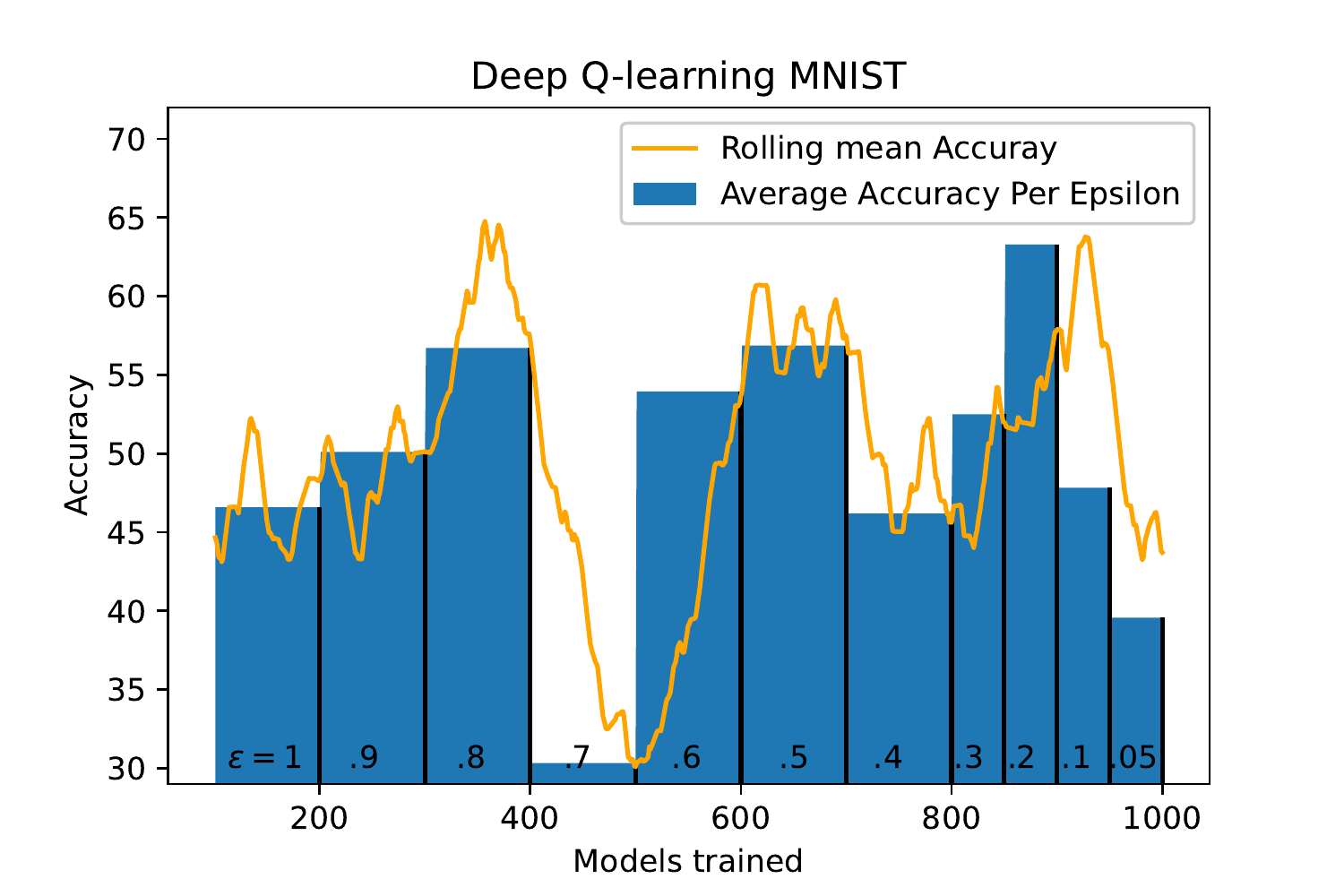} }\label{fig: DQN_MNIST_Aug1}}
    \subfloat[Aug2]{{\includegraphics[width=4.5cm]{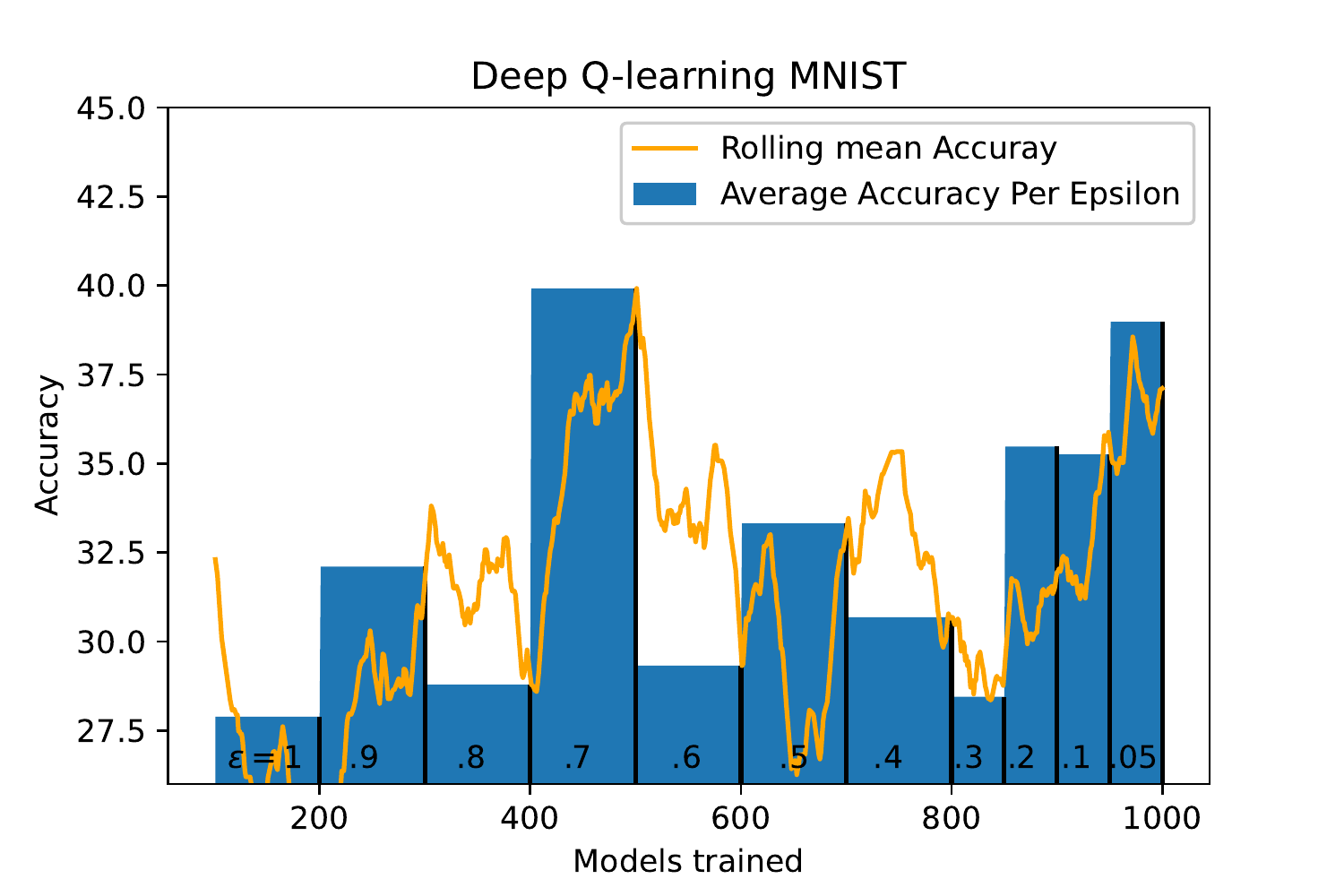} }\label{fig: DQN_MNIST_Aug2}}\\
    \subfloat[Aug3]{{\includegraphics[width=4.5cm]{Images/GNAS_FCNN_MNIST_Aug3_final-eps-converted-to} }\label{fig: DQN_MNIST_Aug3}}
    \subfloat[Aug4]{{\includegraphics[width=4.5cm]{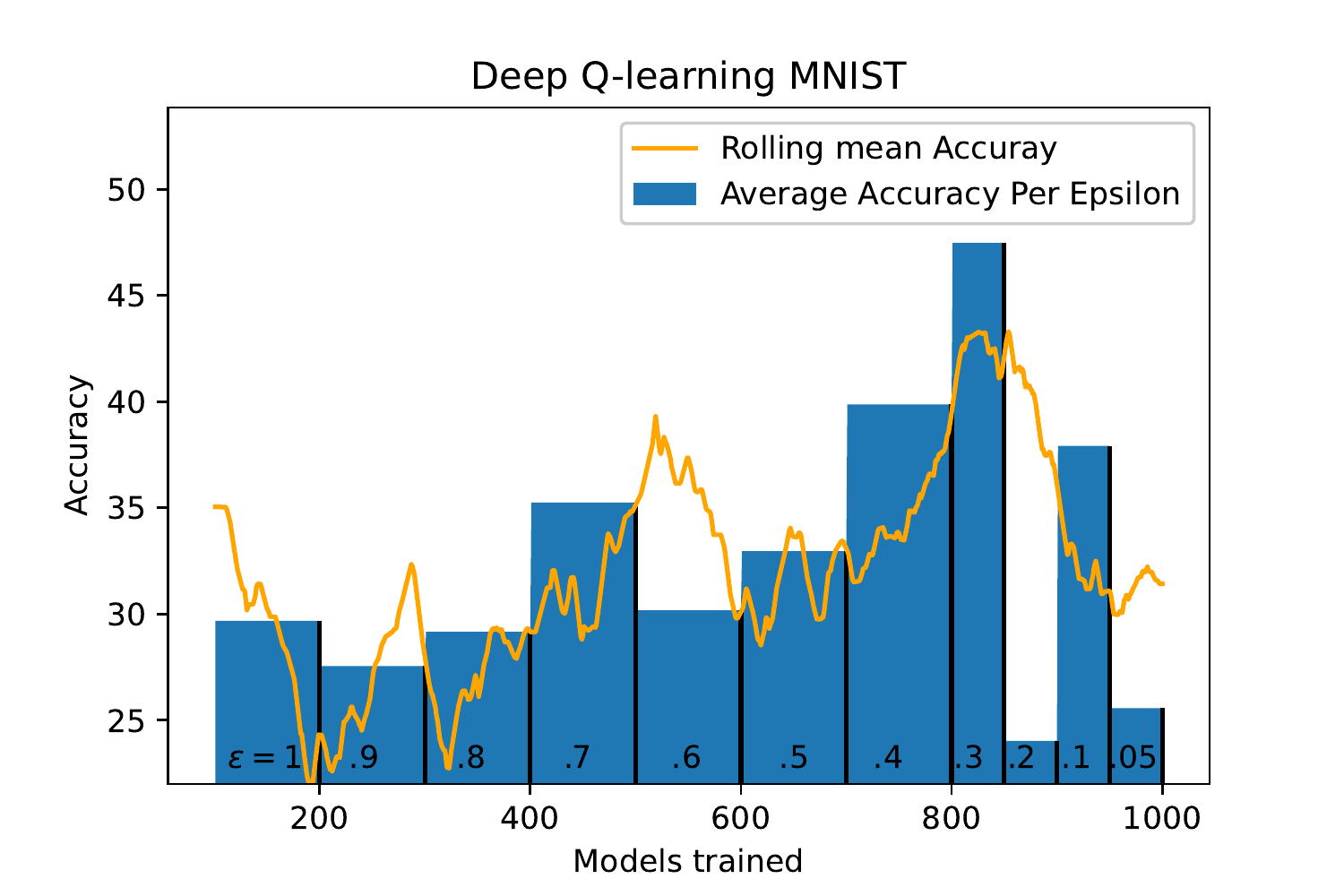} }\label{fig: DQN_MNIST_Aug4}}
    \subfloat[Aug5]{{\includegraphics[width=4.5cm]{Images/GNAS_FCNN_MNIST_Aug5_final-eps-converted-to} }\label{fig: DQN_MNIST_Aug5}}
    \caption{Deep Q-learning performance for datasets with various combinations of transformations from G-MNIST. The Q-learning agent samples group symmetries, which is used for creating equivariant network. A function of the accuracy is used as the reward function. The Q-learning agent shows improvement in accuracy as the $\epsilon$ decreases.}%
    \label{fig: DQN_MNIST}
\end{figure*}

\begin{figure*}%
    \centering
    \subfloat[Aug0]{{\includegraphics[width=4.5cm]{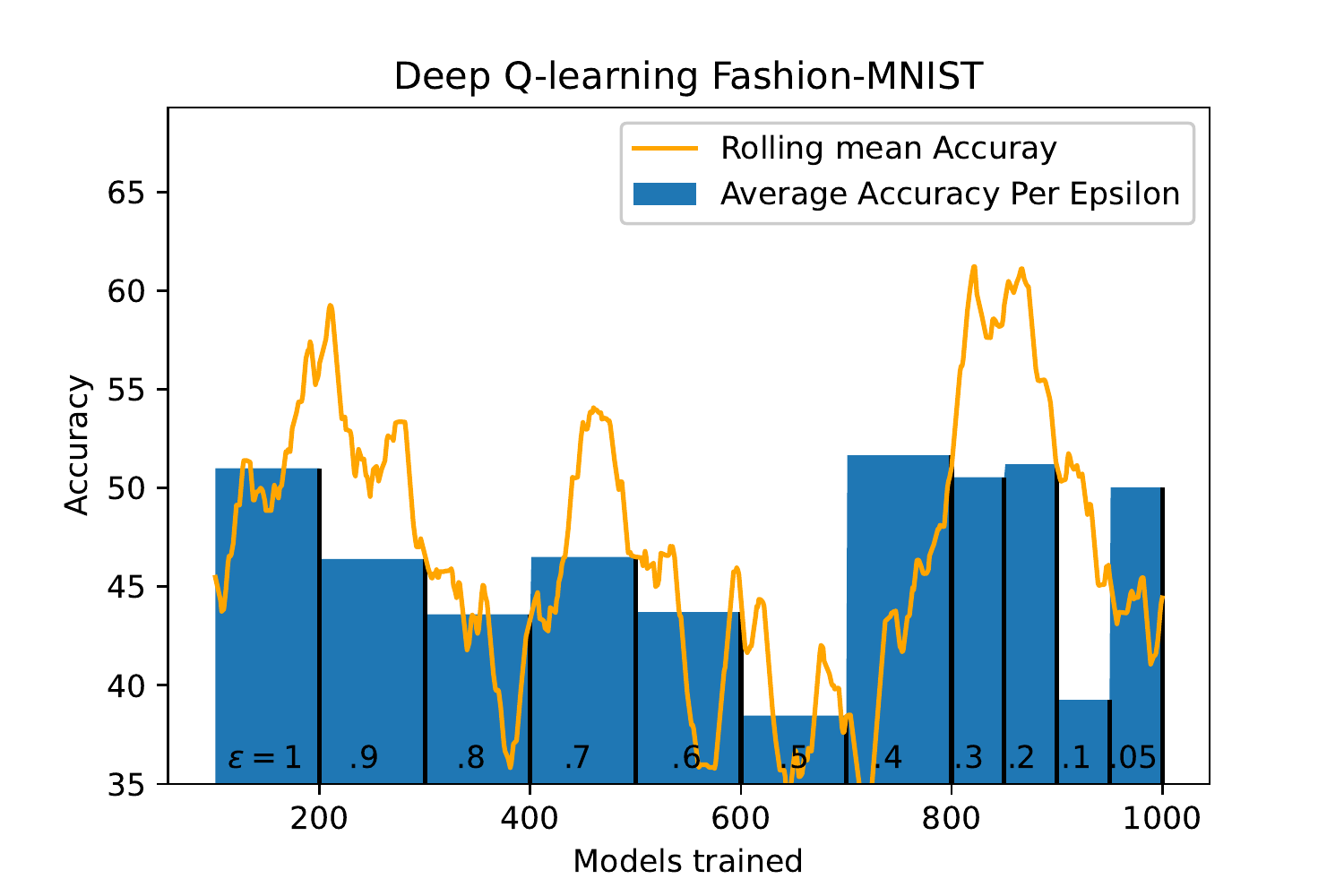} }\label{fig: DQN_FashionMNIST_Aug0}}
    \subfloat[Aug1]{{\includegraphics[width=4.5cm]{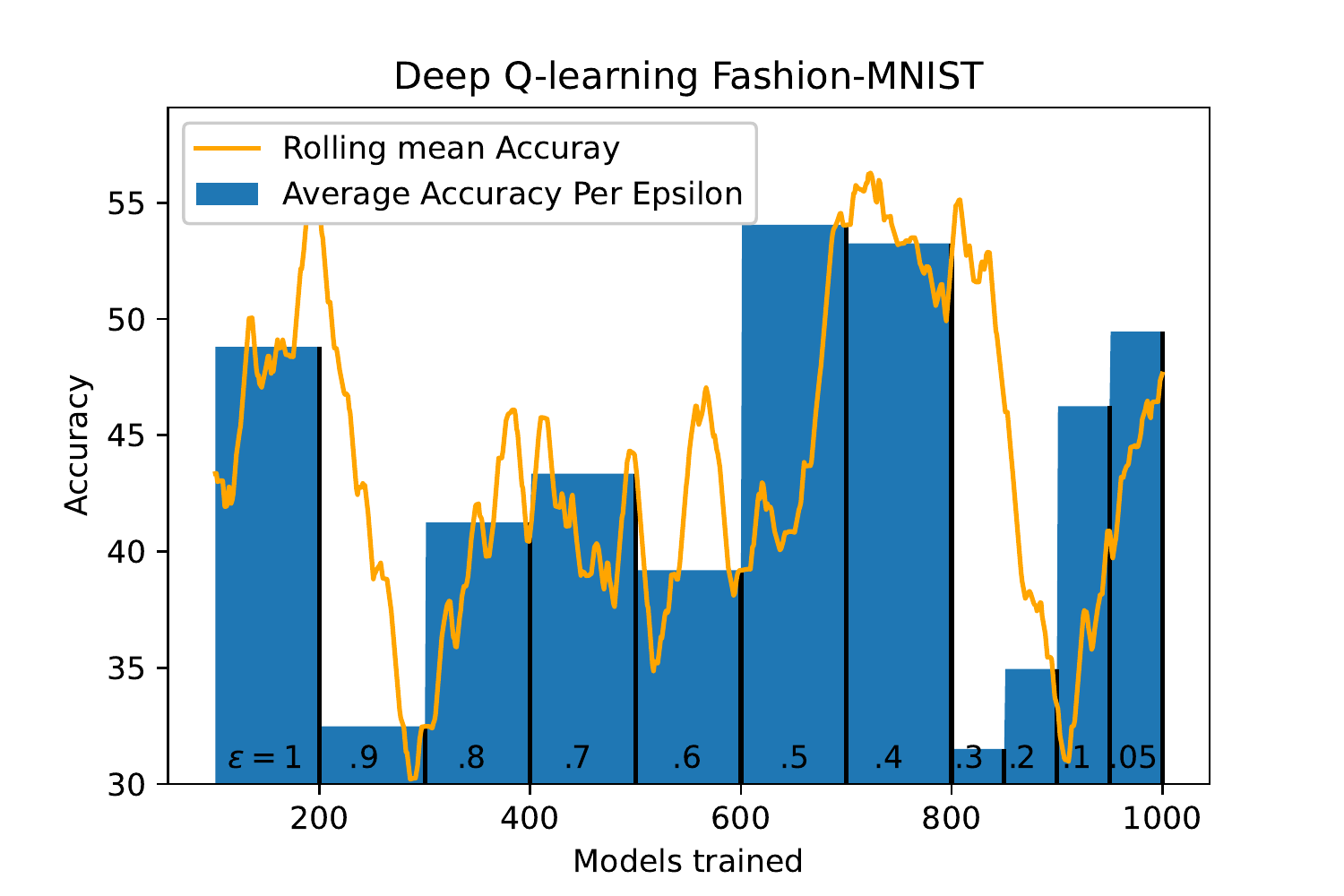} }\label{fig: DQN_FashionMNIST_Aug1}}
    \subfloat[Aug2]{{\includegraphics[width=4.5cm]{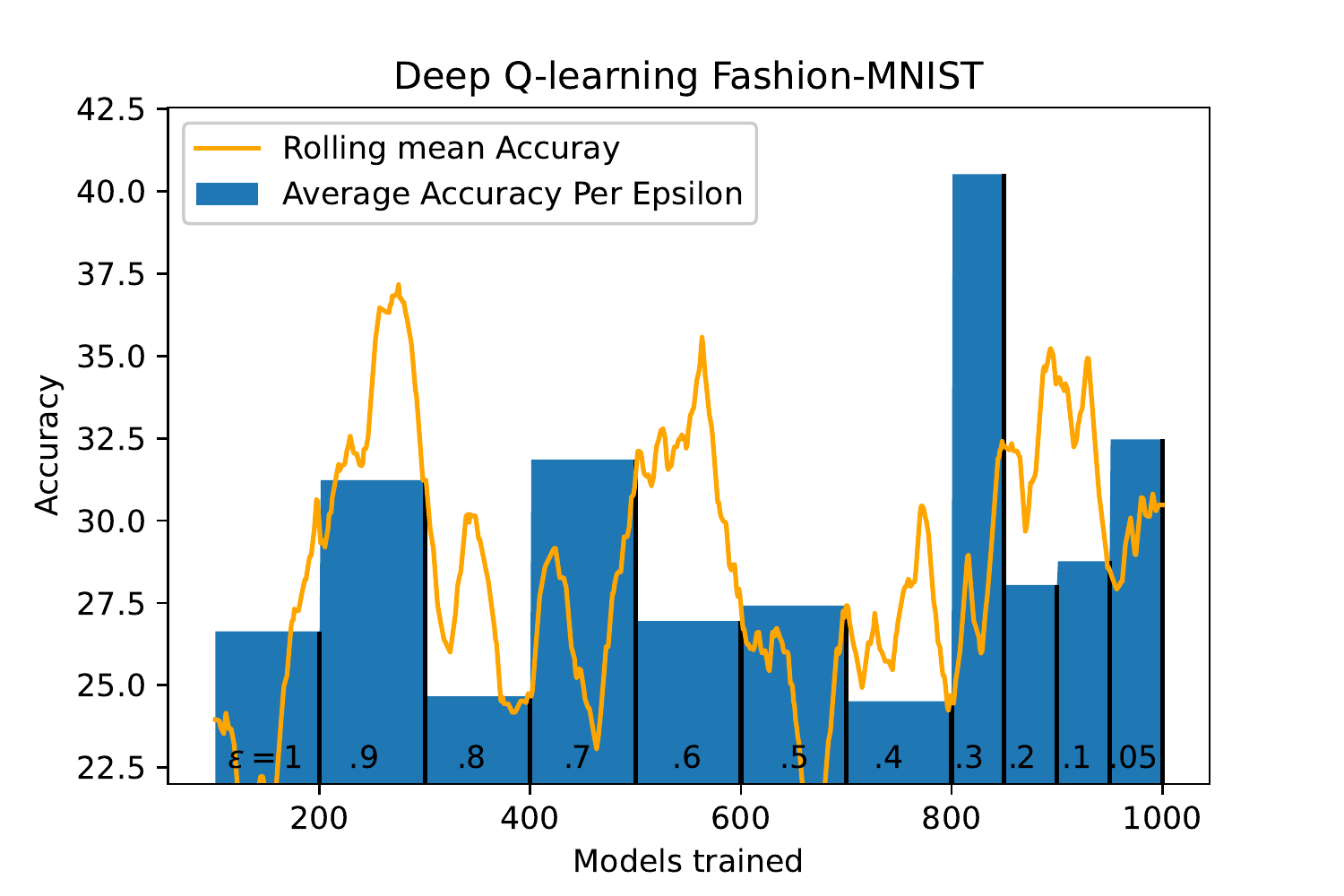} }\label{fig: DQN_FashionMNIST_Aug2}}\\
    \subfloat[Aug3]{{\includegraphics[width=4.5cm]{Images/GNAS_FCNN_FashionMNIST_Aug3_final-eps-converted-to} }\label{fig: DQN_FashionMNIST_Aug3}}
    \subfloat[Aug4]{{\includegraphics[width=4.5cm]{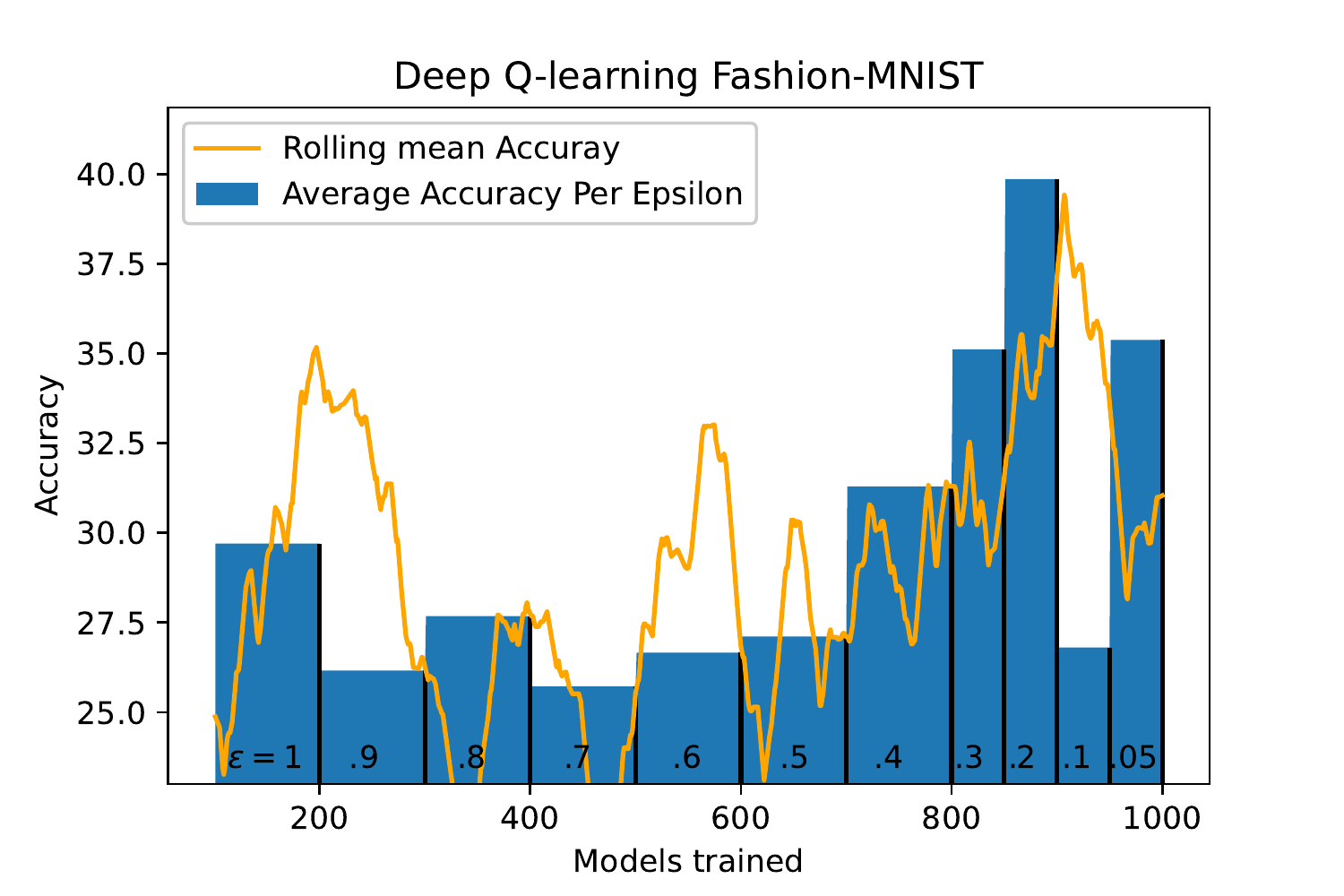} }\label{fig: DQN_FashionMNIST_Aug4}}
    \subfloat[Aug5]{{\includegraphics[width=4.5cm]{Images/GNAS_FCNN_FashionMNIST_Aug5_final-eps-converted-to} }\label{fig: DQN_FashionMNIST_Aug5}}
    \caption{Deep Q-learning performance for datasets with various combinations of transformations from G-Fashion-MNIST. The Q-learning agent samples group symmetries, which is used for creating equivariant network. A function of the accuracy is used as the reward function. The Q-learning agent shows improvement in accuracy as $\epsilon$ decreases.}%
    \label{fig: DQN_FashionMNIST}
\end{figure*}

\begin{figure*}%
    \centering
    \subfloat[CIFAR10]{{\includegraphics[width=4.5cm]{Images/GNAS_GCNN_CIFAR10_summary-eps-converted-to} }\label{fig: DQN_CIFAR10}}
    \subfloat[SVHN]{{\includegraphics[width=4.5cm]{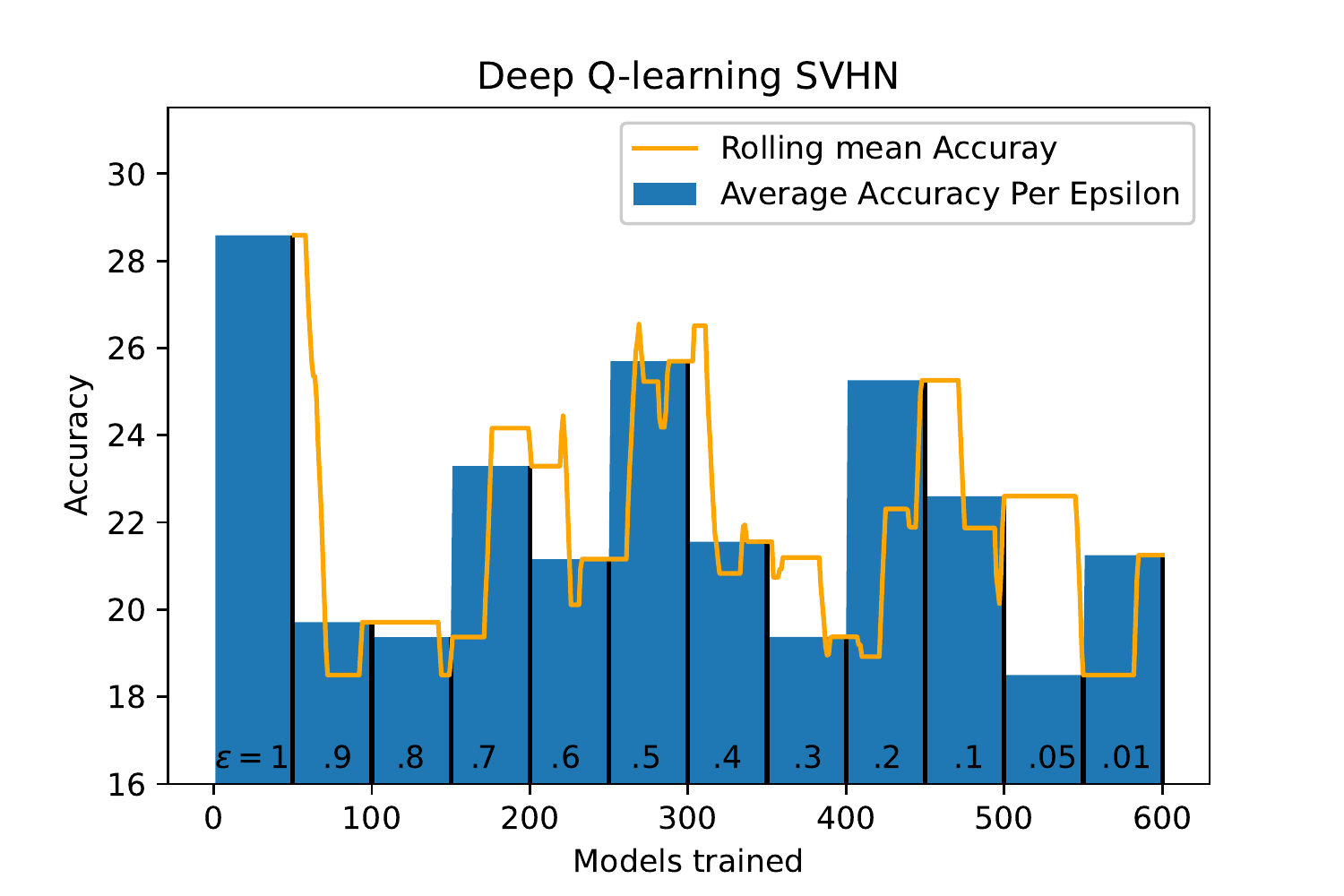} }\label{fig: DQN_SVHN}}
    \subfloat[RotMNIST]{{\includegraphics[width=4.5cm]{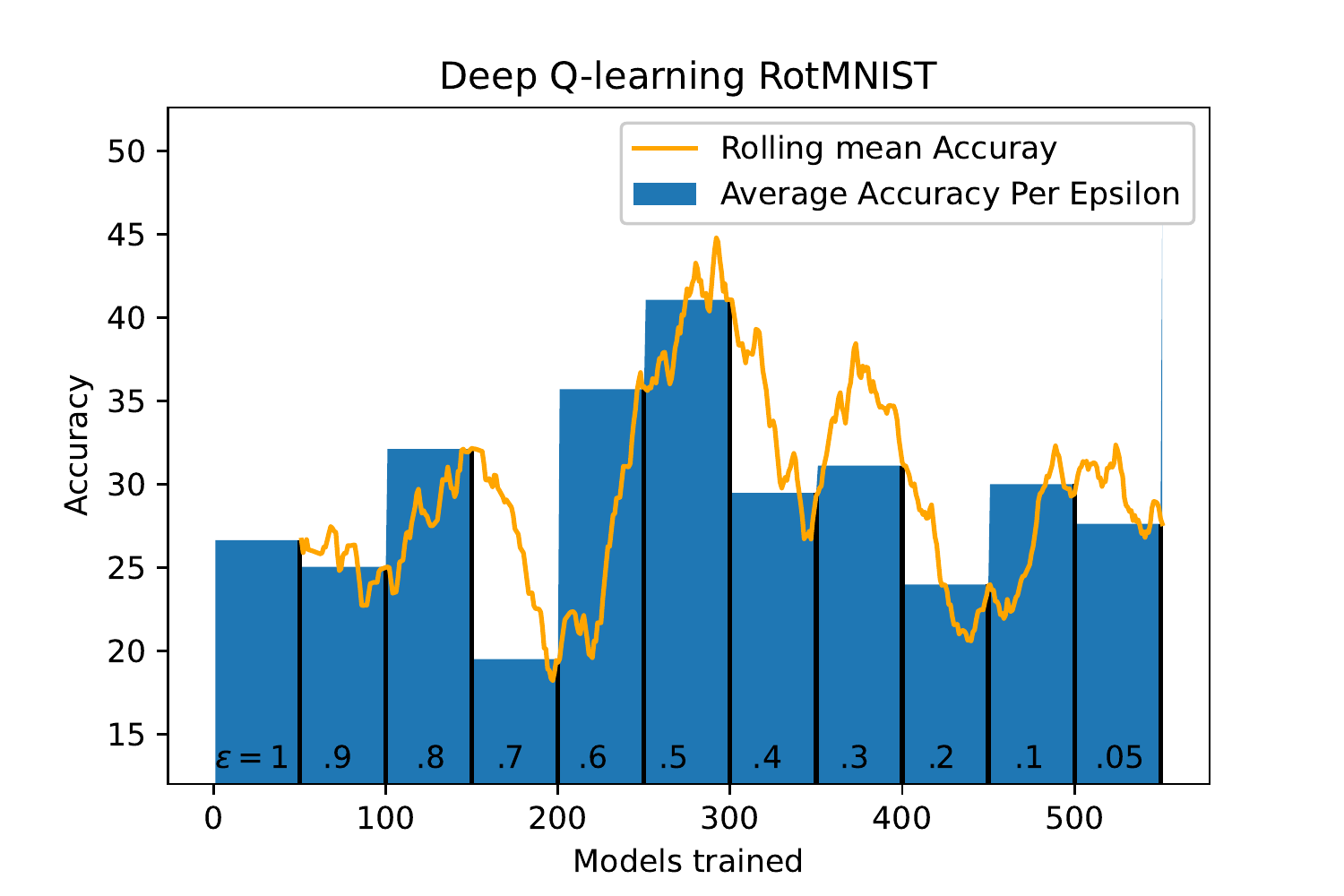} }\label{fig: DQN_RotMNIST}}\\
    \subfloat[ASL]{{\includegraphics[width=4.5cm]{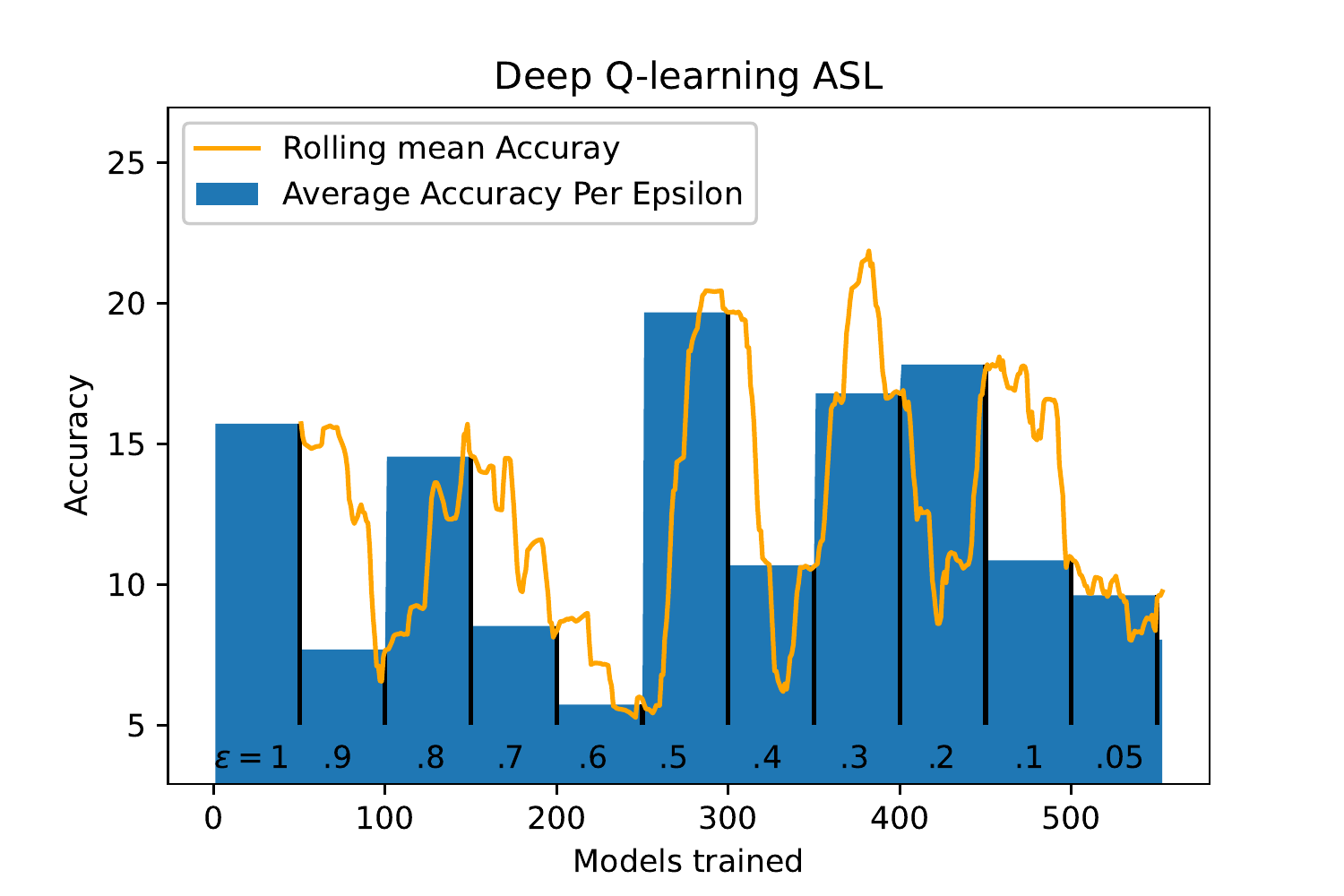} }\label{fig: DQN_ASL}}
    \subfloat[EMNIST]{{\includegraphics[width=4.5cm]{Images/GNAS_GCNN_EMNIST_summary-eps-converted-to} }\label{fig: DQN_EMNIST}}
    \subfloat[KMNIST]{{\includegraphics[width=4.5cm]{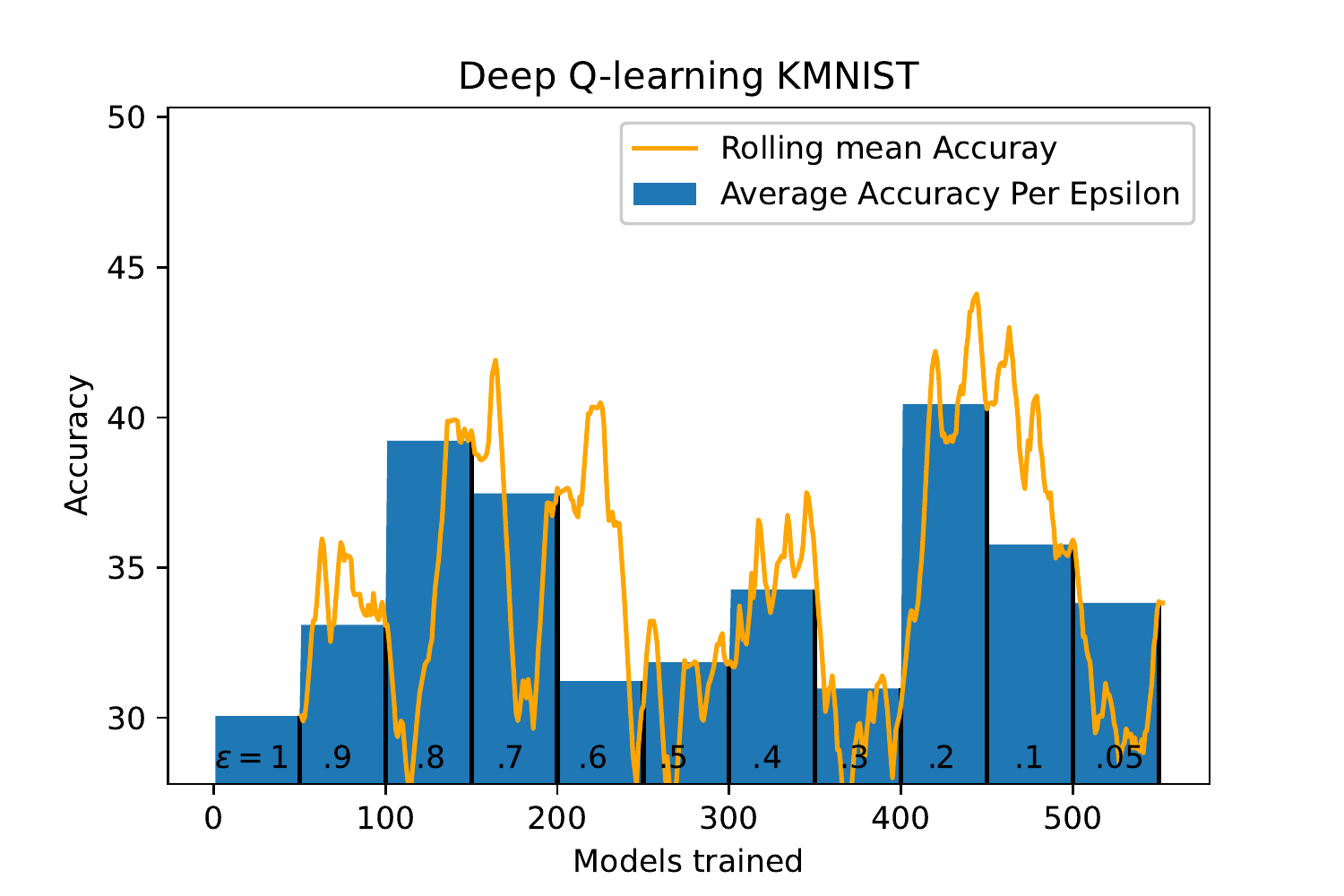} }\label{fig: DQN_KMNIST}}
    \caption{Deep Q-learning performance on several image datasets where the child network is a group convolutional neural network with groups symmetries being any combination of rotation, horizontal symmetry, and vertical symmetry. In addition to group symmetries for network equivariance the Q-learning agent chooses from one or more of six augmentations, namely, random rotation, horizontal flip, vertical flip, translate, scale, and shear. A function of the accuracy is used as the reward function. The Q-learning agent shows improvement in accuracy as $\epsilon$ decreases. Across all the datasets above, we find that equivariance dominates the high-performing networks chosen by the Q-learning over augmentations.}%
    \label{fig: DQN_GCNN}
\end{figure*}
\section{Proofs}\label{sec: proofs}
\begin{proof}[Proof to Thm.~\ref{thm: semidirect_products}]\label{proof: semidirect_products}
($\Rightarrow$) First we show that $\phi$ is equivariant to $G_1$ under $(\mu^{G_1}, \bar{\mu}^{G_1})$ and $G_2$ under $(\mu^{G_2}, \bar{\mu}^{G_2})$ implies $\phi$ is equivariant to $G_1 \rtimes_{\alpha} G_2$ under $(\Gamma, \bar{\Gamma})$. Thus, we want to show that for all $g_1 \in G_1, g_2 \in G_2$, and $x \in \mathcal{X}$, $\phi(\Gamma_{(g_1,g_2)}(x)) = \bar{\Gamma}_{(g_1,g_2)}(\phi(x)),$
assuming $\phi(\mu_{g_i}^{G_i}(x)) = \bar{\mu_{g_i}^{G_i}}(\phi(x))$ for $i \in \{1,2\}$.

Since $(g_1,e)(e,g_2) = (g_1\alpha_{e}(e),eg_2) = (g_1,g_2)$ from the definition of multiplication in semidirect product in Def.~\ref{def: semidirect_product}, and $\Gamma$ is a homomorphism, we have
\begin{equation}\label{eqn:proofthm11}
    \phi(\Gamma_{(g_1,g_2)}(x)) = \phi(\Gamma_{(g_1,e)}(\Gamma_{(e,g_2)}(x))),
\end{equation}
Continuing from (\ref{eqn:proofthm11}) and noting the assumptions in the theorem from (\ref{eqn: thm_1_1}), (\ref{eqn: thm_1_2}), (\ref{eqn: thm_1_3}), (\ref{eqn: thm_1_4}) gives us
\begin{align}
    \phi(\Gamma_{(g_1,e)}(\Gamma_{(e,g_2)}(x))) &= \phi(\mu_{g_1}^{G_1}(\mu_{g_2}^{G_2}(x))) \stackrel{(a)}{=} \bar{\Gamma}_{(g_1,e)}(\phi(\mu_{g_2}^{G_2}(x))) \nonumber\\
    &\stackrel{(b)}{=} \bar{\Gamma}_{(g_1,e)}\bar{\Gamma}_{(e,g_2)}(\phi(x)) \nonumber\\
    &\stackrel{(c)}{=} \bar{\Gamma}_{(g_1,g_2)}(\phi(x)) \nonumber
\end{align}
where (a) holds since $\phi(\mu_{g_1}^{G_1}(\mu_{g_2}^{G_2}(x))) = \bar{\mu}_{g_1}^{G_1}(\phi(\mu_{g_2}^{G_2}(x)))$ because $\phi(\mu_{g_1}^{G_1}(x)) = \bar{\mu}_{g_1}^{G_1}(\phi(x))$, and follows from the assumption $\bar{\Gamma}_{(g_1,e)}(y) = \bar{\mu}_{g_1}^{G_i}(y)$, (b) follows from the equivariance property $\phi(\mu_{g_2}^{G_2}(x)) = \bar{\mu}_{g_2}^{G_2}(\phi(x))$ and from the assumption in the theorem $\bar{\mu}_{g_2}^{G_2}(y) = \bar{\Gamma}_{(e,g_2)}(y)$, and (c) follows since $\bar{\Gamma}$ is a homomorphism and $(g_1,g_2) = (g_1,e)(e,g_2)$. Thus, we have $\phi(\Gamma_{(g_1,g_2)}(x)) = \bar{\Gamma}_{(g_1,g_2)}(\phi(x))$, for all $x \in \mathcal{X}, g_1 \in G_1, g_2 \in G_2$.

Now we prove the other direction of the theorem. Given
\begin{align}\label{eqn: proof_thm_1_7}
    \phi(\Gamma_{(g_1,g_2)}(x)) = \bar{\Gamma}_{(g_1,g_2)}(\phi(x)),
\end{align}
for all $x \in \mathcal{X}, g_1 \in G_1, g_2 \in G_2$, we want to show
$
\phi(\mu_{g_i}^{G_i}(x)) = \bar{\mu}_{g_i}^{G_i}(\phi(x)),
$
for all $x \in \mathcal{X}$, $g_i \in G_i$ for $i \in \{1,2\}$.
For $i=1$, setting $g_2=e, x \in \mathcal{X}$ in \eqref{eqn: proof_thm_1_7}, we have $\phi(\Gamma_{(g_1,e)}(x)) = \bar{\Gamma}_{(g_1,e)}(\phi(x))$, which implies $\phi(\mu_{g_1}^{G_1}(x)) = \bar{\mu}^{G_1}_{g_1}(\phi(x))$ from the assumption $\Gamma_{(g_1,e)}(x) = \mu_{g_1}^{G_1}(x)$, $\bar{\Gamma}_{(g_1,e)}(x) = \bar{\mu}_{g_1}^{G_1}(x)$.
The proof for $i=2$ follows the same procedure and is omitted.
\end{proof}
\begin{proof}[Proof to Claim.~\ref{claim: correctness}]
For simplicity, take $G = G_1 \rtimes_{\alpha} G_2$, i.e. $G_{array} = [G_1,G_2]$, which can be easily extended to general semidirect products. 
From Thm.~\ref{thm: ravanbaksh}, if for each $i \in \mathbf{L}$, if $\Gamma_g(i)$ belongs to the same orbit for all $g \in G$, then $\mathbf{W}$ is $G$-equivariant.

Taking any index $i$, we want to show that all the indices in the set
\begin{align*}
    \Gamma_{G}(i) = \{\Gamma_{(g_x,g_y)}(i): g_x \in G_1, g_y \in G_2\}
\end{align*}
belong to the same orbit. 

First we note that all elements in $Q$ at any point in time in Alg.~\ref{alg: fast-equivariant-network-construction} belong to the same orbit. This is easy to see since for any index $i$ popped out of Q, the visited nodes added to $Q$ are of the form $\Gamma_g(i)$ for some $g \in G$, which belong to the same orbit as $i$. 

Now we show that starting with a non-empty $Q$ consisting of an index $i$, $Q$ becomes empty only when all indices in $\Gamma_{G}(i)$, the orbit in which $i$ belongs, has been visited and assigned the same orbit number, denoted by $C$ in Alg.~\ref{alg: fast-equivariant-network-construction}.
Without loss of generality, say, $i$ is the first index of an orbit popped out of $Q$. 
Then the indices $\Gamma_{g}(i)$ are added to $Q$ and assigned the same orbit for $g\in G_1\cup G_2$. So, thus far we have added $\Gamma_{G_1}(i)\cup \Gamma_{G_2}(i)$ to the orbit and $Q$, where
\begin{align*}
    \Gamma_{G_1}(i) &= \{\Gamma_{g_x,e}(i): g_x \in G_1\}\\
    \Gamma_{G_2}(i) &= \{\Gamma_{e,g_y}(i): g_y \in G_2\}.
\end{align*}
Now, when $\Gamma_{(g_0,e)}(i)$ is popped out of $Q$ for some $g_0 \in G_1$, the indices $\Gamma_{(g_0,g_y)}(i)$ for all $g_y \in G_2$ get added to the orbit. Similarly, when $\Gamma_{(e,g_0)}(i)$ is popped out of $Q$ for some $g_0 \in G_2$, the indices $\Gamma_{(g_x,g_0)}$ for all $g_x \in G_1$ get added to the orbit. Hence, all the indices in the current orbit are visited and assigned the same orbit number.
Further, we only add indices $i$ to $Q$ that were previously not visited, hence $Q$ gets emptied when all the indices in the orbit are visited.

We also know that orbits of a group are disjoint, hence, when $Q$ gets emptied and new indices $i$ with $V[i] < 0$ in $L$ are added to $Q$ with a new orbit, these indices would not overlap with any of the indices already added to other orbits. 
\end{proof}

\section{Deep Q-learning details}\label{sec: fes_details}
\paragraph{Search space}
For Sec.~\ref{subsec: exp_Group_FC_Nets} where we work with fully connected layers, we represent each architecture by a discrete state vector $S$ of length $g_{size}$ and $S[i]$ takes binary values with the network equivariant to the $i$th group if $S[i]=1$, otherwise not.

For Sec.~\ref{subsec: exp_Conv_Nets} where we work with convolutional neural networks, the state $S$ is a vector of length $10$ with the first $3$ binary entries indicating indicating the presence of the groups $P4, H2, V2$ from Tab.~\ref{tab: equivariances_augmentations_GCNN} in that order. The next 6 entries indicate the presence of the 6 augmentations in Tab.~\ref{tab: equivariances_augmentations_GCNN} in that order. The last binary entry is to choose between a large and a small model. We ensure that these networks with small size and different equivariances have the same number of trainable parameters by adjusting the channel numbers in them. For the large network size, we simply multiply the number of channels by a factor of 1.5. This is to check whether a DQN has any particular preference over network sizes.

\paragraph{Action space}
For MLPs and GCNNs, each action is a one-hot vector of length $g_{size}$ and 10 respectively, where if the non-zero index is $i$, then the binary value of $S[i]$ is toggled.

\paragraph{Evaluation strategy}
Since we use the validation accuracy of a trained network as a reward, at each step of the search algorithm, the Q-learning agent trains a network that is equivariant to groups indicated by the state vector by computing the parameter sharing scheme using Alg.~\ref{alg: fast-equivariant-network-construction} for MLPs and directly using precomputed parameter sharing indices of relevant groups from \cite{CohenW2016a} for GCNNs.
However, training a network over the entire dataset at each step is computationally expensive. Hence we use several tricks to speed up our search, inspired by recent works in neural architecture search.

For computing the validation accuracy, we choose small subsets of sizes $4000$ and $1000$ of the dataset as our training and testing sets, similar to \cite{CubukZMVL2019}. Further, to ensure that a model with the same configuration of groups and sizes is not trained twice in the process of search, we store and reuse validation accuracies, as in \cite{BakerGNR2017}. We train each network for 4 epochs and use maximum validation accuracy obtained from after each epoch for computing the reward.
\paragraph{Reward function}
The agent receives a reward proportional to the validation accuracy of the network corresponding to the next state, which it uses to compute $Q$-values in the state-action space. More precisely, we use the reward function
\begin{align}\label{eqn: fes_reward}
    R(s,a) = x\exp{\{|x|\}},
\end{align}
where $x=acc(s,a)-acc_0$, $acc(s,a)$ is the validation accuracy of the network obtained by taking action $a$ at state $s$, $acc_0$ is the accuracy of a baseline MLP without any group equivariance, and $|\cdot|$ is the absolute value. This reward function amplifies both reward for accuracies above baseline accuracies and penalties for accuracies below baseline performance.
\paragraph{Training details} 
For training our deep Q-learning agent, we use the discount factor $\gamma = 0.5$, which allows learning from the rewards obtained from the current state as well as the expected rewards that can be obtained in the future. We use the $\epsilon$-greedy strategy with starting value of $\epsilon=1.0$ and train a fixed number of models per value of $\epsilon$, as shown in Tab.~\ref{tab: epsilon-models-trained}. We use a fully connected network with dimension $12 \times 400 \times 400 \times 400 \times 12$ to compute $Q$-values in our deep Q-learning algorithm. We store the transitions in a replay memory and use a batch size of $512$ and $128$ for training the agent for MLPs and GCNNs respectively to take into account that the number of possible states we considered for GCNNs were relatively fewer.
\begin{table*}
\caption{A fixed number of models were trained per value of $\epsilon$ in our deep Q-learning search for group equivariant MLPs (GEMLPs) and group convolutional neural networks (GCNN).}
    \centering
    \begin{tabular}{|p{2.9cm} p{0.45cm} p{0.45cm} p{0.45cm} p{0.45cm} p{0.45cm} p{0.45cm} p{0.45cm} p{0.45cm} p{0.45cm} p{0.45cm} p{0.45cm} p{0.45cm}|}
    \hline

        \textbf{$\epsilon$ value} & 1.0 & 0.9 & 0.8 & 0.7 & 0.6 & 0.5 & 0.4 & 0.3 & 0.2 & 0.1 & 0.05 & 0.01 \\
        \hline
        \textbf{{\# models} (GEMLP)} & 200 & 100 & 100 & 100 & 100 & 100 & 100 & 50 & 50 & 50 & 50 & --- \\
        \hline
        \textbf{{\# models} (GCNN)} & 50 & 50 & 50 & 50 & 50 & 50 & 50 & 50 & 50 & 50& 50 & 50 \\
        \hline
    \end{tabular}
    \label{tab: epsilon-models-trained}
\end{table*}

\end{document}